\documentclass{article} 
\usepackage{times}  
\usepackage{url}  
\usepackage{graphicx} 
\usepackage{xspace}
\usepackage{amssymb,amsthm,amsmath}
\usepackage{misc}
\usepackage{color}
\usepackage{centernot} 
\frenchspacing  

\setlength\topmargin{-0.25in} \setlength\oddsidemargin{-0.25in}
\setlength\textheight{9.0in} \setlength\textwidth{7.0in}
\setlength\columnsep{0.375in} \newlength\titlebox \setlength\titlebox{2.25in}
\setlength\headheight{0pt}  \setlength\headsep{0pt}
\flushbottom \twocolumn \sloppy

\setlength{\pdfpagewidth}{8.5in}  
\setlength{\pdfpageheight}{11in}  



\newcommand{\Aa}{{\cal{A}}}

\newcommand{\Ff}{{\cal{F}}}
\newcommand{\Ii}{{\cal{I}}}
\newcommand{\Jj}{{\cal{J}}}
\newcommand{\Kk}{{\cal{K}}}
\newcommand{\Ll}{{\cal{L}}}
\newcommand{\Mm}{{\cal{M}}}

\newcommand{\Oo}{{\cal{O}}}

\newcommand{\Qq}{{\cal{Q}}}

\newcommand{\notmodels}{\centernot\models}
\newcommand{\fentails}{\models_{\mathsf{fin}}}
\newcommand{\notfentails}{\notmodels_{\!\!\mathsf{fin}}}

\newcommand{\ALCOI}{\ensuremath{{\cal{ALC\hspace{-0.25ex}O\hspace{-0.06ex}I}}}\xspace}
\renewcommand{\ALCO}{\ensuremath{{\cal{ALC\hspace{-0.25ex}O}}}\xspace}
\renewcommand{\ALCIF}{\ensuremath{{\cal{ALCI\hspace{-0.1ex}F}}}\xspace}
\renewcommand{\ALCHOIb}{\ensuremath{{\cal{ALCHOI}\textit{b}}}\xspace}

\newcommand{\SOF}{\ensuremath{{\cal{S\hspace{-0.2ex}O\hspace{-0.24ex}F}}}\xspace}
\newcommand{\SOI}{\ensuremath{{\cal{S\hspace{-0.2ex}O\hspace{-0.06ex}I}}}\xspace}
\newcommand{\SIF}{\ensuremath{{\cal{S\hspace{-0.06ex}I\hspace{-0.1ex}F}}}\xspace}

\newcommand{\SOIF}{\ensuremath{{\cal{SOIF}}}\xspace}

\newcommand{\K}{\ensuremath{{\cal{K}}}\xspace}

\newcommand{\var}{\textit{var}}
\newcommand{\Nomi}{\mn{Nom}}
\newcommand{\Ind}{\mn{Ind}}
\newcommand{\Rol}{\mn{Rol}}

\newcommand{\Tp}{\mn{Tp}}
\newcommand{\CN}{\mn{CN}}
\newcommand{\tp}{\mn{tp}}
\newcommand{\tra}{\textsc{tr}}
\newcommand{\ntra}{\textsc{nt}}
\newcommand{\twoexp}{\ensuremath{\textsc{2ExpTime}}\xspace}

\newtheorem{definition}{Definition}
\newtheorem{theorem}{Theorem}

\newtheorem{corollary}{Corollary}
\newtheorem{lemma}{Lemma}

\begin{document}
\title{Finite Query Answering in Expressive Description Logics with Transitive Roles}
\author{ Tomasz Gogacz \\ University of Warsaw, Poland\\
t.gogacz@mimuw.edu.pl
\and
Yazm\'in Ib\'a\~nez-Garc\'ia \\
TU Wien, Austria \\ yazmin.garcia@tuwien.ac.at \\
\and Filip Murlak \\ University of Warsaw, Poland \\
 fmurlak@mimuw.edu.pl
}
\maketitle
\begin{abstract}
We study the problem of \emph{finite} ontology mediated query answering
(FOMQA), the variant of OMQA where the represented world is assumed to
be finite, and thus only finite models of the ontology are considered.
We adopt the most typical setting with unions of
conjunctive queries and ontologies expressed in description logics
(DLs). The study of FOMQA is relevant in settings that are not 
finitely controllable. 
This is the case not only for DLs without the finite model property,
but also for those allowing transitive role declarations. When
transitive roles are allowed, evaluating queries is challenging: FOMQA
is undecidable for  \SHOIF and only known to be decidable for the Horn
fragment of \ALCIF. We show decidability of FOMQA for three proper
fragments of \SOIF: \SOI, \SOF, and \SIF. Our approach is to
characterise models relevant for deciding finite query
entailment. Relying on a certain regularity of these models, we
develop automata-based decision procedures with optimal complexity bounds.
\end{abstract}

\section{Introduction}
Evaluating queries in the presence of background knowledge has been
extensively studied in several communities. A particularly prominent
take on this problem is ontology mediated query answering (OMQA) where
background knowledge represented by an ontology is leveraged
to infer more complete answers to queries~\cite{BienvenuO15}. A widely accepted family of ontology languages with varying expressive power is offered by Description Logics (DLs)~\cite{Baader2010DLH}, while the most commonly studied query language is that of (unions of) conjunctive queries.



Often, the intended models of the ontology are finite and this additional assumption allows to infer more properties: \emph{ finite ontology mediated query answering (FOMQA)} is the variant of OMQA restricted to finite models. For some logics the finite variant and the unrestricted variant of the problem coincide; we then say that OMQA is \emph{finitely controllable}. Studying FOMQA is interesting in settings lacking finite controllability. This is the case not only for DLs lacking the \emph{finite model property} (e.g., DLs allowing both inverse roles and number restrictions), but also for logics allowing \emph{transitive role declarations}.  Indeed, it has been recently proved that FOMQA is undecidable for \SHOIF ontologies~\cite{Rudolph16}, whereas  the only fragment known to be decidable is Horn-\ALCIF~\cite{GarciaLS14}; more expressive fragments of \SHOIF are entirely uncharted. In this paper, we establish decidability for three of them: \SOI, \SOF, and \SIF.

OMQA is closely related to \emph{query answering under integrity constraints} in database theory: given a finite database instance and a set of constraints, 
determine answers to a query that are certain to hold over any extension of the given instance that satisfies the constraints. Among important classes of constraints are inclusion dependencies (IDs) and functional dependencies (FDs).  
This problem, often called \emph{open-world query answering (OWQA)}, has also been studied in the variant considering only finite extensions of the given database instance (finite OWQA), which is directly relevant for our work. 
OWQA over IDs is known to be finitely controllable ~\cite{JohnsonK84,Rosati11}. Rosati's techniques were extended to show finite controllability for the guarded fragment of first order logic~\cite{BaranyGO13}. Under combinations of IDs and FDs, OWQA is undecidable, both unrestricted and finite, but multiple decidable fragments have been isolated. For instance, for \emph{non-conflicting} IDs and  FDs~\cite{CaliLR03}, unrestricted OWQA is decidable. However, finite OWQA is undecidable already for non-conflicting IDs and keys, which are less expressive than FDs~\cite{Rosati11}. The work of \cite{AmarilliB15} investigates finite OWQA for unary IDs and  FDs over arbitrary signatures.

Combinations of unary IDs and unary FDs can be expressed in relatively simple DLs. This relationship and the techniques developed by~\cite{CosmadakisKV90} have been exploited in the study of finite satisfiability for simple DLs~\cite{Rosati08}. Indeed, finite satisfiability has been studied extensively~\cite{Calvanese96,LutzST05,Kazakov08,Pratt-Hartmann0_gc2_sat}, but  FOMQA has received limited attention in the DL community. The mentioned results on the guarded fragment give finite controllability for DLs up to \ALCHOIb. For non-finitely-controllable DLs, only the already mentioned results about \SHOIF and Horn-\ALCIF are known. For Datalog$^\pm$, finite controllability holds for  several fragments~\cite{GogaczM17,AmendolaLM17,BagetLMS11,CiviliR12}.
Finally, \cite{Pratt-Hartmann_datacomplexity} studies finite query answering for expressive fragments of first order logic and establishes undecidability for the two variable fragment with counting quantifiers ($\Cmc^2$), and decidability  for its guarded fragment, $\mathcal{GC}^2$. Decidability of $\mathcal{GC}^2$ has no direct implications for DLs with nominals or transitive roles, but it proves useful in the study of $\SIF$.  


\paragraph{Contributions.} We show that the combined complexity of FOMQA is in \twoexp for \SOI, \SOF and \SIF. These bounds are tight by existing matching lower bounds for OMQA for less expressive logics enjoying finite controllability~\cite{NgoOS16,Lutz08}. We present a direct construction of finite counter-models from arbitrary tree-like counter models for \ALCOI,  thus re-proving finite controllability. An extension of this construction builds finite counter-models from special tree-like models of \SOI and \SOF, which are guaranteed to exist whenever finite counter-models exist. This way finite query entailment reduces to entailment over a certain class of tree-like models recognisable by tree automata. For \SIF, we show that to some extent one can separate the reasoning about transitive and non-transitive (possibly functional) roles, and design a procedure that uses the decidability results for $\SOI$ and $\ALCIF$ as black boxes. The latter is derived from the work of~\cite{Pratt-Hartmann_datacomplexity}.



\section{Preliminaries}

The DL \SOIF extends the classical DL \ALC
with transitivity declarations on roles ($\Smc$), nominals ($\Omc$),
inverses ($\Ii$), and role functionality declarations ($\Fmc$)~\cite{Baader2010DLH}. We
assume a signature of countably infinite disjoint sets of
\emph{concept names} $\mn{N_C} = \{A_1, A_2, \dots \}$, \emph{role
names} $\mn{N_R}= \{r_1, r_2, \dots \} $ and \emph{individual names}
$\mn{N_I} = \{a_1, a_2, \dots \}$.  \emph{\SOIF-concepts $C,D$} are
defined by the grammar:
$$ C,D ::= \top \mid A \mid \neg C \mid C \sqcap D \mid \{a\}\mid \exists r. C\,, $$
where $r\in \mn{N_R} \cup \{r^- \mid r \in \mn{N_R}\}$ is a
\emph{role}.  Roles of the form $r^-$ are called \emph{inverse roles}. 
A \emph{\SOIF TBox} \Tmc is a finite set of \emph{concept inclusions
(CIs)}  $C\sqsubseteq D$, {\em transitivity declarations}
$\mn{Tr}(r)$, {\em functionality declarations} $\mn{Fn}(r)$, where
$C,D$ are \SOIF-concepts and $r$ is a role.
We assume that if the TBox contains  $\mn{Tr}(r)$, then
it contains neither $\mn{Fn}(r)$ nor $\mn{Fn}(r^-)$.
With an appropriate extension of the signature, each \SOIF TBox can be
transformed into an equivalent TBox whose each CI has one of the
following normal forms:
\[ \bigsqcap A_i \sqsubseteq \bigsqcup B_j\,, \quad A \equiv \{a\} \,,\quad 
   A \sqsubseteq \forall r. B \,,\quad A \sqsubseteq \exists r. B \,,\]
where empty conjunction is equivalent to $\top$ and empty disjunction
to $\bot$.  We also assume that for each concept name $A$ used in
$\Tmc$ there is a \emph{complementary} concept name $\bar A$
axiomatised with CIs $\top \sqsubseteq A \sqcup \bar A$ and 
$A \sqcap \bar A \sqsubseteq \bot$.

\SOI, \SOF and \SIF TBoxes are restrictions of \SOIF  TBoxes. \SOI
TBoxes  do not contain functionality declarations, whereas concept
inclusions in \SOF and \SIF do not contain inverse roles and nominals,
respectively.  Because the inverse of a transitive role is transitive anyway,
for \SOI, \SIF, and \SOIF  we shall assume that if $\mn{Tr}(r)$ is
present in the TBox, then so is $\mn{Tr}(r^-)$.  
 
An {\em ABox} is a finite set of {\em concept} and {\em role}
assertions of the form $A(a)$ and $r(a,b)$, where $A \in \mn{N_C}$, $r
\in \mn{N_R}$ and $\{a,b\} \subseteq \mn{N_I}$.  A \emph{knowledge
base (KB)} is a pair $\Kk=(\Tmc, \Amc)$. We write
$|\Kk|$ for $|\Amc|+|\Tmc|$. We use $\CN(\Kk)$,
$\mn{Rol}(\Kk)$, $\Nomi(\Kk)$, and $\Ind(\Kk)$ to denote, respectively,
the set of \emph{all concept names, roles, nominals, and individuals
occurring in $\Kk$}. We stress that if $r$ occurs in $\Kk$,  but
$r^-$ does not, then $r^-\notin \mn{Rol}(\Kk)$. 

A \emph{unary type} is a subset of $\CN(\Kk)$ that contains exactly
one of the concept names $A$, $\bar A$ for each $A \in \CN(\Kk)$. We
write $\Tp(\Kk)$ for the set of all unary types.

\medskip

The semantics is defined via interpretations $\Ii = (\Delta^\Ii,
\cdot^\Ii)$ with a non-empty \emph{domain} $\Delta^\Ii$ and an
\emph{interpretation function} $\cdot^\Ii$ assigning to each $A \in
\CN(\Kk)$ a set $A^\Imc \subseteq \Delta^\Imc$ and to each role name
$r$ with $r\in\Rol(\Kk)$ or $r^-\in\Rol(\Kk)$, a binary relation
$r^\Imc \subseteq \Delta^\Imc \times \Delta^\Imc$. The interpretation
of complex concepts and roles is defined as
usual~\cite{Baader2010DLH}.  We only consider interpretations
complying with the \emph{standard name assumption} in 
the sense that $a^\Ii = a$ for every $a \in \mn{N_I}$.

An interpretation $\Ii$ \emph{satisfies} $\alpha \in \Tmc \cup \Amc$,
written as $\Ii \models \alpha$, if the following holds: if $\alpha$
is a CI $C\sqsubseteq D$ then $C^\Ii\subseteq D^\Ii$, if $\alpha$ is a
transitivity declaration $\mn{Tr}(r)$ then $r^{\Ii}$is transitive, if
$\alpha$ is a functionality declaration $\mn{Fn}(r)$ then $r^{\Ii}$ is
a partial function, if $\alpha$ is an assertion $A(a)$ then $a \in
A^{\Ii}$, and if $\alpha$ is an assertion $r(a,b)$ then $(a,b) \in
r^\Ii$. 
 
Finally,  $\Ii$ is a \emph{model} of:  
a TBox \Tmc, denoted $\Ii\models\Tmc$, if $\Ii\models \alpha$ for all
$\alpha\in\Tmc$;   
an ABox \Amc, denoted $\Ii\models\Amc$, if $\Ii\models \alpha$ for all
$\alpha\in\Amc$;  
and a KB $\Kk$ if $\Ii \models \Tmc$ and $\Ii \models \Amc$.   

Interpretation $\Ii$ is a \emph{subinterpretation} of interpretation
$\Jj$, written as $\Ii \subseteq \Jj$, if $\Delta^\Ii \subseteq
\Delta^\Jj$, $A^\Ii \subseteq A^\Jj$, and $r^\Ii \subseteq r^\Jj$ for
all $A \in \CN(\Kk)$, $r\in \mathsf{Rol}(\Kk)$.  An interpretation
$\Ii$ is a subinterpretation of $\Jj$ \emph{induced} by $\Delta_0
\subseteq \Delta^\Jj$, written as $\Ii = \Jj \upharpoonright {\Delta_0}$,  if
$\Delta^\Ii = \Delta_0$, $A^\Ii = A^\Jj \cap \Delta_0$, and $r^\Ii =
r^\Jj \cap \Delta_0 \times \Delta_0$ for all $A \in \CN(\Kk)$, $r\in
\mathsf{Rol}(\Kk)$.  We write $\Jj \setminus X$ for the
subinterpretation of $\Jj$ induced by $\Delta^\Jj \setminus X$. 

Let $\Ii$ and $\Jj$ be interpretations of $\Kk$.  A
\emph{homomorphism} from $\Ii$ to $\Jj$, written as $h : \Ii \to \Jj$
is a function $h : \Delta^\Ii \to \Delta^\Jj$ that preserves roles,
concepts, and individual names; that is, $(h(d), h(d')) \in r^\Jj$
whenever $(d,d') \in r^\Ii$, $r \in \mathsf{Rol}(\Kk)$, $h(d) \in
A^\Jj$ whenever $d \in A^\Ii$, $A \in \CN(\Kk)$, and $h(a)= a$ for all
$a \in \Ind(\Kmc)$.  Note that $\Ii \subseteq \Jj$ iff the
identity mapping $\mathrm{id}$ is a homomorphism $\mathrm{id} : \Ii
\to \Jj$.

\medskip

Let \mn{N_V} be a countably infinite set of \emph{variables}. An
\emph{atom} is an expression of the form $A(x)$ or $r(x,y)$ with $A
\in\mn{N_C}$, $r \in \mn{N_R}$, and $x,y \in \mn{N_V}$, referred to as
\emph{concept atoms} and \emph{role atoms}, respectively. A
\emph{conjunctive query (CQ)} $Q$ is an existentially quantified
conjunction $q$ of atoms, $\exists x_1 \cdots  \exists x_n \, q\,.$
For simplicity we restrict it to be Boolean; that is, $\var(Q) = \{x_1,
\dots , x_n\}$. This is without loss of generality since the case of
non-Boolean CQs can be reduced to the case of Boolean queries; see
e.g. \cite{RudolphG10}.

A \emph{match for $Q$ in \Imc} is a total function $\pi:\var(Q)\to
\Delta^\Ii$ such that $\Ii,\pi\models q$ under the standard semantics
of first-order logic. An interpretation $\Ii$ satisfies $Q$, written
as $\Ii\models Q$ if there exists a match for $Q$ in $\Ii$. Note that
we do not consider queries with constants (i.e., individual names);
such queries can be viewed as non-boolean queries with a fixed
valuation of free variables, and thus are covered by the reduction to
the Boolean case.  We do consider \emph{unions of conjunctive queries
(UCQs)}, which are disjunctions of CQs. An interpretation $\Ii$
satisfies a UCQ $Q$ if it satisfies one of its disjuncts. It follows
immediately that UCQs are \emph{preserved under homomorphisms}; that
is, if $\Ii\models Q$ and there is a homomorphism from $\Ii$ to \Jmc,
then also $\Jmc\models Q$.

\medskip

A query $Q$ is \emph{entailed by a KB $\Kk$}, denoted as $\Kk\models
Q$, if every model of $\Kk$ satisfies $Q$. A model of $\Kk$ that does
not satisfy $Q$ is called a \emph{counter-model}. The \emph{query
entailment problem} asks whether a KB $\Kk$ entails a (U)CQ
$Q$. Moreover, this problem is equivalent to that of finding a
counter-model.  It is well known that the \emph{query answering
problem} can be reduced to query entailment.

In this paper, we address the problem of \emph{finite query
entailment}, which is a variant of query entailment where only finite
interpretations are considered: an interpretation \Imc
is \emph{finite} if $\Delta^\Imc$ is finite, and
a query $Q$ is \emph{finitely entailed by \Kmc}, denoted as $\Kmc
\models_{\sf{fin}} Q$, if every finite model of $\Kk$ satisfies $Q$.


\section{From tree-shaped to finite counter-models}
\label{sec:alcoi}



 



 

Let us fix an $\ALCOI$ knowledge base $\Kk$  and a union of conjunctive queries $Q$. Because we have nominals in our logic, we can assume without loss of generality that $\Kk$'s ABox does not contain role assertions. 

The construction of a finite counter-model begins from a tree-shaped counter-model. An interpretation $\Ii$ is \emph{tree-shaped} if the interpretation $\Ii\setminus \Nomi(\Kk)$ is a finite collection of trees of bounded degree, with elements of $\Ind(\Kk) \setminus \Nomi(\Kk)$ occurring only in the roots. It is well known that a tree-shaped counter-model can be obtained from an arbitrary counter-model $\Mm$ by the standard unravelling procedure. 
To turn a tree-shaped counter-model into a finite counter-model we use a variant of the \emph{blocking principle}: a systematic policy of reusing elements. For example, rather than adding a fresh $r$-successor of unary type $\tau$, one could add an $r$-edge to some previously added element of unary type $\tau$ (if there is one). This would give a finite model for $\Kk$, but not necessarily a counter-model for $Q$: a query asking for a cycle of length 42 might be unsatisfied in the original model, but the blocking principle introduces many new cycles, possibly one of length 42 among them. This is in fact the key difficulty to overcome: we need a blocking principle that does not introduce cycles shorter than the size of the query. 

The first step is to look at sufficiently large neighbourhoods, rather than just unary types.

\begin{definition} \label{def:neighbourhood}
For $d \in\Delta^\Ii \setminus \Nomi(\Kk)$, the $n$-neighbourhood $N_n^{\Ii}(d)$
is the subinterpretation of $\Ii$ induced by $\Nomi(\Kk)$ and all elements $e \in \Delta^\Ii \setminus \Nomi(\Kk)$ within distance $n$ from $d$ in $\Ii \setminus \Nomi(\Kk)$, enriched with a fresh concept interpreted as $\{d\}$. For $a \in \Nomi(\Kk)$, $N_n^{\Ii}(a)$ is the subinterpretation induced by $\Nomi(\Kk)$, enriched similarly. 
\end{definition}


Replacing unary types with large neighbourhoods is not enough, because nearby elements can have arbitrary large isomorphic  neighbourhoods: in the integers with the successor relation all $n$-neighbourhoods are isomorphic. The next step is to enrich the initial counter-model in such a way that overlapping neighbourhoods are not isomorphic, following an idea from \cite{GogaczM13}.

\begin{definition} \label{def:coloring} A \emph{colouring with $k$ colours} of an interpretation $\Ii$ is an extension $\Jj$ of $\Ii$ with $\Delta^\Jmc = \Delta^\Imc$, such that $\Jmc$ coincides with \Imc in every element in the signature of \Imc, and  
interprets fresh $k$ concept names $B_1, \dots, B_k$ such that $B_1^\Jj, \dots, B_k^\Jj$ is a partition of $\Delta^\Jmc$. We say that 
$d \in B_i^\Jj$ has colour $B_i$. 
A colouring \Jmc of $\Ii$ is \emph{$n$-proper} if for each $d \in \Delta^\Jmc$ 
all elements of $N_n^{\Jj}(d)$ have different colours. 
\end{definition}

Because $\Nomi(\Kk)$ is contained in each neighbourhood, in $n$-proper colourings each nominal has a unique colour.  

\begin{lemma} \label{lem:colouring}
If $\Ii\setminus \Nomi(\Kk)$ has bounded degree, then for all $n\geq 0$ there exists an $n$-proper colouring of $\Ii$ with finitely many colours. 
\end{lemma}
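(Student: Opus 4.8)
The plan is to reduce the existence of an $n$-proper colouring to a standard graph-colouring (chromatic number) argument on an auxiliary graph of bounded degree, and then invoke the fact that a graph with maximum degree $\Delta$ is $(\Delta+1)$-colourable. First I would fix the degree bound $\delta$ of $\Ii\setminus\Nomi(\Kk)$ and note that, because each $n$-neighbourhood $N_n^{\Jj}(d)$ consists of the (fixed, finite) nominal part $\Nomi(\Kk)$ together with the ball of radius $n$ around $d$ in the bounded-degree forest $\Ii\setminus\Nomi(\Kk)$, the number of non-nominal elements in any such ball is bounded by a constant $M = M(\delta,n)$ — concretely something like $1 + \delta + \delta(\delta-1) + \dots \le \delta^{\,n+1}$. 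So every $n$-neighbourhood has at most $N := |\Nomi(\Kk)| + M$ elements, and the "all elements have different colours" requirement is a constraint on sets of at most $N$ elements.

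Next I would build the conflict graph $G$ on vertex set $\Delta^\Ii$, putting an edge between $d$ and $e$ whenever $e \in N_n^{\Ii}(d)$ or $d \in N_n^{\Ii}(e)$ (these two conditions essentially coincide up to the nominal part — two non-nominals are within distance $n$ of each other symmetrically, and nominals are in everyone's neighbourhood, so every nominal is adjacent to every other vertex). A proper vertex colouring of $G$ is exactly an assignment of colours in which, for every $d$, all elements of $N_n^{\Ii}(d)$ receive pairwise distinct colours: if two elements of $N_n^{\Ii}(d)$ got the same colour they would be $G$-adjacent (each lies within distance $2n$ of the other through $d$, and more directly each is within distance $n$ of $d$ so they are mutually within distance $2n$; one must be slightly careful and instead add an edge between $e,e'$ whenever they lie in a common $n$-neighbourhood, i.e. within distance $2n$ of each other, which still keeps the degree bounded). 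I would therefore actually define $G$ with an edge between any two elements at distance $\le 2n$ in $\Ii\setminus\Nomi(\Kk)$, plus all edges incident to nominals; then proper colourings of $G$ are precisely $n$-proper colourings of $\Ii$, and the colour classes $B_i^{\Jj}$ give the desired extension $\Jj$.

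The key point is that $G$ has bounded degree: a non-nominal vertex $d$ has at most $\delta^{\,2n+1}$ non-nominal neighbours plus $|\Nomi(\Kk)|$ nominal neighbours, so $\deg_G(d) \le \delta^{\,2n+1} + |\Nomi(\Kk)| =: D$, a finite bound independent of $d$; nominals have unbounded degree, but there are only finitely many of them, so I would colour them first with $|\Nomi(\Kk)|$ distinct private colours and then greedily colour the rest. Formally, $G$ restricted to the non-nominals has maximum degree $\le D$, hence is $(D+1)$-colourable by the standard greedy argument (which works even for infinite graphs of bounded degree — order does not matter, or one appeals to De Bruijn–Erdős / a direct transfinite greedy argument); using a fresh palette disjoint from the nominal colours yields a proper colouring of all of $G$ with $|\Nomi(\Kk)| + D + 1$ colours, finitely many as required. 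The only mild subtlety — the main obstacle, such as it is — is handling the infinite domain: the greedy bound must be justified for infinitely many vertices of bounded degree, which follows either from a compactness argument or from the observation that each vertex has $< D+1$ already-coloured neighbours regardless of the enumeration order, so I would state this explicitly rather than hand-wave it.
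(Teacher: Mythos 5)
Your proposal is correct and is essentially the paper's own argument: the paper also passes from $n$-neighbourhoods to the distance-$2n$ relation and colours greedily, using that $2n$-neighbourhoods have bounded size $m$, so that each element can be given a colour unused in its $2n$-neighbourhood. Your packaging via an explicit conflict graph, private colours for nominals, and the remark that the greedy bound survives an arbitrary (transfinite) enumeration order only makes explicit details the paper leaves implicit.
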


We write $\Ii_n$ for an arbitrarily chosen $n$-proper colouring of $\Ii$. Because the neighbourhoods have bounded size and we used only finitely many colours, there are only finitely many $n$-neighbourhoods in $\Ii_n$ up to isomorphism. The blocking principle described below relies on this. 

Let $\Ii$ be a tree-shaped counter-model for $Q$. We turn it into a finite counter-model for $Q$ as follows.  Because $\Ii\setminus \Nomi(\Kk)$ has bounded degree, we can consider an $n$-proper colouring $\Ii_n$ of $\Ii$. For each branch $\pi$ in $\Ii_n \setminus \Nomi(\Kk)$, let $d_\pi$ be the first node on $\pi$ such that some earlier node $e_\pi$ on $\pi$ satisfies $N^{\Ii_n}_n(d_\pi)\simeq N^{\Ii_n}_n(e_\pi)$.  The new interpretation $\Ff_n$ is obtained as follows. $\Ff_n\setminus \Nomi(\Kk)$ includes the branch $\pi$ up to the predecessor of node $d_\pi$ and the edge originally leading to $d_\pi$ is redirected to $e_\pi$. Because the degree in $\Ii_n\setminus \Nomi(\Kk)$ is bounded, the domain of $\Ff_n\setminus \Nomi(\Kk)$ is a finite subset of the domain of $\Ii_n\setminus \Nomi(\Kk)$. The whole interpretation $\Ff_n$ is obtained by including $\Nomi(\Kk)$ into the domain and copying from $\Ii_n$ all edges connecting elements of $\Nomi(\Kk)$ with each other and with the elements of $\Ff_n\setminus \Nomi(\Kk)$. 

Because we started from a model of $\Kk$, for all $n\geq 0$, \[\Ff_n \models \Kk\,.\] We claim that for sufficiently large $n$, $\Ff_n$ is a counter-model for $Q$. In order to prove this, we introduce yet another interpretation, containing $\Ii_n$ and $\Ff_n$ as subinterpretations.  

\begin{definition} \label{def:link}
Let $i\leq n$ and let $d$, $e$ be elements of $\Ii_n $. We say that $(d,e)$ is an $i$-link along role $r$ if either $d$ has an $r$-successor $e'$ in $\Ii_n$ such that $N^{\Ii_n}_i(e')\simeq N^{\Ii_n}_i(e)$,
or $e$ has an $r$-predecessor $d'$ in $\Ii_n$ such that $N^{\Ii_n}_i(d') \simeq N^{\Ii_n}_i(d)$.
\end{definition}

Notice that for $i<j$,  each $j$-link is also an $i$-link.  Note also that $(d,e)$ is an $i$-link along role $r$ if and only if $(e,d)$ is an $i$-link along $r^-$.

\begin{definition} \label{def:linked}
For $i\leq n$, let $\Ii_n^i$ be the interpretation obtained from $\Ii_n$ by including into the interpretation of each role $r$ all $i$-links along $r$; that is, for every role $r$ and  every $i$-link $(d,e)$ along $r$, $(d,e) \in r^{\Ii_n^i}$.
\end{definition}

Clearly, we have  \[ \Ii_n \subseteq \Ii_n^n \subseteq \Ii_n^{n-1} \subseteq \dots \subseteq  \Ii_n^1 \subseteq \Ii_n^0\,,\]
but the domains of all these interpretations coincide. We keep referring to the edges present in $\Ii_n^i$ but not in $\Ii_n$ as $i$-links, even though they are ordinary edges now. 

\begin{theorem} \label{thm:lifting}
Let $P$ be a CQ with at most $k$ binary atoms and let $n\geq k^2$. For each homomorphism $h : P \to \Ii_n^n$ there exists a homomorphism $h' : P \to \Ii_n$ such that \[N^{\Ii_n}_{n-k^2}(h(x)) \simeq N^{\Ii_n}_{n-k^2}(h'(x))\] for all $x\in \mathit{var}(P)$. 
\end{theorem}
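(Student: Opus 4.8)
The plan is to prove this by induction on the number of $i$-links used by the homomorphism $h$, decreasing the index $i$ from $n$ down as we go. Intuitively, an $i$-link edge $(d,e)$ along $r$ in $\Ii_n^n$ (note $n$-links are in particular $i$-links for every $i\le n$, so all links present are $n$-links, hence $i$-links) comes with a ``witness'': either $d$ has a genuine $r$-successor $e'$ in $\Ii_n$ with $N^{\Ii_n}_i(e')\simeq N^{\Ii_n}_i(e)$, or symmetrically $e$ has a genuine $r$-predecessor $d'$ with $N^{\Ii_n}_i(d')\simeq N^{\Ii_n}_i(d)$. The isomorphism of $i$-neighbourhoods lets us ``reroute'' part of the query image: if the image of $P$ that lies on the far side of this link fits inside the $i$-neighbourhood of $e$ (resp. $d$), we can transport it through the isomorphism to sit inside the $i$-neighbourhood of $e'$ (resp. $d'$), which is a genuine edge of $\Ii_n$, thereby removing one link while increasing the ``budget'' of neighbourhood we must respect by only a bounded amount.

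More concretely, I would fix a connected CQ $P$ first (a disconnected query is handled componentwise, since $\Ii_n$ and $\Ii_n^n$ have the same domain and the neighbourhood-preservation condition is per-variable). Consider a homomorphism $h:P\to\Ii_n^n$ that uses some link edges; pick the ``smallest'' index $i$ such that all edges it uses are $i$-links (true for $i=n$ at worst), and among the link edges used, consider the structure of $P$ after deleting one link-edge atom $r(x,y)$: since $P$ has at most $k$ binary atoms, $P$ splits into at most $k$ pieces and each piece has diameter at most $k$. The key estimate is that the image of $P$ sits within distance $k$ (in $\Ii_n\setminus\Nomi(\Kk)$, handling nominals separately since they lie in every neighbourhood) of the endpoints of the deleted link; so one of the two sides fits inside the $i$-neighbourhood of one endpoint as long as $i\ge k$, and we can replace that link by a genuine $r$-edge to the isomorphic copy $e'$ (resp. from $d'$), obtaining $h_1:P\to\Ii_n^{i'}$ with strictly fewer link edges, where $i' = i - k$ roughly (the transport shrinks the available neighbourhood radius by the diameter of the transported piece). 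Iterating, each of the at most $k$ link edges costs us at most $k$ in the radius, so after at most $k$ steps we land in $\Ii_n^0$... — but we actually need to land in $\Ii_n$ itself, so the induction must be set up so that \emph{every} link edge is eliminated, which is why the bound is $n\ge k^2$: we start with radius $n$, lose at most $k$ per eliminated link, eliminate at most $k$ links, so the final $h'$ still preserves neighbourhoods of radius $n-k^2$.

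The bookkeeping to get right is the neighbourhood-preservation statement: I would maintain the invariant that at stage $t$ there is a homomorphism $h_t:P\to\Ii_n^{n-tk}$ using at most $k-t$ link edges and satisfying $N^{\Ii_n}_{n-tk}(h_t(x))\simeq N^{\Ii_n}_{n-tk}(h(x))$ for all $x$. The isomorphisms of neighbourhoods used in successive rerouting steps must be composed carefully: when we transport a piece of the query from inside $N_i(e)$ to inside $N_i(e')$, the neighbourhoods of the moved variables change, but they change by an isomorphism, and since $N_i(e')\subseteq N_{i}(\text{that endpoint})$ is nested inside the larger surviving neighbourhood, the composed isomorphism still witnesses $N^{\Ii_n}_{n-(t+1)k}(h_{t+1}(x))\simeq N^{\Ii_n}_{n-(t+1)k}(h(x))$ after the radius is shrunk appropriately. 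The main obstacle I anticipate is precisely this last point — ensuring that the several rerouting isomorphisms are mutually compatible (they act on possibly overlapping regions of $P$'s image) and that the radii bookkeeping is tight enough to reach $n-k^2$ rather than something worse. A clean way around overlap issues is to always reroute the piece of $P$ lying strictly on one side of the chosen link, process link-edges in an order compatible with a spanning tree of the ``quotient'' of $P$ by its link-edges, and observe that once a link is eliminated its endpoints are never moved again, so the isomorphisms act on disjoint variable sets and compose trivially on each variable. The nominals require a small separate argument: they sit in every $n$-neighbourhood, so any query atom incident to a nominal is automatically respected and never needs rerouting, and the $i$-link definition degenerates appropriately there.
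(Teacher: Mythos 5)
Your overall strategy is the paper's: eliminate the links used by $h$ one at a time, each time pulling the portion of the image adjacent to the chosen link back through the isomorphism $g : N^{\Ii_n}_{i}(e) \to N^{\Ii_n}_{i}(e')$ that witnesses the link, paying $k$ in radius per step, with at most $k$ steps, hence the bound $n-k^2$. Your one-step invariant (a homomorphism into $\Ii_n^{\,i-k}$ using strictly fewer links and preserving $(i-k)$-neighbourhoods) is exactly the claim the paper iterates, and your distance estimate (the moved piece has at most $k-1$ edges, so it sits within distance $k-1$ of $e$ and inside the domain of $g$) matches as well.

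The gap is in the step you defer, which is where essentially all of the paper's work lies: verifying that the rerouted map is still a homomorphism on atoms of $P$ that are themselves mapped to \emph{other} links touching the moved region. Two situations must be handled. First, an atom $r(x,y)$ with $h(y)$ in the moved piece and $h(x)$ outside it, mapped to an $i$-link: one must show $(h(x),g(h(y)))$ is still an edge of $\Ii_n^{\,i-k}$, and in the subcase where the link's witness is an $r$-predecessor $f$ of $h(y)$, this requires transporting the witness $f$ itself through $g$ (using that $g$ preserves $(i-k)$-neighbourhoods of \emph{all} elements within distance $k$ of $e$, not only of query-image points) and noting the degenerate case $g(f)=h(x)$, where one gets a genuine edge rather than a link; your remark that the moved elements ``change by an isomorphism'' does not by itself deliver this. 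Second, an atom mapped to a link with \emph{both} endpoints in the moved piece can occur in your setup (the endpoints may also be joined by non-link atoms, so deleting link atoms need not separate them); the paper rules this case out using the $n$-properness of the colouring (the witness would force two equally coloured elements into $N^{\Ii_n}_{i}(e)$), and your proposal never invokes the colouring at all, so this case is simply open. By contrast, the obstacle you single out---compatibility of rerouting isomorphisms across steps, enforced by a spanning-tree order and a ``once eliminated, never moved again'' discipline---is a non-issue: each step preserves $(i-k)$-neighbourhoods relative to the \emph{previous} homomorphism, and neighbourhood isomorphisms restrict to smaller radii and compose, so no ordering is needed (the paper's iteration may indeed move an element several times), and the discipline you propose is not obviously achievable anyway. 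So the architecture is right, but the case analysis establishing that $h'$ is a homomorphism---the bulk of the paper's proof---is missing.
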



\noindent Theorem~\ref{thm:lifting} holds for any interpretation $\Ii$ of any $\SOIF$ KB. 

Before proving Theorem~\ref{thm:lifting}, let us see that it implies that  $\Ff_{k^2} \notmodels Q$, where $k$ is a  common upper bound on the number of binary atoms in the CQs constituting $Q$.  Because $\Ff_{k^2}$ is obtained from  $\Ii_{k^2}$ by adding some $k^2$-links and restricting the domain, it follows that $\Ff_{k^2} \subseteq \Ii_{k^2}^{k^2}$.  Consequently, if there were a homomorphism $h: P \to \Ff_{k^2} \subseteq \Ii_{k^2}^{k^2}$ for some CQ $P$ constituting $Q$,  Theorem~\ref{thm:lifting} would yield a homomorphism $h' : P \to \Ii_{k^2}$, contradicting $\Ii \notmodels Q$.
Thus, we have proved finite controllability for $\ALCOI$.

\begin{corollary} \label{cor:ALCOI-FC}
 For each $\ALCOI$ KB  $\Kk$ and UCQ $Q$, 
\[\K \models Q \text{ iff } \K \fentails Q\,. \]
\end{corollary}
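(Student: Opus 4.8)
The plan is to establish the non-trivial implication $\Kk \fentails Q \Rightarrow \Kk \models Q$ by contraposition, by converting an arbitrary counter-model for $Q$ into a \emph{finite} one; the converse implication is immediate, since every finite model of $\Kk$ is in particular a model of $\Kk$. So suppose $\Kk \notmodels Q$, and fix once and for all a number $k$ that bounds the number of binary atoms in each CQ constituting $Q$. The work then splits into two parts: building a finite model $\Ff_{k^2}$ of $\Kk$ by blocking, and checking that $\Ff_{k^2}$ still refutes $Q$; the second part is exactly what Theorem~\ref{thm:lifting} is for, and the preceding discussion already carries it out, so the proof amounts to assembling these pieces cleanly.

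First I would reduce to a \emph{tree-shaped} counter-model. Because $\Kk\notmodels Q$, there is a counter-model $\Mm$ of $\Kk$; using nominals we may assume its ABox has no role assertions, and then the standard unravelling of $\Mm$ yields a tree-shaped model $\Ii$ of $\Kk$ that admits a homomorphism back into $\Mm$, so by preservation of UCQs under homomorphisms $\Ii \notmodels Q$ as well. Tree-shapedness guarantees that $\Ii \setminus \Nomi(\Kk)$ has bounded degree, so Lemma~\ref{lem:colouring} applies and I may fix the $k^2$-proper colouring $\Ii_{k^2}$. Next I run the blocking construction described above on $\Ii_{k^2}$: along each branch $\pi$ of $\Ii_{k^2}\setminus\Nomi(\Kk)$, since there are only finitely many $k^2$-neighbourhoods up to isomorphism, pigeonhole produces the repetition node $d_\pi$ (unless $\pi$ is finite, in which case $\pi$ is left untouched); truncate $\pi$ below $d_\pi$, redirect the edge that led to $d_\pi$ so that it now points to $e_\pi$, and finally re-insert $\Nomi(\Kk)$ together with all edges incident to nominals exactly as in $\Ii_{k^2}$. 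By bounded degree and K\"onig's lemma the resulting interpretation $\Ff_{k^2}$ is finite, and since it was obtained from a model of $\Kk$ it still satisfies $\Kk$ — the $\exists$- and $\forall$-requirements at a redirection point are inherited from the isomorphic neighbourhood of $d_\pi$ at $e_\pi$, and $\ALCOI$ has no functionality declarations to worry about.

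It remains to show $\Ff_{k^2}\notmodels Q$. Here I observe that $\Ff_{k^2}$ is obtained from $\Ii_{k^2}$ by adding only $k^2$-links (the redirected edge $(p,e_\pi)$ is a $k^2$-link along the relevant role, witnessed by the $r$-successor $d_\pi$ of $p$ with $N^{\Ii_{k^2}}_{k^2}(d_\pi)\simeq N^{\Ii_{k^2}}_{k^2}(e_\pi)$) and by deleting some domain elements; the re-inserted nominal edges are already present in $\Ii_{k^2}$. Hence $\Ff_{k^2}\subseteq\Ii_{k^2}^{k^2}$. Now if some disjunct $P$ of $Q$ admitted a match $h:P\to\Ff_{k^2}$, then $h$ would be a homomorphism $h:P\to\Ii_{k^2}^{k^2}$, and since $P$ has at most $k$ binary atoms, Theorem~\ref{thm:lifting} (applied with $n=k^2$) would yield a homomorphism $h':P\to\Ii_{k^2}$; composing with the map that forgets the fresh colour concepts gives a homomorphism $P\to\Ii$, i.e.\ a match for $P$ in $\Ii$, contradicting $\Ii\notmodels Q$. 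Therefore no disjunct of $Q$ matches in $\Ff_{k^2}$, so $\Ff_{k^2}$ is a finite counter-model for $Q$, witnessing $\Kk\notfentails Q$.

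The genuine difficulty is concentrated in Theorem~\ref{thm:lifting}, which is assumed here: the blocking step deliberately creates many new short cycles, so one cannot expect to pull a match back along a homomorphism into $\Ii$ directly; instead the lifting argument must, atom by atom in $P$ and exploiting $n$-properness of the colouring to tell apart overlapping neighbourhoods, rewrite a match that uses $i$-links into one that uses only $(i-1)$-links, moving each image only a bounded distance and so costing at most one unit of neighbourhood radius per binary atom — which is what makes the $n\ge k^2$ budget suffice. The only subsidiary points that need attention in the present corollary are routine: that nominals let us discard role assertions from the ABox, that unravelling preserves both modelhood of $\Kk$ and bounded degree of the non-nominal part, and that stripping the colour concepts is harmless because $\Ii_{k^2}$ is a conservative extension of $\Ii$ on the original signature.
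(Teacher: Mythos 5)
Your proposal is correct and follows essentially the same route as the paper: unravel an arbitrary counter-model into a tree-shaped one, take a $k^2$-proper colouring, apply the blocking construction to get $\Ff_{k^2}\models\Kk$ with $\Ff_{k^2}\subseteq\Ii_{k^2}^{k^2}$, and invoke Theorem~\ref{thm:lifting} to rule out any match of a disjunct of $Q$. Your added remarks (handling finite branches, the K\"onig-style finiteness argument, the redirected edge being a $k^2$-link, and stripping the colour concepts) are exactly the routine points the paper leaves implicit.
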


\begin{proof}[Proof of Theorem~\ref{thm:lifting}]  
Let $h(P)$ denote the subinterpretation of $\Ii_n^n$ obtained by restricting the domain to $h(\var(P))$, and only keeping in each role $r$ edges $(h(x), h(y))$ such that $r(x,y)$ is an atom from $P$. We say that $h$ \emph{uses} an $r$-edge of $\Ii_n^n$ if this $r$-edge is present in $h(P)$. 

Let $\ell$ be the number of links in $\Ii_n^n$ used by $P$. Then $\ell \leq k$, because $P$ contains at most $k$ binary atoms. The theorem follows by applying the following claim $\ell$ times: For each homomorphism $h:P \to \Ii_n^i$ with \mbox{$k\leq i \leq n$} that uses at least one link, there exists a homomorphism \mbox{$h':P \to \Ii_n^{i-k}$} that uses strictly fewer links and satisfies \[N^{\Ii_n}_{i-k}(h(x)) \simeq N^{\Ii_n}_{i-k}(h'(x))\] for all $x\in var(P)$. Let us prove the claim. 

Let $(d,e)$ be a link used by $h$: an $s$-edge in $h(P) \subseteq \Ii_n^i$ that is not an $s$-edge in $\Ii_n$. 
Then $(d,e)$ is an $i$-link in $\Ii_n$. By symmetry it suffices to consider the case when $d$ has an $s$-successor $e'$ in $\Ii_{n}$ such that  $N^{\Ii_n}_{i}(e)\simeq N^{\Ii_n}_{i}(e')$.  Let \[g: N^{\Ii_n}_{i}(e) \to N^{\Ii_n}_{i}(e')\] be the witnessing isomorphism. 
Because $g$ is identity over $\Nomi(\Kk) \subseteq \Ind(\Kk)$, we have $e\notin \Nomi(\Kk)$; indeed, otherwise $e'=g(e)=e$ and $(d,e)$ would be an $s$-edge in $\Ii_n$.
Let $E$ be the connected component of $e$ in \[h(P) \cap (\Ii_{n} \setminus \Nomi(\Kk))\,,\] where by $\Jj' \cap \Jj''$ we mean the interpretation $\Jj$ such that $\Delta^\Jj = \Delta^{\Jj'} \cap \Delta^{\Jj''}$, $A^\Jj = A^{\Jj'} \cap A^{\Jj''}$ for all concept names $A$, and $r^\Jj = r^{\Jj'} \cap r^{\Jj''}$ for all role names $r$. Because $h(P)$ has at most $k$ edges and $(d,e)$ is an $s$-edge in $h(P)$ but not in $E$, there are at most $k-1$ edges in  $E$. We shall bring $E$ close to $d$ in $\Ii_{n}$ by pulling it back by the $i$-link $(d,e)$.

As $E$ is a connected subinterpretation of $\Ii_{n} \setminus \Nomi(\Kk)$ and has at most $k-1$ edges, each element of $E$ lies within distance $k-1$ from $e$. In particular, $E \subseteq N^{\Ii_n}_{i}(e)$. Hence, $E$ is contained in the domain of $g$ and we can define \[h':P \rightarrow \Ii_{n}^{i-k}\] as follows. For each $x\in \var(P)$, let $h'(x)=g(h(x))$ if $h(x)\in E$, and $h'(x)=h(x)$ otherwise. 
The additional claim of the theorem follows immediately because $g$ preserves $(i-k)$-neighbourhoods of elements within distance $k$ from $e$. We only need to verify that $h'$ is indeed a homomorphism and that it uses fewer links than $h$.

Let $r(x,y)$ be an atom of the query $P$. There are three cases to consider.
First, suppose that $h(x),h(y) \notin E$. Then \[(h'(x),h'(y))=(h(x),h(y))\,.\] We have that  $(h(x),h(y))$ is an $r$-edge in $\Ii_{n}^{i-k}$ because $h$ is a homomorphism into $\Ii_{n}^{i} \subseteq \Ii_{n}^{i-k}$. Obviously, $h'$ uses no new links for such atoms.

Next, suppose that $h(x),h(y) \in E$. Then \[(h'(x),h'(y))=(g(h(x),g(h(y)))\,.\] Moreover, $(h(x),h(y))$ is an $r$-edge in $\Ii_{n}^{i}$ because $h$ is a homomorphism. Suppose it is a link along $r$. Then, $h(x)$ has an $r$-successor in $\Ii_n$ with the same colour as $h(y)$, or $h(y)$ has an $r$-predecessor in $\Ii_n$ with the same colour as $h(x)$.  
Because both $h(x)$ and $h(y)$ lie within distance $k-1$ from $e$, this successor or predecessor belongs to $N^{\Ii_n}_{i}(e)$, along with $h(x)$ and $h(y)$. But this is impossible because all elements of  $N^{\Ii_n}_{i}(e)$ have different colours. Hence, $(h(x),h(y))$ is an $r$-edge in $N^{\Ii_n}_{i}(e)$ and $(g(h(x)),g(h(y)))$ is an $r$-edge in $N^{\Ii_n}_{i}(e')$. That is, $(g(h(x)),g(h(y)))$ is an $r$-edge in $\Ii_{n}^{i-k}$, and is not a link along $r$.  

Finally, suppose that $h(x)\notin E$ and $h(y)\in E$ (the symmetric case is analogous).  Because $h$ is a homomorphism,  $(h(x),h(y))$ is an $r$-edge in $\Ii_{n}^{i}$. Now there are two subcases. Assume first that $(h(x),h(y))$ is also an $r$-edge in $\Ii_n$. By the definition of $E$ it is not an $r$-edge in $\Ii_n\setminus \Nomi(\Kk)$, so it must be an $r$-edge between a nominal and an element of $E$. As such, it is also an $r$-edge in $N^{\Ii_n}_{i} (e)$. Consequently, \[(h'(x), h'(y)) = (h(x), g(h(y))) = (g(h(x)),g(h(y)))\] is an $r$-edge in $N^{\Ii_n}_{i} (e')$ and we conclude like previously.

Assume now that $(h(x),h(y))$ is an $i$-link along $r$. We need to check that $(h(x),g(h(y)))$ is an $r$-edge in $\Ii_{n}^{i-k}$.  Since $h(y)$ and $g(h(y))$ are in distance at most $k-1$ from $e$ and $e'$, respectively, and $N^{\Ii_n}_{i}(e)\simeq N^{\Ii_n}_{i}(e')$, it follows that \[N^{\Ii_n}_{i-k}(h(y))\simeq N^{\Ii_n}_{i-k}(g(h(y)))\,.\] 
Because $(h(x), h(y))$ is an $i$-link, it is also an $(i-k)$-link. If $h(x)$ has an $r$-successor $f$ in $\Ii_{n}$ such that 
\[N^{\Ii_n}_{i-k}(f) \simeq N^{\Ii_n}_{i-k}(h(y)) \simeq N^{\Ii_n}_{i-k}(g(h(y)))\,,\]
then $(h(x),g(h(y)))$ is an $(i-k)$-link along $r$, unless the successor $f$ is $g(h(y))$ itself; in either case $(h(x), g(h(y)))$ is an $r$-edge in $\Ii_{n}^{i-k}$.
The remaining possibility is that $h(y)$ has an $r$-predecessor $f$ in $\Ii_{n}$ such that 
\[N^{\Ii_n}_{i-k}(f) \simeq N^{\Ii_n}_{i-k}(h(x)) \,.\]
Because $h(y)$ lies within distance $k-1$ from $e$, 
\[N^{\Ii_n}_{i-k}(f) \subseteq N^{\Ii_n}_{i}(e) \,.\]
Hence, $g(f)$ is an $r$-predecessor of $g(h(y))$ such that 
\[N^{\Ii_n}_{i-k}(g(f)) \simeq N^{\Ii_n}_{i-k}(h(x)) \,.\] 
Consequently, $(h(x),g(h(y)))$ is an $(i-k)$-link along $r$, unless $g(f)$ is $h(x)$ itself; in either case $(h(x), g(h(y)))$ is an $r$-edge in $\Ii_{n}^{i-k}$.

Thus $h'$ is a homomorphism and uses links only for the atoms of $P$ for which $h$ uses links. To see that $h'$ uses strictly fewer links than $h$, recall that instead of the $i$-link $(d,e)$ along $s$, it uses the $s$-edge $(d, e')$, which is not a link.
\end{proof}


\section{$\SOI$ and $\SOF$ }
\label{sec:soi}

The goal of this section is to prove the following theorem. 
\begin{theorem}
The finite query entailment problem for both $\SOI$ and $\SOF$ is \twoexp-complete.
\end{theorem}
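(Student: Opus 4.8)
The plan is to derive the \twoexp lower bound by transfer from existing results, and to obtain the upper bound by reducing finite query entailment to the emptiness problem of a tree automaton running over a suitable class of annotated forest-shaped models. For the lower bound, both \SOI and \SOF contain sublogics for which combined-complexity OMQA is already \twoexp-hard and which are finitely controllable — so that finite and unrestricted entailment coincide — for instance the logics underlying the lower bounds of \cite{NgoOS16,Lutz08}; the same hardness therefore carries over to finite query entailment. Because \SOI and \SOF contain transitive roles they are \emph{not} finitely controllable, so Corollary~\ref{cor:ALCOI-FC} does not apply; for the upper bound I would instead single out a class of \emph{special tree-like models} — forest-shaped interpretations carrying extra annotations — such that $\Kk \notfentails Q$ holds if and only if $(\Kk, Q)$ has a special tree-like counter-model.

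The heart of the argument — and the step I expect to be the main obstacle — is to lift the finite-model construction of Section~\ref{sec:alcoi} from \ALCOI to transitive roles. I would view a tree-like model of an \SOI/\SOF KB as a forest-shaped \emph{base} interpretation — parent/child edges for the non-transitive roles, together with the edges connecting nominals to the rest — whose transitive roles are obtained by transitively closing the base. The neighbourhood, colouring and link notions of Definitions~\ref{def:neighbourhood}--\ref{def:linked} are then applied to this base interpretation, but the $i$-link redirection must now be carried out so that (i) once the transitive roles of the refolded structure are re-closed it still models $\Kk$, and (ii) no short match of $Q$ arises that traverses a long path of a transitive role. For (i), redirection is restricted to elements whose $n$-neighbourhoods additionally agree on the pattern of transitive-role edges that reach nominals or run along the current branch — information that becomes part of the neighbourhood type — and, for \SOF and \SOI respectively, it is performed so as to respect functionality declarations and inverse roles. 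For (ii), the decisive structural fact is that in a forest-shaped model a connected match of a CQ, once every transitive-role atom has been resolved into a genuine role path, decomposes through least common ancestors and nominals into boundedly many fragments, each lying inside a bounded-radius ball of the base interpretation. This underlies the transitive analogue of Theorem~\ref{thm:lifting}: for $n$ polynomial in $|\Kk|+|Q|$, every match of a disjunct of $Q$ in the linked interpretation $\Ii_n^n$ can be pulled back, fragment by fragment along $i$-links, to a match in $\Ii_n$, contradicting that $\Ii_n$ was a counter-model. The annotations that make a tree-like model ``special'' are precisely what these arguments consume: an $n$-proper colouring, the base $n$-neighbourhood type at every node, and a well-founded rank on branches witnessing that a repeated neighbourhood type is always eventually reached.

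Completeness is the routine converse: from a finite counter-model $\Mm$ for $(\Kk,Q)$ one unravels its base into a forest, installs an $n$-proper colouring via Lemma~\ref{lem:colouring}, labels every node with its base $n$-neighbourhood type, and reads the branch rank off the finite fold; the result is a special tree-like counter-model, and, being the unravelling of a finite structure, it is regular.

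Finally, the special tree-like counter-models of $(\Kk,Q)$ form a regular set: a two-way alternating parity tree automaton $\Aa$ checks, over the labelled forest, that it encodes a model of $\Kk$ (including a consistently guessed nominal part) and meets the locally testable, $\omega$-regular conditions of specialness, while a second component verifies that $Q$ is \emph{not} matched. The latter is the technically heaviest ingredient and is handled by the standard query-decomposition (``rolling-up'') technique for UCQ non-entailment over tree-like models with transitive roles: guess how the variables of each disjunct spread over the forest and how its transitive-role atoms are realised by paths, turning every potential match into a regular pattern that $\Aa$ can refute. After precomputing the exponentially many $n$-neighbourhood types, $\Aa$ has exponentially many states, so its emptiness is decidable in exponential time in the number of states, i.e., in \twoexp in $|\Kk|+|Q|$; with the lower bound this yields \twoexp-completeness. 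The remaining steps — unravelling, colouring, the automaton construction, and the complexity bookkeeping — follow established patterns, so the real work is concentrated in the transitive-role version of Theorem~\ref{thm:lifting} and the matching choice of annotations: making branch folding transparent to query matches that run along transitive roles over long paths.
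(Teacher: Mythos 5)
Your lower-bound argument is exactly the paper's (transfer via finitely controllable sublogics of \cite{NgoOS16,Lutz08} and Corollary~\ref{cor:ALCOI-FC}), but the upper bound has a genuine gap in the one place you yourself identify as the heart of the matter: the treatment of transitive roles in the folding step. The claim that a match of a CQ, after resolving transitive atoms into role paths, ``decomposes into boundedly many fragments, each lying inside a bounded-radius ball of the base interpretation'' is false: a single transitive atom can be realised by an arbitrarily long path, so matches in the closure are not local in the base. More importantly, no characterisation of the form ``$\Kk\notfentails Q$ iff there is a tree-like counter-model carrying colourings, neighbourhood types and a branch rank'' can be correct as stated. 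Take $\Tmc=\{A \sqsubseteq \exists r.A,\ \mn{Tr}(r)\}$, $\Amc=\{A(a)\}$ and $Q=\exists x\, r(x,x)$: the infinite $r$-chain is a tree-shaped counter-model (and none of your annotations excludes it, since neighbourhood types do repeat along its branch), yet every \emph{finite} model closes an $r$-cycle and hence satisfies $Q$, so $\Kk\fentails Q$. Any blocking/folding of the chain creates such a cycle, and no pull-back in the style of Theorem~\ref{thm:lifting} can save you, because the original closure has no match at all. So your procedure would wrongly report a finite counter-model. (A smaller point: $n$ cannot be chosen polynomial in $|\Kk|+|Q|$; in the paper it depends on the regularity parameter of the extracted counter-example, which is harmless only because the finite-model construction lives in the correctness proof, not in the algorithm.)

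What is missing is precisely the machinery the paper builds around this obstacle. First, finite counter-models are unravelled \emph{without} unravelling transitive strongly connected components: these are kept as clique nodes over at most $|\Kk|$ type-representatives, which yields \emph{safe} counter-examples (no infinite simple path in a transitive role) and is what rules out the chain above. Second, the query is not attacked directly on the closure: regularity of the automaton language gives a bound $\ell$ on simple transitive paths (Lemma~\ref{lem:paths}), so $Q$ can be replaced by the bounded-composition rewriting $Q^*$ (Lemma~\ref{lem:star}), to which the generic Theorem~\ref{thm:lifting} applies over the \emph{non-closed} model of $\Kk^*$. Third, one must prove that folding preserves this boundedness — the paper does so by redirecting only along \emph{external} links and proving Theorem~\ref{thm:folding-bounded} — so that $\Ff_n \notmodels Q^*$ really implies $(\Ff_n)^*\notmodels Q$. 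Your ``agree on the pattern of transitive-role edges reaching nominals or the current branch'' is a gesture in this direction but does not substitute for the safety condition, the $Q\mapsto Q^*$ rewriting, or the boundedness-preservation argument; without them both directions of your characterisation fail. The automaton side of your plan (rolling-up, emptiness, complexity bookkeeping) is fine in spirit — the paper uses a doubly exponential nondeterministic B\"uchi automaton over clique-forests instead of an exponential two-way alternating one — but it only becomes meaningful once the class of tree-like models it recognises is the class of \emph{safe} counter-examples.
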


The lower bounds follow immediately from the results on unrestricted query entailment for $\ALCO$ \cite{NgoOS16} and $\ALCI$ \cite{Lutz08}, and Corollary~\ref{cor:ALCOI-FC}; the challenge is to prove the upper bounds.  We develop our argument with $\SOI$ in mind, but it adapts easily to $\SOF$ (see appendix). 

Let us fix a $\SOI$ knowledge base $\Kk$ and a union of conjunctive queries $Q$.  Like for $\ALCOI$, we can assume that $\Kk$'s ABox contains no role assertions. 

Because $\Kk$ is normalised, complete information about restrictions on the types of neighbours of a node is encoded in its unary type. Now, we would like the unary type to determine also the neighbouring nominals. This can be assumed without loss of generality,  because one can always extend $\Kk$ by adding for each $a\in\Nomi(\Kk)$ and $r \in\Rol(\Kk)$ fresh concept names $A_{r,a}$, $A_{r^-,a}$ axiomatised with $A_{r,a} \equiv \exists r. \{a\}$, $\{a\} \equiv \forall r. A_{r^-,a}$, and normalise the resulting KB.

Let $\Ii^*$ be the interpretation obtained from interpretation $\Ii$ by closing transitively the interpretation of each transitive role. Note that each existential restriction satisfied in $\Ii$ is also satisfied in $\Ii^*$. The same holds for quantifier-free CI, and for universal restrictions involving non-transitive roles. For universal restrictions involving transitive roles, we ensure this property by adding a fresh concept name $B'$ for each $B\in \CN(\Kk)$ and CIs $A \sqsubseteq \forall r. B'$,  $B' \sqsubseteq  \forall r. B'$, $B' \sqsubseteq  B$ for each CI of the form $ A \sqsubseteq \forall r. B$ with $r$ transitive. 


The last assumption we would like to make about $\Kk$ is that the unary type of each element of $\Nomi(\Kk)$ is fully specified in the ABox; that is, for all $a \in\Nomi(\Kk)$ and  $A\in\CN(\Kk)$, the ABox contains either $A(a)$ or $\bar A(a)$. This can be done without loss of generality, because $\Kk \fentails Q$ iff $\Kk' \fentails Q$ for each $\Kk'$ that can be obtained from $\Kk$ by completing assertions about nominals. This adds the factor $2^{|\Nomi(\Kk)|\cdot|\CN(\Kk)|}$ to the running time of the decision procedure, but the overall complexity bound is not affected, because it is exponential in the size of $\Kk$ anyway.

 Building on the results of the previous section, we show that the existence of a finite counter-model for $Q$ is equivalent to the existence of a possibly infinite counter-model of a special form, which generalises tree-shaped models. 
The special form is based on the notion of clique-forests.
\begin{definition}
A clique-forest for an interpretation $\Ii$ of $\Kk$ is a forest (a sequence of trees) whose each node $v$ is labelled with a subinterpretation $\Ii_v$ of $\Ii\setminus \Nomi(\Kk)$ such that
\begin{itemize}
\item the sets $\Delta^{\Ii_v}$ are a partition of $\Delta^{\Ii \setminus \Nomi(\Kk)}$;
\item each $\Ii_v$ is either a single element with all roles empty (element node) or a clique over some transitive role with all other roles empty and no repetitions of unary types (clique node);
\item apart from edges within cliques,  in $\Ii \setminus \Nomi(\Kk)$ there is exactly one edge between $\Delta^{\Ii_u}$ and $\Delta^{\Ii_v}$ for every two adjacent nodes $u$ and $v$: assuming $u$ is the parent of $v$, it is an $r$-edge from an element of $\Delta^{\Ii_u}$ to an element of $\Delta^{\Ii_v}$ for some $r\in\Rol(\Kk)$.

\end{itemize}
\end{definition}

\begin{definition}
An interpretation $\Ii$ of $\Kk$ is a $\SOI$-forest if it admits a clique-forest that consists of at most $|\Kk|^2$ trees of branching at most $|\Kk|^2$, such that each element of $\Ind(\Kk)\setminus \Nomi(\Kk)$ occurs in some root.
\end{definition}



Let $\Kk^*$ denote the KB obtained from $\Kk$ by dropping transitivity declarations.

\begin{definition}
A \emph{counter-example} for $Q$ is a $\SOI$-forest $\Ii$ such that $\Ii\models \Kk^*$ and $\Ii^*\notmodels Q$.
\end{definition}

If $\Ii$ is a counter-example for $Q$, thanks to the initial preprocessing, $\Ii^*$ is a counter-model for $Q$. One could also show that if there is a counter-model for $Q$, then there is a counter-example for $Q$. But we are interested in \emph{finite} counter-models and for that we need an additional condition.  Recall that a path is simple if it does not revisit elements. 

\begin{definition}
An interpretation $\Ii$ is \emph{safe} if it does not contain an infinite simple $r$-path for any transitive role $r$.
\end{definition}

The whole argument now splits into two parts: equivalence of the existence of a \emph{finite} counter-model  and a \emph{safe} counter-example, and effective regularity of the set of clique-forests of safe counter-examples. Together they show that finite query entailment can be solved by testing emptiness of an appropriate doubly-exponential automaton (with B\"uchi acceptance condition), which can be done in polynomial time. We begin from the second part, as it is needed to prove the first one.

\begin{theorem}\label{thm:SOI-regular}
Given a union $Q$ of CQs, each of size at most $m$, one can compute (in time polynomial in the size of the output) an automaton of size $2^{|Q| \cdot |\Kk|^{\Oo(m)}}$ that recognises clique-forests of safe counter-examples for $Q$.  
\end{theorem}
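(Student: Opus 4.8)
The plan is to obtain the automaton as the product of three components: a \emph{structural} automaton $\Aa_{\mathrm{str}}$ checking that the input encodes a legal clique-forest of a model of $\Kk^{*}$, a \emph{safety} automaton $\Aa_{\mathrm{safe}}$ forbidding infinite simple $r$-paths for transitive roles $r$, and a \emph{query} automaton $\Cc_{Q}$ checking $\Ii^{*}\notmodels Q$. The first two components are cheap; the third carries all the weight, and its construction — by complementing, disjunct by disjunct, an automaton that looks for a match — is what makes the final size doubly exponential.

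First I would fix an encoding of clique-forests consisting of at most $|\Kk|^{2}$ trees of branching at most $|\Kk|^{2}$ as $\Sigma$-labelled trees of bounded degree (joining the $\leq|\Kk|^{2}$ roots under a fresh common root): the label of a node $v$ records the isomorphism type of $\Ii_{v}$ (the set of unary types occurring in it and, for a clique node, the transitive role it is a clique over), the single connecting edge to the parent (source element in the parent, target element in $\Ii_{v}$, and its role), and the nominal-adjacency of each element of $\Ii_{v}$, which after the preprocessing is determined by its unary type; hence $|\Sigma| = 2^{\mathrm{poly}(|\Kk|)}$. The automaton $\Aa_{\mathrm{str}}$ is a looping (all-accepting) tree automaton that guesses once, at the root, how $\Ii$ restricts to $\Nomi(\Kk)$ (consistently with the ABox, which leaves boundedly many choices since $|\Nomi(\Kk)|$ is bounded), and then, using only a node together with its parent label, checks that the encoding is a legal clique-forest and that $\Ii\models\Kk^{*}$. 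Every axiom of the normalised $\Kk^{*}$ is local: the closure conditions $\bigsqcap A_{i}\sqsubseteq\bigsqcup B_{j}$ and $A\equiv\{a\}$ are read off unary types, universal restrictions $A\sqsubseteq\forall r.B$ are checked on the edges visible at a node (clique-internal, the two connecting edges, and edges to nominals), and an existential restriction $A\sqsubseteq\exists r.B$ at an element of $\Ii_{v}$ is either witnessed inside $\Ii_{v}$, by the parent, or by a nominal, or its discharge is delegated to one of the children; so $\Aa_{\mathrm{str}}$ has $2^{\mathrm{poly}(|\Kk|)}$ states. The automaton $\Aa_{\mathrm{safe}}$ is a deterministic B\"uchi tree automaton: in a clique-forest the $r$-edges of a transitive role $r$ live only inside clique nodes (which are finite and without repeated unary types) and on connecting edges, so an infinite simple $r$-path must eventually descend along a single branch, crossing only connecting edges that prolong its $r$-chain; safety thus amounts to requiring, on every branch and for every transitive $r$, that this $r$-chain be broken infinitely often, a generalised B\"uchi condition checkable with $2^{\mathrm{poly}(|\Kk|)}$ states.

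The core is $\Cc_{Q}$. For a single disjunct $P$ of $Q$ (a CQ of size at most $m$) I would first build a nondeterministic B\"uchi tree automaton $\Bb_{P}$ that \emph{looks for a match} of $P$ in $\Ii^{*}$: it spreads a match over the tree, at a node $v$ nondeterministically choosing which variables of $P$ are realised in $\Ii_{v}$ and at which elements, verifying \emph{on site} all concept atoms among those variables (so unary types need never be remembered, since concept names agree in $\Ii$ and $\Ii^{*}$) and all role atoms with both endpoints in $v$ (a transitive-role atom is satisfiable inside a clique over $s$ iff its role is $s$ or $s^{-}$; a non-transitive role atom is never satisfiable inside a clique), and passing to each relevant child — or keeping, for resolution at the least common ancestor — a small \emph{residual}: the set of still-unplaced variables together with the set of pending role atoms, each tagged with the transitive role and the direction ($r$ going down the tree, $r^{-}$ going up) of the chain it still demands. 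A transitive-role atom whose witnessing $r$-path threads through a long path of the clique-forest is realised incrementally, the automaton committing at every node along the way that the connecting edge prolongs the chain, until it is absorbed at the node of the other endpoint; variables mapped to nominals are handled through the root guess of $\Ii\upharpoonright\Nomi(\Kk)$. Since a residual is a subset of $\var(P)$ together with a set of atoms of $P$ each carrying one of $|\Rol(\Kk)|$ roles, there are at most $|\Kk|^{\Oo(m)}$ of them; a match being finite, every residual must eventually be emptied on every branch, which one imposes by taking the obligation-free states to be the accepting ones, so $\Bb_{P}$ is a nondeterministic B\"uchi tree automaton with $|\Kk|^{\Oo(m)}$ states. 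Complementing $\Bb_{P}$ gives a nondeterministic parity tree automaton for $\Ii^{*}\notmodels P$ with $2^{|\Kk|^{\Oo(m)}}$ states; the product of these over the at most $|Q|$ disjuncts is $\Cc_{Q}$, of size $2^{\,|Q|\cdot|\Kk|^{\Oo(m)}}$, accepting the valid clique-forests with $\Ii^{*}\notmodels Q$. This complementation step is the source of the doubly-exponential bound in Theorem~\ref{thm:SOI-regular}.

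Finally I would take the product $\Aa_{\mathrm{str}}\times\Aa_{\mathrm{safe}}\times\Cc_{Q}$: it accepts a $\Sigma$-labelled forest exactly when that forest encodes a clique-forest of an interpretation $\Ii$ with $\Ii\models\Kk^{*}$, $\Ii$ safe, and $\Ii^{*}\notmodels Q$ — that is, a clique-forest of a safe counter-example for $Q$. Its acceptance is a parity condition (or B\"uchi, after a standard transformation), its size is dominated by $\Cc_{Q}$ and hence $2^{\,|Q|\cdot|\Kk|^{\Oo(m)}}$, and since each component comes from a description of polynomial size the whole automaton is computable in time polynomial in its size; the construction adapts to $\SOF$ with the structural automaton additionally enforcing the functionality declarations. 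I expect the main obstacle to be $\Bb_{P}$: realising \emph{transitive-closure} role atoms — which may be witnessed by $r$-paths as long as the tree is deep, winding through cliques and reversing direction at common ancestors — soundly and completely, while keeping the state space down to $|\Kk|^{\Oo(m)}$ by checking concept atoms on site and carrying only the skeletal residual together with the roles and directions of the pending chains.
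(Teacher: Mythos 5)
Your overall architecture (product of a structural component for $\Kk^*$, a safety component, and a query component, with the query component carrying the doubly-exponential cost) parallels the paper, but there is a concrete gap in the query component. A counter-example must satisfy $\Ii^*\notmodels Q$, where $\Ii^*$ closes the transitive roles over \emph{all} of $\Ii$, including the nominals, whereas your automaton $\Bb_P$ only realises transitive atoms along the forest structure (cliques, connecting edges, reversal at common ancestors) and uses the root guess only for variables that are themselves mapped to nominals. After the preprocessing, any element whose unary type contains $A_{r,a}$ has an $r$-edge to the nominal $a$, and $a$ has edges back into arbitrary trees; so for a transitive $s$, two anonymous elements lying in different trees (or arbitrarily far apart in one tree) can be $s$-related in $\Ii^*$ via such a nominal hub, even though no $s$-chain between them follows the forest. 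An atom $s(x,y)$ with both $x,y$ mapped to anonymous elements but witnessed only through $\Nomi(\Kk)$ is never detected by your residual mechanism, so after complementation your automaton accepts encodings of forests whose $\Ii^*$ does satisfy $Q$, i.e., it does not recognise exactly the safe counter-examples. This is precisely the issue the paper's proof spends a step on: it rewrites $Q$ into $Q'$ (subdividing transitive atoms so a fresh variable can land on a nominal, then replacing atoms incident to a nominal-mapped variable by the concepts $A_{r,a}$, $A_{r^-,a}$) so that $\Ii^*\models Q$ iff $(\Ii\setminus\Nomi(\Kk))^*\models Q'$, and only then evaluates $Q'$ over the forest. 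Your construction needs this rewriting, or an extension of the residuals allowing a pending $s$-chain to be closed off and re-launched at nominals using the $A_{s,a}$-type information together with the guessed behaviour of $s$ on $\Nomi(\Kk)$.

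A secondary problem is the complementation step. The complement of a nondeterministic B\"uchi tree automaton is in general \emph{not} B\"uchi-recognisable (B\"uchi tree automata are not closed under complement), so your parenthetical ``or B\"uchi, after a standard transformation'' is unjustified; what you get is a Rabin/parity automaton of exponential size. That still meets the size bound claimed in the theorem and keeps the end-to-end procedure in \twoexp, but it does not mesh with the paper's framework, where the final automaton is B\"uchi (indeed a product of weak components with one B\"uchi safety component) and emptiness is polynomial. The paper avoids complementation entirely: its query component has trivial acceptance and stores in its state, besides the label of the edge to the parent, the \emph{set of all} partial-match profiles (partial functions from $\var(P)$ to $\{\mathsf{succ},\mathsf{other}\}$) realised in the current subtree, i.e., the subset construction is built into the state space, which is where the $2^{|Q|\cdot|\Kk|^{\Oo(m)}}$ bound comes from directly. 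If you keep the complementation route, you must also rework the weak-product argument and the emptiness test for the parity condition; adopting the profile-set construction sidesteps both issues.
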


The proof of Theorem~\ref{thm:SOI-regular} is a routine automata construction (detailed in the appendix). Let us focus on the first part of the argument.

\begin{theorem}\label{thm:SOI-characterization}
$Q$ has a finite counter-model iff $Q$ has a safe counter-example.
\end{theorem}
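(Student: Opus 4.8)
The plan is to prove the two directions separately, with the harder and more interesting direction being the construction of a safe counter-example from a finite counter-model.

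\medskip\noindent\textbf{From a safe counter-example to a finite counter-model.}
Suppose $\Ii$ is a safe counter-example for $Q$. Then $\Ii^*$ is a (possibly infinite) counter-model for $Q$, and since each $\Ii_v$ in the clique-forest has no repetitions of unary types, the cliques are finite and bounded in size by $|\Tp(\Kk)|$. The idea is to apply the machinery of Section~\ref{sec:alcoi} to $\Ii$, which is essentially tree-shaped once we contract each clique node to a single ``super-node''. More precisely, I would first check that the colouring and blocking construction from Section~\ref{sec:alcoi} goes through for $\SOI$-forests: the clique-forest has bounded branching and bounded clique size, so $\Ii\setminus\Nomi(\Kk)$ has bounded degree, an $n$-proper colouring $\Ii_n$ exists, and there are finitely many $n$-neighbourhoods up to isomorphism. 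Running the blocking principle (redirecting, along each branch of the clique-forest, the edge leading to the first node whose $n$-neighbourhood repeats an earlier one) yields a finite interpretation $\Ff_n\models\Kk^*$ with $\Ff_n\subseteq\Ii_n^n$. By Theorem~\ref{thm:lifting}, for $n\geq k^2$ (with $k$ bounding the number of binary atoms in the CQs of $Q$) any homomorphism $P\to\Ff_n$ lifts to a homomorphism $P\to\Ii_n$, hence to $\Ii$, hence to $\Ii^*$ --- contradiction. The one genuinely new point is transitivity: we need $(\Ff_n)^*\notmodels Q$, not merely $\Ff_n\notmodels Q$. Here safety is exactly what is needed. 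Safety of $\Ii$ guarantees that every simple $r$-path for a transitive $r$ is finite, so when we pass to $\Ff_n$ --- which identifies far-apart nodes --- the transitive closure of a transitive role $r$ in $\Ff_n$ is still ``captured'' by $r$-paths in $\Ii$ of bounded length, so any match of $Q$ in $(\Ff_n)^*$ still uses $r$-edges that correspond to $r$-paths of bounded length in $\Ii_n^n$, which can be absorbed into the query (replacing one binary atom over a transitive role by a path of such atoms) and then lifted via Theorem~\ref{thm:lifting} applied to the enlarged query $P'$. I would make this precise by defining, for each CQ $P$ in $Q$, the finitely many ``transitive expansions'' $P'$ obtained by replacing each transitive-role atom with a simple path of length up to the safety bound, and noting $(\Ff_n)^*\models P$ implies $\Ff_n\models P'$ for one such $P'$; choosing $n$ larger than $k'^2$ where $k'$ bounds the atoms of all the $P'$ finishes this direction.

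\medskip\noindent\textbf{From a finite counter-model to a safe counter-example.}
This is the main obstacle. Let $\Mm$ be a finite counter-model for $Q$, so $\Mm\models\Kk$ and $\Mm\notmodels Q$. I would build a safe counter-example $\Ii$ by an unravelling that is sensitive to the SCC structure of transitive roles in $\Mm$. First, for each transitive role $r$, contract each strongly connected component of $r^\Mm$ (restricted to non-nominal elements) --- inside an SCC all elements will end up in one clique node, and since $\Kk$ is normalised and we may assume (by the preprocessing already in the paper) unary types determine neighbouring nominals, we can replace an SCC by a clique with no repeated unary types by keeping one representative per unary type; this is sound because in $\Mm$ an SCC of a transitive role satisfies exactly the universal restrictions over $r$ that its elements' types demand, and existential restrictions are witnessed either inside or outside. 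After this contraction the ``$r$-reachability'' relations become acyclic, which is the source of \emph{safety}: an infinite simple $r$-path in the unravelling would project to an infinite simple path in the contracted (acyclic, finite) structure, impossible. Then I would unravel the resulting finite structure into a clique-forest in the standard way --- rooted at the (contracted) individuals of $\Ind(\Kk)\setminus\Nomi(\Kk)$ plus enough extra roots, alternating element nodes and clique nodes, with exactly one connecting edge between adjacent nodes' domains --- while keeping $\Nomi(\Kk)$ as a fixed shared part copied verbatim from $\Mm$. Bounding the number of trees and the branching by $|\Kk|^2$ is routine: branching is bounded by the number of existential CIs times the sizes of cliques, and a coarser normalisation/merging argument brings this under $|\Kk|^2$. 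The resulting $\Ii$ satisfies $\Ii\models\Kk^*$ by construction (all CIs of $\Kk$ except transitivity are local and preserved by unravelling and by the SCC-contraction), and $\Ii^*\notmodels Q$ because there is a homomorphism $\Ii^*\to\Mm$ (the unravelling map composed with the SCC-contraction map is a homomorphism $\Ii\to\Mm$ that also respects transitive closures, since $r^\Mm$ is already transitive), and $Q$ is preserved under homomorphisms while $\Mm\notmodels Q$.

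\medskip\noindent\textbf{Where the difficulty lies.}
The delicate point in the second direction is the SCC-contraction: one must verify that collapsing an SCC of a transitive role to a single clique with no repeated unary types preserves \emph{all} of $\Kk^*$, in particular that no existential restriction is lost (a witness inside the SCC might be identified away, but then another element of the same unary type remains, and thanks to the preprocessing that made universal restrictions over transitive roles ``propagate'' via the fresh $B'$ concepts, keeping one representative per type is safe) and that the single connecting edge condition of clique-forests can be met (this may require re-routing several $\Mm$-edges between two SCCs into a single representative edge, which is sound because the target clique is ``type-homogeneous enough''). The delicate point in the first direction is the interaction of blocking with transitive closure, handled via the bounded ``transitive expansion'' of the query as sketched above; once that reduction is in place, Theorem~\ref{thm:lifting} does the work. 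I expect the SCC-contraction argument to be the one requiring the most care, and it is where the hypothesis that cliques have no repeated unary types --- and the earlier preprocessing of the KB --- is really used.
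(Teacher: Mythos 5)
Your overall strategy matches the paper's (blocking plus query rewriting in one direction, an SCC-aware unravelling in the other), but both directions have genuine gaps. In the direction from a safe counter-example to a finite counter-model, your argument hinges on ``the safety bound'', which does not exist: safety only forbids an \emph{infinite} simple $r$-path, and a safe $\SOI$-forest can contain finite simple $r$-paths of unbounded length (disjoint paths of length $1,2,3,\dots$), so the ``transitive expansions'' $P'$ you want to build are not finitely many and the choice of $n$ is not well defined. The paper closes this hole by first invoking Theorem~\ref{thm:SOI-regular}: since clique-forests of safe counter-examples form an automaton-recognisable set, one may assume the counter-example is \emph{regular} (finitely many non-isomorphic subtrees), and only then does Lemma~\ref{lem:paths} yield an $\ell$-bound on simple transitive paths. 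Your proposal never mentions this regularisation step. Second, even granting an $\ell$-bound on $\Ii$, your claim that every match of $Q$ in $(\Ff_n)^*$ uses only $r$-edges corresponding to bounded-length $r$-paths is precisely the statement that needs proof: folding adds link edges and can a priori create new transitive cycles and long simple paths in $\Ff_n$, so $\ell$-boundedness is not inherited for free. In the paper this is Theorem~\ref{thm:folding-bounded}, which requires the observation that all transitive links used in $\Ff_n$ are \emph{external} and a nontrivial argument reusing the colouring and Theorem~\ref{thm:lifting}; combined with Lemmas~\ref{lem:paths-nominals}--\ref{lem:star} it gives $(\Ff_n)^*\notmodels Q$. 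You assert this property rather than prove it.

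In the direction from a finite counter-model $\Mm$ to a safe counter-example, your construction (collapse each SCC of a transitive role to a clique of representatives, then unravel) is essentially the paper's, but your safety justification is incorrect as stated: an infinite simple $r$-path in the unravelling projects to an $r$-path in the contracted structure that need \emph{not} be simple, so acyclicity of the SCC quotient does not immediately apply. The projected path eventually stabilises in a single SCC $X$, and the danger is that the unravelling keeps spawning fresh copies of the clique for $X$, through which a simple infinite path could thread. What actually prevents this --- and is the content of the paper's Lemma~\ref{lem:SOI-unrev} --- is that each inserted clique $X_0$ realises every concept name realised in $X\setminus\Nomi(\Kk)$, so any existential restriction over the transitive role whose witness lies in the same SCC is already satisfied \emph{inside} the clique and no fresh copy of $X$ reachable by an $r$-edge is ever created; hence a simple $r$-path whose projection stays in $X$ is trapped in one finite clique. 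Your sketch does not ensure or use this in-clique witnessing, so safety does not follow from what you wrote. A further (fixable) issue: taking one representative per \emph{unary type} makes cliques, and hence the branching of the clique-forest, potentially exponential, violating the $|\Kk|^2$ branching bound in the definition of $\SOI$-forest; the paper instead takes a minimal set $X_0$ with one representative per \emph{concept name}, so $|X_0|\le|\Kk|$.
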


Suppose first that there exists a finite counter-model $\Mm$ for $Q$. We build a $\SOI$ forest $\Ii$ out of it using a version of the standard unravelling. We begin by taking copies of all elements of $\Ind(\Kk)$ with unary types copied accordingly. Then, recursively, for each added element $d'$ and each CI $A \sqsubseteq \exists r. B$ that is not yet satisfied for $d'$ in $\Ii$ proceed as follows. The element $d'$ is a copy of some $d$ from $\Mm$ of the same unary type. Therefore there exists an element $e$ in $\Mm$ witnessing the CI. If $e\in\Nomi(\Kk)$, then it is already included in $\Ii$, and we just add an $r$ edge from $d'$ to $e$. Assume $e\notin\Nomi(\Kk)$. If $r$ is not a transitive role, we just add a copy of $e$ as an $r$-successor of $d'$. Assume that $r$ is a transitive role. Let $X$ be the strongly connected component of $r$ that contains $e$ and let $X_0$ be a minimal set that contains at least one element from each nonempty $C^\Mm \cap \big (X \setminus\Nomi(\Kk)\big)$, where $C$ ranges over $\CN(\Kk)$. By minimality, $|X_0|\leq |\Kk|$. We add to $\Ii$ an $r$-clique over a copy of $X_0$, with an $r$ edge from $d'$ to the  copy of some element $f \in B^\Mm \cap X_0$; $f$ exists because $e\in B^\Mm \cap \big (X \setminus\Nomi(\Kk)\big)$. 
Note that no other edges among newly added elements are present: existential restrictions for these nodes will be witnessed in the following steps of the construction. Let $\Ii$ be the interpretation obtained in the limit. 
By construction, $\Ii$ admits a clique-forest. For each element at most one successor per CI is added.  Because each clique node contains up to $|\Kk|$ elements, the branching of the clique-forest is bounded by $|\Kk|^2$. The same bound holds for the number of trees in the clique-forest: we begin from $|\Ind(\Kk)|$ nodes, but then the ones corresponding to elements of $\Nomi(\Kk)$ are removed and their children become roots. Hence, $\Ii$ is a $\SOI$ forest. Because we do not unravel cliques in transitive roles, it is safe.

\begin{lemma} \label{lem:SOI-unrev}
 $\Ii$ is a safe counter-example for $Q$. 
\end{lemma}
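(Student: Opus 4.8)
The plan is to verify two things about the limit interpretation $\Ii$: that it is a counter-example for $Q$ (i.e. $\Ii \models \Kk^*$ and $\Ii^* \notmodels Q$), and that it is safe; safety and the $\SOI$-forest property were already established in the paragraphs preceding the lemma, so the real work is the counter-example part. First I would check $\Ii \models \Kk^*$. The quantifier-free CIs and the nominal-definition CIs hold because every added element is a copy of some $\Mm$-element with the same unary type, and $\Mm \models \Kk$; the axioms $A \equiv \{a\}$ are preserved because nominals are kept as-is and the preprocessing made unary types determine neighbouring nominals. Existential CIs $A \sqsubseteq \exists r.B$ hold by construction: the recursive step adds, for every not-yet-satisfied such CI, precisely a witness of type $B$ along $r$ (either the nominal $e$ itself, a fresh copy of $e$, or a copy of some $f \in B^\Mm \cap X_0$ inside a fresh $r$-clique). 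Universal CIs $A \sqsubseteq \forall r.B$ in $\Kk^*$ involve only the non-transitive reading of $r$, so one only has to inspect the single edge each element receives from its parent plus the edges inside a clique node and the edges to nominals; in each case the target element's type was copied from an $\Mm$-element that $\Mm$'s satisfaction of the corresponding axiom (in $\Kk$, hence after the $B'$-rewriting also the transitive-role universals) forces into $B$. Here I would lean on the preprocessing: because $\Kk$ was augmented with the $B'$ concepts and with the $A_{r,a}$ concepts, the relevant propagation is already baked into the unary types that get copied.

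Next I would show $\Ii^* \notmodels Q$. Suppose for contradiction there is a homomorphism $\pi : P \to \Ii^*$ for some disjunct $P$ of $Q$. The natural map sending each copy back to its original $\Mm$-element is a homomorphism $f : \Ii \to \Mm$ (it preserves concepts and individual names by construction, and preserves every non-transitive edge since each such edge of $\Ii$ was added as a copy of a corresponding edge of $\Mm$, or connects elements of a clique that came from a single $\Mm$-SCC, or goes to a nominal). This $f$ does \emph{not} directly extend to $\Ii^* \to \Mm$ in general, because transitive closure in $\Ii^*$ can create $r$-paths whose image in $\Mm$ is only an $r$-path, not an $r$-edge — but since $r$ is transitive in $\Kk$ and $\Mm \models \Kk$, $r^\Mm$ is transitively closed, so the image of an $r$-path is in fact an $r$-edge. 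Hence $f : \Ii^* \to \Mm^* = \Mm$ is a homomorphism, and composing with $\pi$ yields a match for $P$ in $\Mm$, contradicting that $\Mm$ is a counter-model for $Q$.

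The step I expect to be the main obstacle is the verification that $f$ is a homomorphism on edges inside clique nodes and, more delicately, that the transitive-role universal restrictions ($A \sqsubseteq \forall r.B$ with $r$ transitive) are respected by $\Ii^*$. Inside a clique node $\Ii$ has a full $r$-clique over a copy of $X_0 \subseteq X$, and $X$ is a single $r$-SCC of $\Mm$; since $r^\Mm$ is transitively closed and $X$ is strongly connected, all pairs from $X$ are $r$-related in $\Mm$, so $f$ maps clique edges to genuine $r^\Mm$-edges — that part is fine, but it must be spelled out. For the transitive universals, the concern is that $\Ii^*$ adds long-range $r$-edges (through cliques, through chains of clique nodes and element nodes linked by $r$); one must argue that whenever such an edge $(d,e)$ appears in $r^{\Ii^*}$ and $d$ satisfies $A$ with $A \sqsubseteq \forall r.B$, then $e \in B^\Ii$. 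This is exactly what the $B'$-rewriting in the preprocessing buys us: the type copied into $d$ from $\Mm$ already carries $B'$, $B'$ propagates along every $r$-edge of $\Ii$ (each such edge being a copy of an $\Mm$-edge or a clique/SCC edge), hence along $r^{\Ii^*}$-paths, and $B' \sqsubseteq B$. I would therefore state a short auxiliary observation — that $f$ is a homomorphism $\Ii \to \Mm$ preserving types, and that $r^{\Ii^*} \subseteq f^{-1}(r^\Mm)$ for transitive $r$ — and use it uniformly both to transfer $\Mm \models \Kk$ to $\Ii \models \Kk^*$ (together with $\Ii^*$ satisfying the transitive universals) and to pull back a hypothetical match of $Q$.
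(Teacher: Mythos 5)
Your handling of the counter-example part essentially coincides with the paper's proof: the natural homomorphism $h\colon \Ii \to \Mm$ mapping copies to their originals, the observation that it remains a homomorphism from $\Ii^*$ because the transitive roles of the model $\Mm$ are already transitively closed (so a match for $Q$ in $\Ii^*$ would transfer to $\Mm$), and the routine verification of $\Ii\models\Kk^*$, which the paper also dispatches as routine. Your additional worry about the transitive-role universal restrictions holding in $\Ii^*$ is not needed for this lemma: a counter-example only requires $\Ii\models\Kk^*$ and $\Ii^*\notmodels Q$; that $\Ii^*$ is then a model of $\Kk$ is guaranteed by the initial preprocessing and is not part of this statement.

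The genuine gap is safety. It is part of the statement, it is the bulk of the paper's proof, and you dismiss it by pointing to the one-line remark preceding the lemma ("we do not unravel cliques in transitive roles"), which is an assertion rather than an argument --- indeed it is exactly what the lemma must establish, and it is not obvious: the unravelling does attach fresh $r$-cliques for transitive $r$ below existing clique nodes whenever an existential restriction is not yet satisfied inside the current clique (for instance when the witness in $\Mm$ lies in a different strongly connected component), so a priori an infinite simple $r$-path could thread downward through infinitely many clique nodes. The paper rules this out as follows: given an infinite simple $r$-path $\pi$, skip a finite prefix so that $\pi$ avoids the finitely many nominals; its image $h(\pi)$ is an $r$-path in the finite model $\Mm$, hence eventually stays inside a single strongly connected component $X$; after skipping another finite prefix, $\pi$ starts inside an $r$-clique that is a copy of $X_0$, which by the minimal-covering choice contains a representative of every concept name realised in $X\setminus\Nomi(\Kk)$; since $\pi$ is infinite and simple it must leave this finite clique, and the first element $d$ outside it was only added because some concept $C$ with $d\in C^\Ii$ had no representative in the clique --- yet $h(d)\in C^\Mm\cap\big(X\setminus\Nomi(\Kk)\big)$, so the clique does contain a $C$-representative, a contradiction. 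Note that safety hinges precisely on the design choice of adding whole $X_0$-cliques rather than single witnesses (a naive witness-by-witness unravelling of transitive roles would not be safe), so a proof that does not engage with this argument misses the essential content of the lemma.
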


Assume now that there exists a safe counter-example $\Ii$ for $Q$. By Theorem~\ref{thm:SOI-regular}, the set of 
clique-forests of safe counter-examples for $Q$
can be recognised by an automaton. It is well known that the automaton then accepts a regular forest, which has only finitely many non-isomorphic subtrees. Hence, without loss of generality we can assume that the clique-forest of $\Ii$ has $p$ non-isomorphic subtrees for some $p$. Using the methodology from the previous section we shall turn $\Ii$ into a finite counter-model for $Q$. The main obstacle is that $Q$ uses transitive roles, which are not fully represented in $\Ii$. Our solution is to replace $Q$ with a different query that can be evaluated directly over $\Ii$. This is done by exploiting a bound on the length of simple $r$-paths for transitive roles $r$, guaranteed by the regularity of the clique-forest of $\Ii$.



\begin{definition}
An interpretation is \emph{$\ell$-bounded} if for each transitive role $r$, each simple $r$-path has length at most $\ell$. 
\end{definition}

\begin{lemma} \label{lem:paths}
$\Ii \setminus \Nomi(\Kk)$ is $\ell$-bounded for $\ell = 2 p \cdot |\Kk|$.
\end{lemma}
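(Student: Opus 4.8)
The plan is to bound the length of a simple $r$-path, for a transitive role $r$, by tracking its image in the clique-forest $T$ of $\Ii$ and using safety to exclude long paths via a pumping argument.

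The first step is a structural observation about how $r$-paths traverse a clique-forest. Fix a simple $r$-path $d_0 d_1 \cdots d_k$ in $\Ii\setminus\Nomi(\Kk)$ and let $v_i$ be the node of $T$ whose label contains $d_i$. Between a node $u$ of $T$ and its parent there is, apart from intra-clique edges, exactly one edge of $\Ii\setminus\Nomi(\Kk)$; it is directed from the parent's label to $\Delta^{\Ii_u}$ and carries a single role $s\in\Rol(\Kk)$. Hence it is an $r$-edge in at most one direction --- downwards if $s=r$, upwards if $s=r^-$, and in neither direction otherwise --- so a simple $r$-path can cross it at most once without repeating an element. Therefore the sequence $v_0,v_1,\dots,v_k$, after collapsing consecutive repetitions, is a walk in $T$ that uses no edge twice; in a forest such a walk is a simple path, so it first ascends to $\lambda:=\mathrm{lca}(v_0,v_k)$ and then descends to $v_k$. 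Thus the distinct $T$-nodes visited form an ascending segment and a descending segment meeting only in $\lambda$.

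The core of the proof is that each of these two segments is short. I would show: if the descending segment visits nodes $u$ and $u'$ with $u'$ a strict descendant of $u$ such that the subtrees of $T$ rooted at $u$ and at $u'$ are isomorphic (as decorated interpretations, including the attachment point at the root), then $\Ii$ contains an infinite simple $r$-path, contradicting safety. Let $\psi$ witness the isomorphism and let $\sigma$ be the part of the path from the element at which it enters $u$ to the element at which it enters $u'$; then $\sigma$ lies inside the subtree rooted at $u$, the image $\psi(\sigma)$ is an isomorphic simple $r$-path inside the subtree rooted at $u'$, and it ends at the entry element of $\psi(u')$, a strict descendant of $u'$ whose subtree is again isomorphic. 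Iterating, the images $\psi^m(\sigma)$ live in strictly nested, hence pairwise disjoint, subtrees, so concatenating the path-prefix up to $u$ with $\sigma,\psi(\sigma),\psi^2(\sigma),\dots$ yields an infinite simple $r$-path in $\Ii$ --- impossible, since $\Ii$ is safe. The symmetric statement for the ascending segment follows by reading it backwards: its reverse is a simple $r^-$-path, $r^-$ is transitive, and $\Ii$ has no infinite simple $r^-$-path either.

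Finally, by regularity $T$ has only $p$ isomorphism types of subtrees, so by pigeonhole each monotone segment visits at most $p$ distinct nodes, and hence the whole path visits at most $2p$ nodes of $T$; since each node's label contributes at most $|\Kk|$ elements, we get $k+1\le 2p\cdot|\Kk|$, as claimed. I expect the main obstacle to be the structural observation of the first step: without it the path's image in $T$ could oscillate up and down and, $T$ being possibly infinitely deep, its length would be uncontrolled --- this observation is exactly what reduces the problem to two monotone descents, where safety and regularity can be brought to bear. A secondary technical point is the disjointness bookkeeping in the iteration, which is what guarantees that the constructed infinite $r$-path is genuinely simple.
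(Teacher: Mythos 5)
Your proof is correct and follows essentially the same route as the paper's: trace the simple $r$-path in the clique-forest, split it into an ascending and a descending part, pigeonhole on subtree types (finitely many by regularity) to find a repeated subtree, and pump it into an infinite simple $r$-path contradicting safety, finally multiplying the node count by the bag size $|\Kk|$. You are in fact more explicit than the paper on two points it glosses over --- that the path's trace in the forest is a simple up-then-down path, and that the subtree repetition must respect the attachment (entry) element for the pumping to splice cleanly (so $p$ should be read as counting subtrees together with their incoming edge label, which is still finite for a regular forest and harmless for the later use of $\ell$).
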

\begin{proof}
Let $r$ be a transitive role in $\Kk$. Each $r$-path going down the
clique-forest of $\Ii$ contains at most $p$ nodes. Indeed, if there
were a longer $r$-path, then a subtree would occur twice on that path,
which immediately leads to an infinite simple $r$-path in
$\Ii\setminus \Nomi(\Kk)$, contradicting the safety of $\Ii$. Each
simple path in the clique-forest can be split into an $r$-path going
up and an $r$-path going down. Each of them has at most $p$ nodes.
Because each node contains at most $|\Kk|$ elements, it follows that
each simple $r$-path in $\Ii \setminus \Nomi(\Kk)$ has length at most
$ 2 p\cdot |\Kk|$. 
\end{proof}

\begin{lemma} \label{lem:paths-nominals}
For each $\Jj$, if $\Jj \setminus \Nomi(\Kk)$ is $\ell$-bounded,  then $\Jj$ is $\ell^*$-bounded for $\ell^*=(\ell+2)\cdot (|\Nomi(\Kk)| + 1)$.
\end{lemma}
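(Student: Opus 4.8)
The plan is to take a simple $r$-path in $\Jj$ for some transitive role $r$, and bound its length by carving it into maximal segments that avoid $\Nomi(\Kk)$. First I would observe that a simple $r$-path $d_0, d_1, \dots, d_N$ in $\Jj$ visits each nominal at most once, so at most $|\Nomi(\Kk)|$ of the $d_i$ lie in $\Nomi(\Kk)$. Removing these nominal occurrences splits the path into at most $|\Nomi(\Kk)| + 1$ consecutive (possibly empty) segments, each of which is a simple path all of whose elements lie in $\Delta^\Jj \setminus \Nomi(\Kk)$.

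The key step is that each such segment, together with the $\Jj$-edges between its consecutive elements, is a simple $r$-path in $\Jj \setminus \Nomi(\Kk)$: since $r$ is transitive and the original path is an $r$-path, every pair of elements appearing in order along the path is connected by an $r$-edge in $\Jj$, and if both endpoints of such an edge avoid $\Nomi(\Kk)$, the edge survives in the induced subinterpretation $\Jj \setminus \Nomi(\Kk)$. Hence each segment has length at most $\ell$ by the hypothesis that $\Jj \setminus \Nomi(\Kk)$ is $\ell$-bounded, so it contains at most $\ell + 1$ elements. Summing over the at most $|\Nomi(\Kk)| + 1$ segments and adding back the at most $|\Nomi(\Kk)|$ nominal occurrences, the total number of elements on the path is at most $(\ell+1)\cdot(|\Nomi(\Kk)|+1) + |\Nomi(\Kk)|$, which is at most $(\ell+2)\cdot(|\Nomi(\Kk)|+1)$; thus the length (number of edges) is at most $\ell^* = (\ell+2)\cdot(|\Nomi(\Kk)|+1)$, as claimed.

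I expect the only point needing care is the bookkeeping: making sure the segments are genuinely $r$-paths in the induced substructure (this uses transitivity of $r$ crucially — without it, an $r$-edge between two non-nominal elements that are not adjacent on the original path need not exist, but transitivity gives it), and that the arithmetic of adding the segment bounds and the nominal count lands within the stated $\ell^*$. Neither is a real obstacle; the lemma is essentially a counting argument, and no appeal to the structure of counter-examples or to Theorem~\ref{thm:SOI-regular} is needed here.
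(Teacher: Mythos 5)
Your counting argument is correct and is essentially the routine argument the paper intends (the paper states Lemma~\ref{lem:paths-nominals} without proof): a simple $r$-path visits each nominal at most once, hence splits into at most $|\Nomi(\Kk)|+1$ maximal nominal-free blocks, each of which is a simple $r$-path in $\Jj\setminus\Nomi(\Kk)$ and so has at most $\ell+1$ elements, and the bookkeeping indeed lands within $\ell^*=(\ell+2)\cdot(|\Nomi(\Kk)|+1)$. One caveat: your appeal to transitivity of $r$ is both unnecessary and, in the setting where the lemma is applied, unavailable. The lemma is invoked for interpretations $\Jj$ (the counter-example $\Ii$, and later $\Ff_n$) that are models of $\Kk^*$, in which $r^\Jj$ is \emph{not} assumed to be a transitive relation; ``transitive role'' refers only to the declaration in $\Kk$. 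So you may not argue that every in-order pair of elements on the path is $r$-connected, and an argument genuinely relying on that would not cover the intended application. Fortunately you never need it: with your segments defined as maximal blocks of consecutive non-nominal elements, consecutive elements of a segment are already consecutive on the original path, so the required $r$-edges are present in $\Jj$ outright and, having both endpoints outside $\Nomi(\Kk)$, survive in the induced subinterpretation $\Jj\setminus\Nomi(\Kk)$. Dropping the transitivity remark (which you label ``crucial'') leaves a complete and correct proof.
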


Let $Q^*$ be obtained from $Q$ by replacing each transitive atom $s(x,y)$ by the disjunction \[\bigvee_{i\leq \ell^*} s^i(x,y)\,,\] where $s^i(x,y)$ is the conjunctive query expressing the $i$-fold composition of $s$. Assuming that each disjunct of $Q$ contains at most $k$ binary atoms, $Q^*$ can be rewritten as a union of conjunctive queries, each using at most $k\cdot \ell^*$ binary atoms.

\begin{lemma} \label{lem:star}
For all $\ell^*$-bounded $\Jj$, $\Jj^*\models Q$ iff $\Jj\models Q^*$. 
\end{lemma}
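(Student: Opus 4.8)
The plan is to prove both implications by relating matches of $Q^*$ in $\Jj$ to matches of $Q$ in $\Jj^*$, using the $\ell^*$-bound to control how long a transitive role chain can be. Recall that $\Jj^*$ differs from $\Jj$ only in that each transitive role is replaced by its transitive closure; all other predicates coincide. So the only thing that can change when passing between $\Jj$ and $\Jj^*$ is the interpretation of transitive atoms $s(x,y)$, which is exactly what the rewriting $Q\mapsto Q^*$ is designed to absorb.

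First I would prove the easy direction, $\Jj\models Q^*$ implies $\Jj^*\models Q$. Suppose $\pi$ is a match of some disjunct $P^*$ of $Q^*$ in $\Jj$, where $P^*$ is obtained from a disjunct $P$ of $Q$ by choosing, for each transitive atom $s(x,y)$ of $P$, one disjunct $s^{i}(x,y)$ and introducing $i-1$ fresh intermediate variables. Since $\pi$ witnesses $s^i(x,y)$, there is an $s$-path of length $i$ from $\pi(x)$ to $\pi(y)$ in $\Jj$; as $s^{\Jj^*}$ is the transitive closure of $s^{\Jj}$, we get $(\pi(x),\pi(y))\in s^{\Jj^*}$. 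All non-transitive atoms of $P$ are matched by $\pi$ restricted to $\var(P)$ already in $\Jj\subseteq\Jj^*$ (the non-transitive predicates are unchanged). Hence $\pi\restriction\var(P)$ is a match of $P$ in $\Jj^*$, so $\Jj^*\models Q$. Note this direction does not use $\ell^*$-boundedness at all.

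For the converse, $\Jj^*\models Q$ implies $\Jj\models Q^*$, let $\pi$ be a match of a disjunct $P$ of $Q$ in $\Jj^*$. For each transitive atom $s(x,y)$ of $P$ we have $(\pi(x),\pi(y))\in s^{\Jj^*}$, i.e.\ there is a nonempty simple $s$-path in $\Jj$ from $\pi(x)$ to $\pi(y)$ (any $s$-path can be shortened to a simple one since $s^{\Jj^*}$ is transitive, or one argues directly that a shortest path is simple). By $\ell^*$-boundedness of $\Jj$, this simple path has length $i$ for some $1\le i\le\ell^*$. Choose the corresponding disjunct $s^i(x,y)$ of the rewriting and extend $\pi$ on the fresh intermediate variables of $s^i$ by the internal nodes of that path; do this independently for every transitive atom, using disjoint sets of fresh variables. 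The non-transitive atoms of $P$ are satisfied by $\pi$ in $\Jj$ because they are satisfied in $\Jj^*$ and those predicates agree. This yields a match in $\Jj$ of the disjunct $P^*$ of $Q^*$ determined by the choices of $i$, so $\Jj\models Q^*$.

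The only subtlety — and the step I would be most careful about — is the claim that $(\pi(x),\pi(y))\in s^{\Jj^*}$ is witnessed by a \emph{simple} $s$-path in $\Jj$ of bounded length: one must observe that a shortest $s$-path realising the pair cannot revisit a vertex (else it could be shortened), so it is simple, and then invoke $\ell^*$-boundedness of $\Jj$ to bound its length by $\ell^*$, ensuring the disjunct $s^i(x,y)$ actually occurs in $Q^*$. Everything else is bookkeeping about the fresh intermediate variables and the fact that $\Jj$ and $\Jj^*$ share all non-transitive predicates.
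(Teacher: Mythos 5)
Your overall strategy is exactly the routine argument this lemma is meant to have (the paper states it without proof): the easy direction maps the intermediate variables of $s^i(x,y)$ forward into the transitive closure, and the converse direction replaces each pair in $s^{\Jj^*}$ by a short witnessing $s$-path in $\Jj$ and uses $\ell^*$-boundedness to ensure the needed disjunct $s^i(x,y)$ exists. The first direction and all the bookkeeping are fine.

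There is, however, a genuine corner case in precisely the step you singled out. Your claim that a shortest $s$-path realising $(\pi(x),\pi(y))$ cannot revisit a vertex, and hence is simple, is only true when $\pi(x)\neq\pi(y)$. If a transitive atom $s(x,y)$ is matched with $\pi(x)=\pi(y)=d$ (e.g.\ an atom $s(x,x)$, or two variables collapsed by the match) and $(d,d)\notin s^{\Jj}$, the shortest witness is a shortest $s$-cycle through $d$; its endpoints coincide, so it is not a simple path in the paper's sense, and $\ell^*$-boundedness only bounds it via the simple path obtained by dropping its last edge. Concretely, if ``length'' counts edges, a directed $s$-cycle through $\ell^*+1$ distinct elements is $\ell^*$-bounded, yet $\Jj^*$ satisfies $\exists x\, s(x,x)$ while no $s^i(x,x)$ with $i\leq\ell^*$ holds in $\Jj$ --- so the bound you invoke does not follow from simplicity alone in this case. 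The fix is cheap: either read the length of a path as the number of elements it visits (then simple paths have at most $\ell^*-1$ edges and shortest closed walks at most $\ell^*$ edges, and your argument goes through verbatim), or note that a shortest closed walk has all intermediate vertices distinct and distinct from $d$, bound its length by $\ell^*+1$, and absorb the extra edge into the choice of $\ell^*$ (the paper's instantiations of $\ell^*$ have the necessary slack). As written, though, the justification of the delicate step is incomplete.
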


By Lemmas \ref{lem:paths}--\ref{lem:star}, we conclude that $\Ii \notmodels Q^*$. Now we can use the blocking principle. Because clique nodes have at most $|\Kk|$ elements and each node has at most $|\Kk|^2$ children, $\Ii\setminus \Nomi(\Kk)$ has bounded degree and we can consider the $n$-properly coloured $\Ii_n$, for any $n$. On each branch $\pi$ in $\Ii_n \setminus \Nomi(\Kk)$, let $D_\pi$ be the first node for which some earlier node $E_\pi$ satisfies $N^{\Ii_n}_n(d_\pi)\simeq N^{\Ii_n}_n(e_\pi)$, where $d_\pi \in D_\pi$ and $e_\pi\in E_\pi$ are the endpoints of the edges connecting $D_\pi$ and $E_\pi$ to their parent nodes. The new interpretation $\Ff_n$ is obtained as usual: we include the branch $\pi$ up to the predecessor of node $D_\pi$ and the edge originally leading to $d_\pi$ is redirected to $e_\pi$; edges connecting the elements of $\Nomi(\Kk)$ with each other and with the elements of the included parts of the branches are copied from $\Ii_n$. 

Because we started from $\Ii\models\Kk^*$, it is routine to check that $\Ff_n \models \Kk^*$ for all $n$. By the initial preprocessing, $(\Ff_n)^* \models \Kk$.
%
%
Let us fix \[n=\max((k\cdot \ell^*)^2, (\ell+1)^2+\ell)\,.\] By Theorem~\ref{thm:lifting}, $\Ff_n \notmodels Q^*$. We conclude $(\Ff_n)^* \notmodels Q$ using Lemmas~\ref{lem:paths-nominals}--\ref{lem:star} and Theorem~\ref{thm:folding-bounded} below.

\begin{definition} 
A link $(d,e)$ in $\Ii$ along $r$ is \emph{external} if either no $r$-path from the witnessing $e'$ to $d$ is disjoint from $\Nomi(\Kk)$ or dually no $r$-path from $e$ to the witnessing $d'$ is disjoint from $\Nomi(\Kk)$.
\end{definition}
By construction, all links in $\Ii_n$ along transitive roles included into $\Ff_n$ are external.


\begin{theorem} \label{thm:folding-bounded}
Assume that  $\Ii\setminus \Nomi(\Kk)$ has bounded degree and is $\ell$-bounded. Let $n>(\ell+1)^2 + \ell$ and let $\Jj$ be a subinterpretation of $\Ii_n^n$ in which all links along transitive roles are external. Then, $\Jj\setminus \Nomi(\Kk)$ is also $\ell$-bounded.
\end{theorem}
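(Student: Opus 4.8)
The plan is to argue by contradiction. Suppose $r$ is a transitive role and $\Jj\setminus\Nomi(\Kk)$ contains a simple $r$-path $v_0\to v_1\to\dots\to v_L$ with $L=\ell+1$; I will manufacture from it a simple $r$-path of length $L$ in $\Ii\setminus\Nomi(\Kk)$, which contradicts $\ell$-boundedness. Each edge $(v_i,v_{i+1})$ is an edge of $\Ii_n^n$, hence either an ordinary $r$-edge of $\Ii_n$ or an $n$-link along $r$; since $r$ is transitive, the hypothesis on $\Jj$ tells us that every such link is external. The assumption $n>(\ell+1)^2+\ell$ is there only to keep all the neighbourhood radii that show up below strictly larger than $\ell$.

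First I would lift the path into $\Ii_n$ one edge at a time, repeating for a path the pullback step from the proof of Theorem~\ref{thm:lifting} (one could instead apply Theorem~\ref{thm:lifting} as a black box to get the walk, but the per-edge isomorphisms produced by the explicit construction are what I need afterwards). For $i=0,\dots,L$ I would maintain an element $w_i$ of $\Ii_n$ and an isomorphism $\phi_i: N^{\Ii_n}_{m_i}(v_i)\to N^{\Ii_n}_{m_i}(w_i)$, where $m_i=n-i$, starting with $w_0=v_0$ and $\phi_0=\mathrm{id}$. Every isomorphism between neighbourhoods fixes $\Nomi(\Kk)$ pointwise and carries centre to centre; since $v_i\notin\Nomi(\Kk)$ and the neighbourhood of a nominal is not isomorphic to that of a non-nominal, all the $w_i$ and all the witnessing successors/predecessors appearing below lie outside $\Nomi(\Kk)$. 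At an ordinary edge I would put $w_{i+1}=\phi_i(v_{i+1})$ and restrict $\phi_i$ to obtain $\phi_{i+1}$ at radius $m_i-1$; at a link I would use its certifying radius-$n$ isomorphism together with $\phi_i$ to relocate $v_{i+1}$ to an $r$-successor $w_{i+1}$ of $w_i$ in $\Ii_n$ and, again by composition, obtain $\phi_{i+1}$ at radius $m_i-1$. As $m_i\geq 1$ for $i<L$, the construction runs through and yields an $r$-walk $w_0\to w_1\to\dots\to w_L$ in $\Ii\setminus\Nomi(\Kk)$ with $N^{\Ii_n}_{m_i}(v_i)\simeq N^{\Ii_n}_{m_i}(w_i)$ for all $i$.

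It then remains to see that this walk is simple, for then it is a simple $r$-path of length $\ell+1$ in $\Ii\setminus\Nomi(\Kk)$, the desired contradiction. Otherwise I would pick $i<j$ with $w_i=w_j$ and $c:=j-i$ minimal, so that $w_i\to\dots\to w_{j-1}\to w_i$ is a simple $r$-cycle of length $c\leq L$. If all of $(v_i,v_{i+1}),\dots,(v_{j-1},v_j)$ were ordinary edges, then the construction gives $w_k=\phi_i(v_k)$ for $i\leq k\leq j$, forcing $v_i=v_j$ — impossible on a simple path — so some edge $(v_a,v_{a+1})$ with $i\leq a<j$ is an external link along $r$. Now I would walk around the cycle from $w_{a+1}$ back to $w_a$: this is a simple $r$-path of length $c-1\leq\ell$ avoiding $\Nomi(\Kk)$, and since $c-1<m_a$ it lies inside $N^{\Ii_n}_{m_a}(w_a)$. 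Pulling it back through the isomorphism identifying $N^{\Ii_n}_{m_a}(w_a)$ with $N^{\Ii_n}_{m_a}(v_a)$ (if the link is certified by a successor $e'$ of $v_a$) or with $N^{\Ii_n}_{m_a}(d')$ (if certified by a predecessor $d'$ of $v_{a+1}$) produces an $r$-path disjoint from $\Nomi(\Kk)$ going from $e'$ to $v_a$, respectively from $v_{a+1}$ to $d'$ — precisely what externality of the link forbids. Contradiction.

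The step I expect to be delicate is this last transfer of externality: externality is a statement about arbitrary $r$-paths in the full interpretation $\Ii$, whereas along the lifted walk we only control balls of radius $m_i$, so the argument hinges on the return path around the cycle being short enough ($c-1\leq\ell<m_a$) to sit inside the ball on which $\phi_a$ is an isomorphism, and on neighbourhood isomorphisms fixing $\Nomi(\Kk)$ so that ``avoiding $\Nomi(\Kk)$'' survives the pull-back. It is also precisely here that the externality hypothesis on $\Jj$ cannot be dropped: for a general subinterpretation of $\Ii_n^n$ the statement is false — for instance, if $\Ii\setminus\Nomi(\Kk)$ is a disjoint union of many $r$-cycles of length $\ell+1$, then $\Ii_n^n$ already contains non-external $n$-links that chain into a simple $r$-path of length $\ell+1$.
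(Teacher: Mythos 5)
Your proposal is correct, but the way you establish simplicity of the lifted path is genuinely different from the paper's argument. The paper applies Theorem~\ref{thm:lifting} as a black box to obtain a homomorphism $h:\pi\to\Ii_n$ preserving $\ell$-neighbourhoods and then proves that $h$ is \emph{injective} using the colouring and strongly connected components: a repetition forces the endpoints of some link into one strongly connected component $X$ of $s$; by $\ell$-boundedness of $\Ii_n\setminus\Nomi(\Kk)$ the component $D$ of $d$ lies inside $N^{\Ii_n}_\ell(d)$ and is isomorphic to $X$, so $N^{\Ii_n}_n(d)$ contains, besides the witness $e'$ (which externality keeps out of $D$), a second element of the same colour as $e$, contradicting $n$-properness of the colouring. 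You instead redo the lifting edge by edge so as to carry explicit per-step neighbourhood isomorphisms, take a minimal repetition on the lifted walk to obtain a short simple $r$-cycle, and pull the nominal-free return path around that cycle back through your isomorphisms to exhibit an $r$-path from the witness back ($e'$ to $v_a$, respectively $v_{a+1}$ to $d'$), contradicting externality directly; notably you never invoke properness of the colouring or strongly connected components, and you use $\ell$-boundedness of $\Ii\setminus\Nomi(\Kk)$ only for the final contradiction (indeed your argument would go through with a weaker lower bound on $n$, roughly $n>2\ell+1$). The trade-off is clear: the paper's route is shorter because it reuses Theorem~\ref{thm:lifting} verbatim and only needs neighbourhood preservation of the images, while yours pays by re-deriving the lifting with the extra isomorphism bookkeeping but dispenses with the colouring/SCC machinery. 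One point you should state explicitly: a link may admit both a successor and a predecessor certificate, and externality is asserted only for ``the witnessing'' one; your lifting step must use exactly that certificate (and that witness $e'$ or $d'$), and the same one must reappear in the final pull-back, otherwise the return path you produce contradicts nothing. This mirrors the paper's ``by symmetry we can assume'' step and is a one-line fix rather than a gap.
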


\begin{proof}
Suppose there is a simple $s$-path $\pi$ in $\Jj\setminus \Nomi(\Kk)$ of length $\ell+1$, for some transitive role $s$. We can view $\pi$ as a conjunctive query with $\ell+1$ $s$-atoms. By applying Theorem~\ref{thm:lifting} to $\pi$ we lift the inclusion homomorphism $\pi \subseteq \Jj \subseteq \Ii_n^n$ to a homomorphism $ h :  \pi \to \Ii_n\,,$
that preserves $\ell$-neighbourhoods. 
Because $\pi$ is disjoint from $\Nomi(\Kk)$, so is its image. 
Hence, we can view $h$ as a homomorphism  
\[ h :  \pi \to \Ii_n \setminus \Nomi(\Kk) \,.\] 
Because $\Ii_n\setminus \Nomi(\Kk)$ is $\ell$-bounded, it suffices to show that $h$ is injective to obtain a contradiction. 

Observe first that $h$ is injective over segments of $\pi$ that do not contain links. Indeed, because $\Ii_n$ is $n$-properly coloured and $n\geq |\pi|$, in each such segment all elements have different colours. Hence, it suffices to show that the images of the segments are disjoint. Suppose the images of some two different segments overlap on an element from a strongly connected component $X$ of $s$ in $\Ii_n\setminus \Nomi(\Kk)$. Hence, all segments between these two are entirely mapped to $X$. In particular, there exists an $n$-link $(d,e)$ along $s$ such that $h(d) \in X$ and $h(e) \in X$. We claim this is impossible. 

By symmetry we can assume that $d$ has an $s$-successor $e'$ such that no $s$-path from $e'$ to $d$ is disjoint from $\Nomi(\Kk)$ and $N^{\Ii_n}_n(e') \simeq N^{\Ii_n}_n(e)$. In particular, $e'$ and $e$ have the same colour. Because $n>1$, we have $e' \in N^{\Ii_n}_n(d)$. We obtain a contradiction by finding another element in $N^{\Ii_n}_n(d)$ of the same colour as $e$. 

Let $D$ be the strongly connected component of $s$ in $\Ii_n\setminus\Nomi(\Kk)$ that contains $d$. Because $\Ii_n\setminus\Nomi(\Kk)$ is $\ell$-bounded, all elements of $D$ are within distance $\ell<n$ from $d$. Consequently, $D$ is isomorphic to $X$,  because $h$ preserves $\ell$-neighbourhoods. Hence, there exists an element $e''\in D \subseteq N^{\Ii_n}_n(d)$ of the same colour as $e$. Because $e'\notin D$, we have $e'\neq e''$, as required for the contradiction. 
\end{proof}









\section{$\SIF$}
\label{sec:sif} 

For $\ALCIF$, a tight upper bound on the complexity of finite query
entailment can be obtained by revisiting some known and implicitly
proven results on the guarded fragment with two variables and counting
\cite{Pratt-Hartmann_datacomplexity,Pratt-Hartmann0_gc2_sat}. We
consider a slightly more general problem of \emph{finite entailment
  modulo types}, which will be useful later. For a KB $\Kk$, a query
$Q$, and a set of unary types $T \subseteq \Tp(\Kk)$ we write $\Kk
\fentails^T Q$ if for each interpretation $\Ii$ that only realises
types from $T$, if $\Ii\models \Kk$ then $\Ii\models Q$. This problem
reduces to finite query entailment by including into $Q$ one CQ
for each type not listed in $T$, but this makes $Q$ exponential in the
size of  $\CN(\Kk)$ and leads to a worse complexity upper bound.  

\begin{theorem}\label{thm:ALCIF}
Given an $\ALCIF$ KB $\Kk$, a union $Q$ of  CQs, each of size at most
$m$, and a set $T\subseteq \Tp(\Kk)$, one can decide whether $\Kk
\fentails^T Q$ in time $2^{\Oo(|\K|+ |Q| \cdot m^m)}$. 
\end{theorem}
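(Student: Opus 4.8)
The goal is to reduce finite entailment modulo types for $\ALCIF$ to a finite satisfiability question in the guarded two-variable fragment with counting, $\mathcal{GC}^2$, whose finite satisfiability is decidable within the stated time bound by the results of Pratt-Hartmann. The main conceptual point is that although conjunctive queries are not expressible in $\mathcal{GC}^2$, the \emph{negation} of a UCQ over a model of an $\ALCIF$ KB can be enforced indirectly, because the only way a CQ can match in a model of such a KB is via a ``tree-plus-cliques''-like witness of bounded size: $\ALCIF$ has no transitive roles in $\Kk^*$ and roles are either functional or unrestricted, so query matches decompose over a bounded-treewidth structure. This is the standard route (spoiler: squid decompositions / rolling-up) used to show that query entailment for $\ALCIF$-style logics can be handled by a satisfiability checker for a richer but still decidable logic.

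\textbf{Step 1: Normalise and absorb $T$.} Start from the normal-form $\ALCIF$ KB $\Kk$. Encode the ABox by nominained concept names or simply by fresh unary predicates marking the named individuals, so that $\Kk$ becomes a $\mathcal{GC}^2$ sentence $\varphi_\Kk$ of size $\Oo(|\Kk|)$; functionality declarations $\mn{Fn}(r)$ become counting constraints $\forall x\,{\le}1\,y\,r(x,y)$, and CIs of the forms $A\sqsubseteq\forall r.B$, $A\sqsubseteq\exists r.B$, $\bigsqcap A_i\sqsubseteq\bigsqcup B_j$ translate directly into guarded two-variable formulas. The type restriction $T$ is absorbed by adding the axiom $\forall x\bigvee_{\tau\in T}\delta_\tau(x)$, where $\delta_\tau$ is the conjunction of (negated) concept names describing $\tau$; this is a single universally quantified unguarded-but-one-variable formula, hence harmless, and it costs only $|T|\le 2^{|\CN(\Kk)|}$ in the formula size, which is within the $2^{\Oo(|\Kk|)}$ budget.

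\textbf{Step 2: Rolling up the query.} For each CQ $P$ among the disjuncts of $Q$, with at most $m$ atoms, enumerate the possible ``shapes'' of a minimal match in a model of $\Kk$. The key lemma to establish (or cite, in the spirit of \cite{Pratt-Hartmann_datacomplexity}) is that in any model $\Ii\models\Kk$, if $\Ii\models P$ then there is a match whose image has a decomposition into a core of elements near named individuals plus tree-shaped appendages, so that $P$ can be ``rolled up'' into a bounded collection of $\mathcal{GC}^2$-types over at most $m$ variables; there are at most $|\Kk|^{\Oo(m)}$ such rolled-up patterns. Adding, for each pattern, a fresh concept name that is axiomatised in $\mathcal{GC}^2$ to hold exactly where that pattern's ``head'' sits, and then asserting the \emph{negation} — i.e.\ that no element carries any pattern concept — yields a $\mathcal{GC}^2$ sentence $\psi_{\neg Q}$ of size $|Q|\cdot|\Kk|^{\Oo(m)}$, which after the exponential blow-up inherent in counting is $2^{\Oo(|Q|\cdot m^m)}$ (the $m^m$ arising from the number of orderings/identifications of the $m$ query variables within each pattern).

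\textbf{Step 3: Decide and conclude.} Then $\Kk\fentails^T Q$ fails iff $\varphi_\Kk\wedge(\forall x\bigvee_{\tau\in T}\delta_\tau(x))\wedge\psi_{\neg Q}$ has a finite model; run the finite-satisfiability decision procedure for $\mathcal{GC}^2$ on this sentence. Its running time is exponential in the sentence size, i.e.\ $2^{\Oo(|\Kk|+|Q|\cdot m^m)}$, matching the claim. The main obstacle — and the step that needs the most care — is Step 2: one must verify that query matches in models of $\ALCIF$ really do admit bounded-treewidth / rolled-up witnesses so that a $\mathcal{GC}^2$ formula can forbid them, and that the functionality declarations do not break this (they help, if anything, by limiting out-degree on functional roles). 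I expect this to follow by adapting the $\mathcal{GC}^2$ machinery of \cite{Pratt-Hartmann_datacomplexity}, but the translation of the ABox and of $T$ into the guarded fragment, and the bookkeeping that keeps the rolled-up query polynomial in $|\Kk|$ and singly exponential in $m^m$, are where the real work lies.
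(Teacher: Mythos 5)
Your overall route is the same as the paper's: translate the $\ALCIF$ KB into $\mathcal{GC}^2$, replace the UCQ by the (negation of the) disjunction of its treeified/rolled-up variants (this is exactly the step the paper gets by inspecting the proof of Theorem~4 of Pratt-Hartmann, with at most $m^m$ treeifications per CQ), and then call the finite-satisfiability procedure for $\mathcal{GC}^2$. Deferring the rolling-up lemma to Pratt-Hartmann's machinery is legitimate; that is what the paper does too.

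The genuine gap is in how you handle $T$, and it breaks the claimed time bound. You encode $T$ by the axiom $\forall x\bigvee_{\tau\in T}\delta_\tau(x)$, whose size is up to $2^{|\CN(\Kk)|}\cdot|\CN(\Kk)|$, and then in Step~3 you assert that the decision procedure runs in time exponential in the sentence size. Those two statements together give a bound of the form $2^{2^{\Oo(|\Kk|)}}$, not $2^{\Oo(|\Kk|+|Q|\cdot m^m)}$: once the input sentence is exponentially large in $|\Kk|$, ``exponential in the sentence size'' is doubly exponential in $|\Kk|$. This is precisely the trap the theorem's ``modulo types'' formulation is designed to avoid -- the paper notes explicitly that the naive reduction (adding one forbidden-type CQ per type outside $T$, the query-side analogue of your axiom) ``leads to a worse complexity upper bound.'' The paper's fix requires opening the black box: Pratt-Hartmann's finite-satisfiability algorithm for normal-form $\mathcal{GC}^2$ is polynomial in the formula and exponential only in the signature, and it works with an inequality system whose variables are star types of a hypothetical model; since a star type refines a unary type, the restriction to $T$ is obtained for free by deleting every variable whose unary type lies outside $T$, with no change to the formula at all. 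Without either this modification (or a careful argument that your huge disjunction survives normalisation without inflating the signature and that the running time is polynomial, not exponential, in that part of the input), your Step~3 does not yield the stated $2^{\Oo(|\Kk|+|Q|\cdot m^m)}$ bound. A smaller instance of the same looseness appears in Step~2, where a formula of size $|Q|\cdot|\Kk|^{\Oo(m)}$ is silently rebadged as $2^{\Oo(|Q|\cdot m^m)}$; the clean accounting again comes from ``polynomial in the formula, exponential in the signature'' together with the $|Q|\cdot m^m$ count of treeifications, each of size polynomial in $m$.
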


\begin{corollary} \label{cor:ALCIF}
The finite query entailment problem for $\ALCIF$ is \twoexp-complete.
\end{corollary}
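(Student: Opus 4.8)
The plan is to obtain both bounds by combining results that are already in place. For the upper bound I would apply Theorem~\ref{thm:ALCIF} with the trivial choice $T=\Tp(\Kk)$: when every unary type is admitted, the side condition ``$\Ii$ realises only types from $T$'' holds for every interpretation, so $\Kk\fentails^T Q$ is exactly $\Kk\fentails Q$. Theorem~\ref{thm:ALCIF} then decides $\Kk\fentails Q$ in time $2^{\Oo(|\K|+|Q|\cdot m^m)}$, where $m$ bounds the size of the CQs in $Q$. Since $m\le|Q|$, we have $m^m\le 2^{|Q|\log|Q|}$, so $|Q|\cdot m^m$ is only singly exponential in the input; hence the running time is doubly exponential in $|\K|+|Q|$, and the problem lies in \twoexp.

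For the lower bound I would note that $\ALCI$ arises from $\ALCIF$ simply by disallowing functionality declarations, so it suffices to establish \twoexp-hardness already for $\ALCI$. Unrestricted CQ entailment for $\ALCI$ is \twoexp-hard~\cite{Lutz08}, and $\ALCI$ is a fragment of $\ALCOI$, so Corollary~\ref{cor:ALCOI-FC} gives $\Kk\models Q$ iff $\Kk\fentails Q$ for all $\ALCI$ KBs $\Kk$ and UCQs $Q$. Chaining the two observations, finite query entailment for $\ALCI$ --- and hence for $\ALCIF$ --- is \twoexp-hard.

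In truth there is no real obstacle: the corollary is a bookkeeping step that bundles Theorem~\ref{thm:ALCIF}, the finite controllability result of Section~\ref{sec:alcoi}, and the classical lower bound of~\cite{Lutz08}. The only points needing a moment's care are verifying that the running time of Theorem~\ref{thm:ALCIF} indeed lands inside \twoexp rather than higher, and checking that the hard instances produced by~\cite{Lutz08} use only $\ALCI$ constructs, so that finite controllability is applicable and the lower bound transfers to the finite setting.
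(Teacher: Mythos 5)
Your proposal is correct and matches the paper's intended argument: the corollary follows from Theorem~\ref{thm:ALCIF} instantiated with $T=\Tp(\Kk)$ (the stated running time being doubly exponential in the input), together with the \twoexp-hardness of unrestricted CQ entailment for $\ALCI$ from~\cite{Lutz08} transferred to the finite setting via the finite controllability result of Corollary~\ref{cor:ALCOI-FC}, exactly the lower-bound route the paper uses for its other logics. No gaps.
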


\noindent Relying on Theorem~\ref{thm:ALCIF} and our previous results
for $\SOI$, we extend the upper bound of Corollary~\ref{cor:ALCIF} to
$\SIF$.

Let us fix a UCQ $Q$ and a $\SIF$ KB $\Kk$.
%
%
%
%
%
We work again with counter-models of a special shape, this time based
on tree partitions. We assume a proviso that the ABox of $\Kk$
does not contain transitive and non-transitive roles simultaneously;
we lift it by the end of the section.

\begin{definition}
A \emph{tree partition} of an interpretation $\Ii$ is a tree
$T$ whose each node $v$ is labelled with a finite subinterpretation $\Ii_v$ of
$\Ii$, called a \emph{bag}, such that $\bigcup_{v\in T} \Ii_v = \Ii$ and for
each element some bag containing it is the parent of all other bags
containing it. The maximal bag size is called the \emph{width} of $T$.
\end{definition}


\begin{definition}
An interpretation $\Ii$ is a \emph{$\SIF$-tree} if it admits a
tree partition such that 
\begin{itemize}
\item the root bag contains $\Ind(\Kk)$,
\item each bag contains edges in transitive roles only ($\tra$
  bag) or in non-transitive roles only 
($\ntra$ bag),
\item each element is in exactly two bags, one $\tra$ and one $\ntra$,
\item each two adjacent bags share exactly one element.
\end{itemize}
\end{definition}

\begin{lemma} \label{lem:SIF-trees}
There exists a finite counter-model for $Q$ iff there
exists a $\SIF$-tree counter-model for $Q$ of finite width. 
\end{lemma}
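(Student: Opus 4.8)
\textbf{Proof plan for Lemma~\ref{lem:SIF-trees}.}
The nontrivial direction is left-to-right: from a finite counter-model $\Mm$ we must produce a $\SIF$-tree counter-model of finite width (the converse is trivial, since a $\SIF$-tree of finite width is in particular a model of $\Kk$ that fails $Q$, and being finite-width says nothing that rules out infiniteness so we do not need to worry about it). The plan is to build the $\SIF$-tree by an unravelling that simultaneously respects the $\tra$/$\ntra$ alternation demanded by the definition. Starting from the root bag containing $\Ind(\Kk)$, I would alternate between two kinds of expansion steps: at a $\tra$ bag, for each element $d$ in the bag take the strongly connected component of each transitive role through the image of $d$ in $\Mm$, and attach a fresh copy of (a bounded-size representative of) that component as a child $\tra$ structure sharing only $d$; at an $\ntra$ bag, for each unsatisfied existential $A\sqsubseteq\exists r.B$ with $r$ non-transitive, attach a fresh single-element child realising a witness from $\Mm$. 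Crucially each element gets to live in exactly one $\tra$ bag and one $\ntra$ bag — its ``home'' bags — so the alternation structure of the tree partition is maintained, and adjacent bags share exactly the one element along which the expansion happened. Because witnesses are always drawn from the finite model $\Mm$ and copied with the same unary type, the limit interpretation $\Ii$ is a model of $\Kk$, and the width is bounded: $\ntra$ bags have size at most $2$ and $\tra$ bags are bounded by the size of an SCC representative, which is $O(|\Kk|)$ after the same minimisation trick used in the $\SOI$ case (keep one element per realised unary type in the component).

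It remains to argue $\Ii\notmodels Q$. Here the natural tool is the homomorphism $h:\Ii\to\Mm$ sending each copy to the original element of $\Mm$ it was copied from: it preserves all concepts, all individual names, and all non-transitive roles by construction. For transitive roles there is the familiar mismatch — $\Ii$ only records SCC-cliques and single down-edges, not the full transitive closure — so $h$ is a homomorphism from $\Ii^*$ (the transitive closure of $\Ii$) into $\Mm$, exactly as in the $\SOI$ development: any $r$-edge created by transitive closure in $\Ii$ corresponds either to staying inside one copied SCC (mapped into the corresponding SCC of $\Mm$, which is transitively closed there) or to a chain of down-edges, which $h$ sends to a genuine $r$-path in $\Mm$, and transitivity of $r^\Mm$ collapses it. Hence $\Ii^*\to\Mm$, and since $\Mm\notmodels Q$ and UCQs are preserved under homomorphisms, $\Ii^*\notmodels Q$, i.e.\ $\Ii$ is a counter-model in the relevant sense. (If the paper's convention is that a $\SIF$-tree counter-model means $\Ii^*\notmodels Q$, this is literally what we want; otherwise one performs the same $Q\leadsto Q^*$ replacement as in Section~\ref{sec:soi} once a path-length bound is available.)

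The one genuine obstacle is the interaction with \emph{functionality} declarations ($\Fmc$): when we unravel, an element $d$ whose image in $\Mm$ already has an $r$-predecessor and is about to receive an $r$-successor must not end up with two $r$-edges if $\mn{Fn}(r)$ holds. In standard tree-unravellings for $\ALCIF$ this is handled by never re-creating an $r$-edge to a copy whose original already has the required $r$-neighbour in $\Mm$ — instead one reuses the existing edge/direction, which is safe precisely because $r$ is non-transitive here (the proviso that transitive and non-transitive roles are never declared functional, recorded in the Preliminaries, is what makes this coherent). Concretely: before adding an $\exists r.B$ witness at an $\ntra$ bag, check in $\Mm$ whether the existential is already witnessed via an edge that in $\Ii$ would clash with functionality; if so the witness is already present (possibly as the parent) and nothing is added. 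Making this bookkeeping precise — and checking that the resulting tree still satisfies ``each element in exactly two bags'' — is where the real work sits; everything else is the routine unravelling-plus-homomorphism argument we have already seen twice. Finally, the proviso that the ABox does not mix transitive and non-transitive roles is used only to make the root bag well-defined as a single $\tra$ or $\ntra$ bag with its partner; the concluding remark of the section explains how to lift it by splitting the ABox individuals across an initial pair of bags.
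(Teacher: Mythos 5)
There is a fundamental gap: you have misidentified which direction of the lemma is hard, and you omit the hard one entirely. A $\SIF$-tree counter-model of finite width is in general an \emph{infinite} interpretation (only its bags and its width are finite), so the right-to-left direction is not ``trivial'' -- it asks you to manufacture a \emph{finite} counter-model from a possibly infinite one. This is precisely where the paper spends almost all of its effort: it first re-unravels the $\SIF$-tree so that every infinite branch of the tree partition contains infinitely many single-edge $\tra$ bags, then applies the coloured blocking machinery of Section~\ref{sec:alcoi} (an $n$-proper colouring, redirecting the edge of the first single-edge $\tra$ bag whose $n$-neighbourhood repeats), choosing $n=\max((k\cdot\ell^*)^2,(\ell^*+1)^2+\ell^*)$ and invoking Theorem~\ref{thm:lifting}, Lemma~\ref{lem:star} and Theorem~\ref{thm:folding-bounded}, together with the observation that only non-functional ($\tra$) edges are redirected, to conclude that $\Ff_n$ is finite, models $\Kk^*$, and $\Ff_n^*\notmodels Q$. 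None of this appears in your proposal, so the equivalence you are asked to prove is not established.

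The direction you do treat (finite counter-model $\Rightarrow$ $\SIF$-tree of finite width) is also handled more simply, and more safely, in the paper: the root bag is the whole restriction $\Ff_{\mu_0}$ of the finite model, and each expansion step attaches an entire fresh copy of $\Ff_\nu$ sharing exactly one element, so model-hood (including functionality and transitivity of transitive roles, since every $r$-path for transitive $r$ stays inside one $\tra$ bag, which is a copy of the transitively closed $\Ff_\tra$), the ``each element in exactly two bags'' condition, and the homomorphism to $\Ff$ are all immediate, with width $|\Ff|$. Your piecemeal construction has concrete problems: attaching, at a $\tra$ bag, further $\tra$ SCC-structures sharing $d$, or attaching several single-element $\ntra$ witness bags to the same element, puts that element into more than one bag of the same kind, violating the $\SIF$-tree definition; your SCC-representative $\tra$ bags are not transitively closed, so you must pass to $\Ii^*$, and the added closure edges need not respect any tree partition, so it is unclear the result is a $\SIF$-tree counter-model at all; and the functionality bookkeeping you defer (``where the real work sits'') is exactly what the paper's whole-copy construction renders unnecessary. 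So even the direction you address would need substantial repair, and the essential half of the lemma is missing.
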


\begin{proof}
Let $\Ff$ be a finite counter-model for $Q$. We turn it into a
$\SIF$-tree counter-model $\Ii$ using a very simple unravelling
procedure. For each $\mu\in\{\tra, \ntra\}$, let $\Ff_\mu$ be the
interpretation obtained from $\Ff$ by restricting the set of roles to
$\mu$ roles. By the proviso, the ABox of $\Kk$ contains only $\mu_0$
roles for some $\mu_0\in \{\tra, \ntra\}$. We construct the
$\SIF$-tree top down. In the root we put $\Ff_{\mu_0}$ itself. Then,
iteratively, for each element $d$ that belongs only to a $\mu$ bag we
add a child bag obtained by taking an isomorphic copy of $\Ff_\nu$ for
$\nu\neq \mu$, in which all elements except $d$ are replaced with
their fresh copies; in particular, each individual different from $d$ is
replaced with an ordinary anonymous element of the same unary
type. It is routine to verify that the resulting
interpretation $\Ii$ is a model of $\Kk$.
The natural homomorphism from
$\Ii$ to $\Ff$ ensures that $\Ii\notmodels Q$. The width of $\Ii$ is
$|\Ff|$.

Let us now take a $\SIF$-tree $\Ii$ of width $\ell$ that is a
counter-model for $Q$. We use the methodology developed for $\SOI$ to
turn $\Ii$ into a finite counter-model. Because $|\Ii_v| \leq \ell$,
$\Ii$ has degree at most $2\cdot \ell\cdot |\Kk|$. Because each
$r$-path for any transitive role $r$ is contained within a single
$\tra$ bag, it follows that $\Ii$ is $(\ell-1)$-bounded.


For the purpose of the coloured blocking principle, we need to ensure
that each infinite branch of the tree partition of our interpretation
contains infinitely many $\tra$ bags that consist of a single edge
(pointing up or down the tree). We achieve this by performing an
additional unravelling of $\Ii$. We start with a copy of the root bag
in the tree partition of $\Ii$, where elements of $\Ind(\Kk)$ are
preserved and other elements are replaced with their fresh copies. Let
$d'$ be an element in the interpretation under construction that so
far belongs to only one bag $X'$. By construction, $d'$
is a copy of some element $d$ of $\Ii$. If $X'$ is a $\tra$ bag, add a
copy of the $\ntra$ bag that contains $d$, with $d$ replaced with $d'$
and other elements replaced with their fresh copies. Assume that $X'$
is an $\ntra$ bag. For each $\tra$ role $r$ and each $r$-successor $e$
of $d$, add three new bags. First, add a bag consisting of $d'$, a
fresh copy $e'$ of $e$, and an $r$-edge from $d'$ to $e'$. Then, for
each $\mu\in\{\tra, \ntra\}$, add a copy of the $\mu$-bag containing
$e$, with $e$ replaced with $e'$ and all other elements replaced with
their fresh copies (different for each $\mu$).

Let $\Jj$ be the interpretation obtained in the limit. Because in the
tree partition of $\Ii$ $\tra$ bag and $\ntra$ bags alternate, in the
tree partition of $\Jj$ $\ntra$ bags have only new single-edge
$\tra$ bag children, new single-edge $\tra$ bags have one $\ntra$ bag
child and one $\tra$ bag child, and copies of original $\tra$ bags have only
$\ntra$ bag children.  Consequently, on each infinite branch, there
are infinitely many single-edge $\tra$ bags. 

Interpretations of transitive roles in $\Jj$ need not be transitive
relations, but it is straightforward to check that $\Jj$ is a model of
$\Kk^*$; in particular, functionality declarations were not affected
because the new single-edge bags involve only $\tra$ roles
(non-functional). Moreover, $\Jj^*\notmodels Q$ because $\Jj$ maps
homomorphically to $\Ii$ and, consequently, so does $\Jj^*$. The
degree in $\Jj$ is bounded by $2\cdot\ell\cdot|\Kk| +1$, because each
element belongs to one $\tra$ bag and one $\ntra$ bag of size at most
$\ell$, and possibly one single-edge bag. Finally, $\Jj$ is
$2\ell$-bounded because in the worst case a simple $r$-path for any
transitive role $r$ goes first through a bag with at most  $\ell$ elements,
then two single-edge bags, and then another bag with at most $\ell$
elements.

We can now apply the coloured blocking principle. Suppose each disjunct of
$Q$ uses at most $k$ binary atoms.  Let $\ell^* = 2\ell$ and let $Q^*$
be obtained from $Q$ by replacing each transitive role atom $S$ by the
disjunction \[\bigvee_{i\leq \ell^*} S^i(x,y)\,,\] and rewriting the
resulting query as a UCQ. Each CQ in $Q^*$ has at most $k\cdot \ell^*$
binary atoms. Because $\Jj$ has bounded degree, we can consider its
$n$-proper colouring $\Jj_n$ for any $n$. On each branch $\pi$ of the
tree partition of $\Jj_n$, let $D_\pi$ be the first single-edge
$\tra$ bag for which some earlier single-edge $\tra$ bag $E_\pi$
satisfies $N^{\Jj_n}_n(d_\pi) \simeq N^{\Jj_n}_n(e_\pi)$, where $d_\pi\in D_\pi$ and
$e_\pi\in E_\pi$ are the elements that $D_\pi$ and $E_\pi$ share with
their respective parents. The new structure $\Ff_n$ is obtained like
before: we include the branch $\pi$ up to the predecessor of node
$D_\pi$ and the edge in $D_\pi$ is redirected to the successor of
$e_\pi$ in $E_\pi$. Because $\Jj$ is a model of $\Kk^*$ and we only
redirected edges in non-functional roles, it follows that $\Ff_n$ is a
model of $\Kk^*$. Consequently, $\Ff_n^*\models \Kk$. Let us now
fix \[n=\max((k\cdot \ell^*)^2, (\ell^*+1)^2+\ell^*)\,.\] By
Theorem~\ref{thm:lifting}, we get $\Ff_n \notmodels \Qq^*$. Because
$\Jj$ is $\ell^*$-bounded and we clearly used only external links in
the construction of $\Ff_n$, by Lemma~\ref{lem:star} and
Theorem~\ref{thm:folding-bounded} we obtain $\Ff_n^* \notmodels \Qq$.
\end{proof}

Thus, it suffices to consider counter-models that are $\SIF$-trees of
finite width, but there is a priori no bound on the width, which
hinders direct application of automata. Instead, we show that one can
test existence of $\SIF$-tree counter-models without manipulating
$\SIF$-trees directly.

Our first step is to adjust the structure of $Q$'s disjuncts to the
structure of $\SIF$-trees. To keep this as simple as possible, we make
a second proviso that each CQ constituting $Q$ is connected. 
We eliminate it towards the end of the section. 
Let $P$ be one of the CQs
constituting $Q$. It is convenient to think $P$ as an interpretation
with the domain $\var(P)$ and interpretations of concepts and roles
given by the atoms of $P$. Whenever $P$ is mapped homomorphically into
a $\SIF$-tree $\Ii$, the image of $P$ is a $\SIF$-tree as
well. Indeed, because $P$ is connected, a witnessing tree partition of
the image of $P$ is naturally induced by the tree partition of
$\Ii$. Hence, if $Q$ is a union of $n$ CQs of size at most $m$, over
$\SIF$-trees $Q$ is equivalent to
\begin{equation} \label{eq:queries}
Q_1 \cup Q_2 \cup \dots \cup Q_p\,, \tag{*}
\end{equation} 
where the queries $Q_i$ are
all non-isomorphic $\SIF$-trees obtained as homomorphic images of the CQs
of $Q$, each using a fresh set of variables, and $p \leq n
\cdot m^m$. 

\begin{figure}
\centering
\includegraphics[height=4cm]{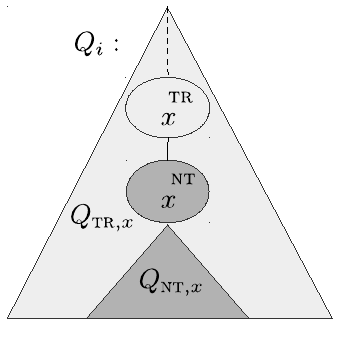} 
\caption{Queries $Q_{\tra,x}$ and $Q_{\ntra,x}$ for $x\in\var(Q_i)$.}
\label{fig:subqueries}
\end{figure}

For all $\mu \in \{\tra,\ntra\}$ and $x \in \bigcup_i \var(Q_i)$, let $Q_{\mu,
  x}$ be the query obtained by taking all bags that 
are reachable from the $\mu$ bag containing $x$ without visiting
the other bag containing $x$, as illustrated in Figure~\ref{fig:subqueries}. For all $x\in\var(Q_i)$ it holds that  $Q_i = Q_{\tra,x} \land
Q_{\ntra,x}$.

Let $\Kk_Q$ be obtained from $\Kk$ by extending the TBox as
follows: for each $\mu \in \{\tra, \ntra\}$ and $x \in \bigcup_i
\var(Q_i)$, we add a fresh concept name $A_{\mu,x}$
and the complementary concept name $\bar A_{\mu,x}$, together with the
usual axiomatisation. The interpretation of $A_{\mu,x}$ is intended to
collect elements $d$ such that $Q_{\mu, x}$ can be matched with $x$
mapped to $d$.

A \emph{specialisation} $\widetilde Z$ of a bag $Z$ of query $Q_i$ is
obtained by including for each $x \in \var(Z)$ and each $\mu \in
\{\tra, \ntra\}$ either the atom $A_{\mu,x}(x)$ or the atom $\bar
A_{\mu,x}(x)$, where $\bar A_{\mu,x}$ is the concept name
complementary to $A_{\mu,x}$.  A specialisation $\widetilde Z$ of a
$\mu$-bag $Z$ of $Q_i$ is \emph{consistent} if for all $x$ it holds that:
$\widetilde Z$ contains $A_{\mu,x}(x)$ iff 
for all $y\in \var(\widetilde Z) \setminus \{x\}$, $\widetilde Z$ contains
  $A_{\nu,y}(y)$  with $\nu\neq \mu$.
%
%
%
An interpretation $\Ii$  (with the extended set of
concept names) is \emph{consistent} if it does not match inconsistent
specialisations of bags of queries $Q_1, Q_2, \dots, Q_p$.

For a $\SIF$ KB $\Ll$ and $\mu\in\{\tra, \ntra\}$ we write
$\Ll \!\upharpoonright \! \mu$ for the KB obtained by dropping all ABox
assertions, CIs, and declarations that involve $\nu$-roles for $\nu\neq
\mu$.

\begin{definition} \label{def:counter-witness}
$T \subseteq \Tp(\Kk_Q)$ is a \emph{counter-witness} for $Q$ if
\begin{itemize}
\item for all  $x \! \in \!\bigcup_i \var (Q_i)$, each $\tau \!\in\! T$
  contains $\bar A_{\tra,x}$ or $\bar A_{\ntra,x}$;
\item assuming $\Kk$ uses only $\mu_0$-roles in the ABox, there exists
  a consistent finite model of $\Kk_Q \!\upharpoonright\! \mu_0$ that
  realises only types from $T$; and
\item for all $\tau \in T$ and $\mu\in\{\tra, \ntra\}$  there
  exists a consistent finite model of the TBox of $\Kk_Q
  \!\upharpoonright\! \mu$ that realises type $\tau$ and realises only
  types from $T$. 
\end{itemize} 
\end{definition}

\begin{lemma}\label{lem:sif-characterization}
$Q$ admits a $\SIF$ tree counter-model of finite width iff
there exists a counter-witness for $Q$. 
\end{lemma}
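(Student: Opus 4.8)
The plan is to prove both directions of the biconditional by relating $\SIF$-tree counter-models to counter-witnesses through the auxiliary concept names $A_{\mu,x}$ and the decomposition $Q_i = Q_{\tra,x} \land Q_{\ntra,x}$.

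\medskip

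\noindent\textbf{From a counter-model to a counter-witness.} Suppose $\Ii$ is a $\SIF$-tree counter-model for $Q$ of finite width, with tree partition $T_\Ii$. First I would extend $\Ii$ to an interpretation $\Ii'$ over the signature of $\Kk_Q$ by interpreting each fresh concept name $A_{\mu,x}$ as the set of elements $d$ such that $Q_{\mu,x}$ has a match sending $x$ to $d$; by construction $\Ii' \models \Kk_Q$. Let $T$ be the set of unary types realised in $\Ii'$. I claim $T$ is a counter-witness. For the first bullet: if some $\tau \in T$ contained both $A_{\tra,x}$ and $A_{\ntra,x}$, then witnessing matches of $Q_{\tra,x}$ and $Q_{\ntra,x}$ both sending $x$ to the same element could be glued (their domains are the two halves of $Q_i$ meeting exactly at $x$) into a match of $Q_i = Q_{\tra,x} \land Q_{\ntra,x}$, contradicting $\Ii \notmodels Q$. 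Also $\Ii'$ is consistent: an inconsistent specialisation of a $\mu$-bag would, when matched, again assemble into a full match of some $Q_i$, using precisely the definition of consistency (the "iff" characterising when $A_{\mu,x}(x)$ holds mirrors the gluing condition). For the remaining two bullets I would take the sub-bags of $T_\Ii$: the root $\tra$-or-$\ntra$ sub-structure is itself a model of $\Kk_Q \!\upharpoonright\! \mu_0$ over types from $T$, but it may be infinite — here I would invoke that over a $\SIF$-tree, the query equivalence \eqref{eq:queries} and the finite-model arguments of the previous sections (Theorem~\ref{thm:folding-bounded}, the blocking principle) let us fold it to a finite consistent model realising only types from $T$. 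Similarly, each $\tau\in T$ occurs in some bag, and the corresponding $\mu$-substructure gives a (foldable-to-finite) consistent model of the TBox of $\Kk_Q\!\upharpoonright\!\mu$ realising $\tau$.

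\medskip

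\noindent\textbf{From a counter-witness to a counter-model.} Conversely, given a counter-witness $T$, I would build a $\SIF$-tree by the standard amalgamation: start from the guaranteed finite consistent model of $\Kk_Q\!\upharpoonright\!\mu_0$ as the root bag; then, whenever an element $d$ of type $\tau$ lies so far in only a $\mu$-bag, attach as a child a fresh isomorphic copy of the consistent finite model of the TBox of $\Kk_Q\!\upharpoonright\!\nu$ ($\nu\neq\mu$) realising $\tau$, identifying its $\tau$-element with $d$ and keeping all types within $T$. The limit $\Ii$ is a $\SIF$-tree of finite width (each bag is one of finitely many fixed finite structures) and, by the third bullet plus the normal-form axioms, $\Ii\models\Kk$; it realises only types from $T$. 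It remains to see $\Ii \notmodels Q$: since over $\SIF$-trees $Q$ is equivalent to \eqref{eq:queries}, a match of some $Q_i$ would, splitting at any variable $x$ via $Q_i = Q_{\tra,x}\land Q_{\ntra,x}$, force some matched bag to realise the $A_{\mu,x}$-values dictated by those two sub-matches; propagating this through all variables and bags of $Q_i$, one arrives either at an element whose type carries both $A_{\tra,x}$ and $A_{\ntra,x}$ (forbidden by the first bullet) or at a matched inconsistent specialisation of a bag (forbidden by consistency, which holds bag-by-bag because each bag is a consistent model by construction). Hence no $Q_i$, and thus $Q$, is satisfied.

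\medskip

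\noindent The main obstacle I expect is the bookkeeping in the second direction showing $\Ii\notmodels Q$: one must argue that a hypothetical match of a $\SIF$-tree query $Q_i$ induces, along the tree partition, a globally coherent assignment of the $A_{\mu,x}$ bits, and that coherence is exactly what the notion of a consistent interpretation rules out — this is where the somewhat delicate "iff" in the definition of a consistent specialisation earns its keep, and where one must be careful that the decomposition $Q_{\mu,x}$ interacts correctly with the alternation of $\tra$ and $\ntra$ bags. The first direction's folding-to-finite step is comparatively routine given Theorem~\ref{thm:folding-bounded} and the blocking machinery already developed, but it still requires checking that folding preserves consistency, which follows because consistency is a property refuted by a finite homomorphic image and homomorphisms only shrink the set of realised specialisations.
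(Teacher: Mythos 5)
Your overall strategy coincides with the paper's: extend the counter-model with the intended interpretation of the $A_{\mu,x}$, take $T$ to be the realised types, get the first bullet by gluing matches of $Q_{\tra,x}$ and $Q_{\ntra,x}$ at $x$; and conversely amalgamate the witnessing finite interpretations into a $\SIF$-tree and rule out matches of the $Q_i$ by a bag-by-bag propagation of the $A_{\mu,x}$ memberships (the paper does this cleanly as an induction on the size of $Q_{\mu,x}$, showing any homomorphism $f:Q_{\mu,x}\to\Ii$ forces $f(x)\in A_{\mu,x}^{\,\Ii}$; your ``propagation'' sketch is that induction left implicit, but the idea is the same).

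The genuine problem is in your forward direction, at the second and third bullets of Definition~\ref{def:counter-witness}. The required witnesses are immediate: by the definition of a tree partition every bag is a \emph{finite} subinterpretation, all $\mu$-edges incident to an element lie inside its (unique) $\mu$-bag, so the root bag of $\Ii_Q$ is already a finite consistent model of $\Kk_Q\!\upharpoonright\!\mu_0$ realising only types from $T$, and for each $\tau\in T$ and each $\mu$ the $\mu$-bag of any element of type $\tau$ is a finite consistent model of the TBox of $\Kk_Q\!\upharpoonright\!\mu$ realising $\tau$. Your claim that ``the root $\tra$-or-$\ntra$ sub-structure \dots may be infinite'' therefore rests on a misreading (and is in any case excluded by the finite-width hypothesis), and the fix you propose does not work as stated: Theorem~\ref{thm:folding-bounded} says nothing about turning an infinite consistent model into a finite one -- it only bounds simple paths in transitive roles after folding -- and invoking ``the blocking principle'' here would require a separate argument that the folded interpretation still satisfies $\Kk_Q\!\upharpoonright\!\mu$, still realises only types from $T$, and still matches no inconsistent specialisation. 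Your justification for the last point (``consistency is refuted by a finite homomorphic image and homomorphisms only shrink the set of realised specialisations'') is backwards: the folded interpretation does \emph{not} map homomorphically into the original -- that is exactly why Theorem~\ref{thm:lifting} is needed whenever folding is used -- so consistency preservation would have to be argued via the lifting theorem, not via a homomorphism. None of this machinery is needed once you observe that the bags themselves are the finite witnesses; with that replacement your argument becomes the paper's proof.
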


\begin{proof}
Let $\Ii$ be a $\SIF$-tree counter-model for $Q$; we do not need to
assume that $\Ii$ has finite width. Let $\Ii_Q$ be obtained by
extending $\Ii$ with the unique interpretation of the concept names
$A_{\mu,x}$ and $\bar A_{\mu,x}$ faithful to their intended
meaning: if $Q_{\mu, x}$ can be matched in $\Ii$ with $x$
mapped to $d$, then $d \in \left(A_{\mu,x}\right)^{\Ii_Q}$, and otherwise $d \in
\left(\bar A_{\mu,x}\right)^{\Ii_Q}$.
 By construction, $\Ii_Q$ is consistent, and so is each of its
bags. Let $T$ be the set of types realised in $\Ii_Q$. Because $\Ii
\notmodels Q$, no type from $T$ contains both $A_{\tra,x}$
and $A_{\ntra,x}$, which gives the first condition in
Definition~\ref{def:counter-witness}. The root bag of $\Ii_Q$
witnesses the second condition. As each element of $\Ii_Q$
belongs to a $\tra$ bag and a $\ntra$ bag, each  $\tau\in T$ is
realised in some $\tra$ bag and in some $\ntra$ bag. These bags
witness the third condition.

Conversely, let $T \subseteq \Tp(\Kk_Q)$ be a counter-witness for
$Q$. Let $\Ii_0$ be the interpretation guaranteed by the second
condition, and let $\Ii_{\mu, \tau}$ be interpretations guaranteed by
the third condition. From them we build a $\SIF$-tree counter-model
for $Q$ in a top-down fashion. The root bag is $\Ii_0$. Take an
element $d$ that so far only belongs to a $\mu$-bag. By construction,
the type $\tau$ of $d$ belongs to $T$. Let $\nu\neq \mu$. We add to
the $\SIF$-tree under construction a copy of $\Ii_{\mu, \tau}$, with
one element of type $\tau$ replaced by $d$. Because $\Kk$ is
normalised, the resulting $\SIF$-tree $\Ii$ is a model of $\Kk$. The
tree partition of $\Ii$ has finite width because each bag is a copy of
one of the finitely many finite interpretations $\Ii_0$ and $\Ii_{\mu,
\tau}$.

It remains to see that $\Ii \notmodels Q$. 
We first prove by induction on the size of $Q_{\mu,x}$ that for each
homomorphism $f : Q_{\mu,x} \to \Ii$, it holds that $f(x) \in
A_{\mu,x}^{\,\Ii}$. Let $Z_{x}$ and $Z_{f(x)}$ be the
$\mu$-bags of $x$ and $f(x)$, respectively. By the inductive
assumption, $f(y) \in A_{\nu,y}^{\,\Ii}$ for all $y \in Z_{x}
\setminus \{x\}$ and $\nu\neq \mu$. Because $Z_{f(x)}$ 
matches only consistent specialisations, there is a
consistent specialisation $\widetilde Z_x$ of $Z_{x}$ such that $f$
induces a homomorphism from $\widetilde Z_x$ to
$Z_{f(x)}$. From the consistency of $\widetilde Z_x$ it follows that
$f(x) \in A_{\mu,x}^{\,\Ii}$. Now, if $\Ii \models Q$, then there is a
homomorphism $f: Q_i \to \Ii$ for some $i$. Then, $f(x) \in
A^{\,\Ii}_{\tra,x} \cap A^{\,\Ii} _{\ntra,x}$ for all $x
\in\var(Q_i)$. Because all types realised in $\Ii$ occur in $T$, this
contradicts Definition~\ref{def:counter-witness}.
\end{proof}

\begin{theorem} \label{thm:SIF-withprovisos}
The finite query entailment problem for $\SIF$ is in \twoexp.
\end{theorem}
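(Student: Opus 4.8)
The plan is to combine the characterisation from Lemma~\ref{lem:sif-characterization} with the decision procedure for $\ALCIF$ from Theorem~\ref{thm:ALCIF}, reducing finite query entailment for $\SIF$ to a search over candidate counter-witnesses. First I would observe that by Lemma~\ref{lem:SIF-trees} and Lemma~\ref{lem:sif-characterization}, $\Kk \notfentails Q$ iff there is a counter-witness $T \subseteq \Tp(\Kk_Q)$ for $Q$. So it suffices to decide whether such a $T$ exists. Note that $|\Tp(\Kk_Q)|$ is exponential in $|\Kk_Q|$, which is itself polynomial in $|\Kk|$ plus the number of new concept names $A_{\mu,x}$; since there are $p \leq n\cdot m^m$ queries $Q_i$ each with at most $m$ variables, we add $\Oo(p\cdot m) = \Oo(n \cdot m^{m+1})$ concept names, so $|\Kk_Q| = \Oo(|\Kk| + |Q|\cdot m^m)$ and $|\Tp(\Kk_Q)| = 2^{\Oo(|\Kk| + |Q|\cdot m^m)}$. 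We cannot enumerate all subsets $T$ (that would be triply exponential), so instead I would compute, for each individual type $\tau$, the ``local'' conditions it must satisfy, and then close off under the global reachability requirement by a greatest-fixpoint computation: start with the set $T_0$ of all types $\tau$ satisfying the first bullet of Definition~\ref{def:counter-witness} (syntactic, easy to check), and iteratively remove from the current set $T$ any type $\tau$ for which the third condition fails relative to $T$, i.e.\ for some $\mu$ there is no consistent finite model of the TBox of $\Kk_Q\!\upharpoonright\!\mu$ realising $\tau$ using only types currently in $T$. This stabilises after at most $|\Tp(\Kk_Q)|$ rounds at the largest set $T^*$ closed under the third condition; the counter-witness exists iff $T^*$ additionally satisfies the second condition.

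The key subroutine is therefore: given a set $T$ of types, a type $\tau$, and $\mu$, decide whether the TBox of $\Kk_Q\!\upharpoonright\!\mu$ has a consistent finite model realising $\tau$ and realising only types from $T$; and similarly for the second bullet with the full KB $\Kk_Q\!\upharpoonright\!\mu_0$ (ABox included). Both are instances of finite entailment modulo types for $\ALCIF$. Indeed, $\Kk_Q\!\upharpoonright\!\mu$ contains either only transitive roles or only non-transitive roles. In the $\ntra$ case it is literally an $\ALCIF$ KB, so Theorem~\ref{thm:ALCIF} applies directly with the type set $T$. In the $\tra$ case, transitive roles occur but no functionality declarations constrain them, and — crucially — we only ask for a model, not for the transitive closure to satisfy the axioms; moreover by the preprocessing inherited from Section~\ref{sec:soi} (the $B'$ gadget for universal restrictions over transitive roles) satisfaction of $\Kk_Q\!\upharpoonright\!\mu$ as a plain relational structure coincides with what we need. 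Dropping transitivity declarations turns $\Kk_Q\!\upharpoonright\!\tra$ into an $\ALCIF$ KB (in fact an $\ALCI$ KB), and Theorem~\ref{thm:ALCIF} again applies. To encode ``realises $\tau$'' I would add a single ABox individual asserted to have exactly the concept memberships of $\tau$; to encode the consistency requirement (no match of an inconsistent specialisation of a bag of some $Q_i$) I would add to $Q$ one small CQ per inconsistent specialisation — each such CQ has size $\Oo(m)$ and there are at most $p\cdot 2^{\Oo(m)} = |Q|\cdot 2^{\Oo(m)}$ of them, so the UCQ fed to Theorem~\ref{thm:ALCIF} still has each disjunct of size $\Oo(m)$ and total size $|Q|\cdot 2^{\Oo(m)}$. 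Hence each call runs in time $2^{\Oo(|\Kk_Q| + |Q|\cdot 2^{\Oo(m)}\cdot m^m)} = 2^{\Oo(|\Kk| + |Q|\cdot m^{\Oo(m)})}$, which is doubly exponential in $|\Kk|$ and $|Q|$.

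Putting it together: the greatest-fixpoint loop makes at most $2^{\Oo(|\Kk| + |Q|\cdot m^m)}$ iterations, each iteration inspecting at most that many types and, for each, making $\Oo(1)$ calls to the $\ALCIF$ subroutine; the final check of the second bullet is one more such call. The total running time is the product of a doubly-exponential number of iterations with a doubly-exponential cost per call, which is still doubly exponential in $|\Kk| + |Q|$. This gives the \twoexp\ upper bound, establishing Theorem~\ref{thm:SIF-withprovisos} (modulo the two provisos, whose removal is deferred to the remainder of the section). The main obstacle — and the only genuinely delicate point — is the correctness of reducing the three conditions of Definition~\ref{def:counter-witness} to finite entailment modulo types for $\ALCIF$: one must check carefully that dropping transitivity from $\Kk_Q\!\upharpoonright\!\tra$ loses nothing (this is exactly where the Section~\ref{sec:soi} preprocessing pays off, since satisfaction of the axioms by the bag as a relational structure is all that a $\SIF$-tree bag needs), and that the consistency side-condition is faithfully captured by the auxiliary CQs without blowing up the query size beyond what Theorem~\ref{thm:ALCIF} tolerates. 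Everything else is a routine fixpoint/complexity bookkeeping argument.
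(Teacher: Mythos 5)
Your overall architecture (counter-witness characterisation via Lemmas~\ref{lem:SIF-trees} and \ref{lem:sif-characterization}, greatest-fixpoint type elimination starting from $T_0$, the $\ALCIF$ oracle of Theorem~\ref{thm:ALCIF} for the $\ntra$ side, and the doubly-exponential bookkeeping) matches the paper's proof. But there is a genuine gap in the subroutine for $\mu=\tra$ (and for the second bullet when $\mu_0=\tra$). You claim that dropping the transitivity declarations turns $\Kk_Q\!\upharpoonright\!\tra$ into an $\ALCIF$ KB to which Theorem~\ref{thm:ALCIF} applies, arguing that ``we only ask for a model, not for the transitive closure to satisfy the axioms.'' This is where the argument breaks: the required object is a consistent finite model in which the transitive roles are genuinely transitive, and \emph{consistency} means that no inconsistent specialisation of a $\tra$ bag matches --- and those specialisations contain atoms over transitive roles, so their (non-)matching must be evaluated over the transitive relations. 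The $B'$ gadget only guarantees that if $\Ii\models$ the TBox without transitivity then $\Ii^*$ satisfies the CIs; it says nothing about the query side. Passing from your transitivity-free model $\Ii$ to $\Ii^*$ adds edges and can create new matches of the inconsistent specialisations, and one cannot in general choose $\Ii$ so as to avoid this: that a query may be finitely entailed over transitive models while not entailed once transitivity is dropped is precisely the non-finite-controllability phenomenon this paper is about. So your reduction is sound in one direction only, and the greatest fixpoint you compute may be too large, wrongly reporting a counter-witness.

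The paper closes exactly this hole by treating $\Kk_Q\!\upharpoonright\!\tra$ as a $\SOI$ KB (with no nominals) and invoking the Section~\ref{sec:soi} decision procedure --- the safe-counter-example characterisation and the automaton of Theorem~\ref{thm:SOI-regular}, with the type restriction $T$ folded into the automaton --- rather than Theorem~\ref{thm:ALCIF}; the $\ALCIF$ oracle is used only for the $\ntra$ side, where functionality but no transitivity occurs. A secondary, smaller issue: the theorem as stated concerns arbitrary $\SIF$ KBs and UCQs, and the paper's proof also contains the reduction lifting the two provisos (splitting $\Kk$ into $\Kk_1,\Kk_2$ and $Q$ into guessed sub-UCQs over connected components); you defer this entirely, so even with the $\tra$-side subroutine repaired your argument would only establish the restricted statement.
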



\begin{proof} 
Let $\Kk$ be a \SIF KB using only $\tra$ or only $\ntra$ roles in the
ABox and let $Q$ be a union of connected CQs, each of size at most
$m$. By Lemmas~\ref{lem:SIF-trees}-\ref{lem:sif-characterization},
testing $\Kk \fentails Q$ amounts to deciding if there exists a
counter-witness for $Q$, which can be done using the following variant
of type elimination \cite{Pratt79,abs-1202-0914}. Let $T_0$ be the set
of types from $\Tp(\Kk_Q)$ that contain either $\bar A_{\tra,x}$ or $\bar 
A_{\ntra,x}$ for all $x \in \bigcup_i \var(Q_i)$. For $T\subseteq
T_0$, let $F(T)$ be the set of types $\tau \in T_0$ such that for all
$\mu\in \{\tra,\ntra\}$ there exists a consistent finite model of
the TBox of $\Kk_Q \!\upharpoonright\! \mu$ that realises type $\tau$ and
realises only types from $T$. Then, a set $T$ is a counter-witness if
it is a fixed point of the operator $F$ and satisfies the second
condition of Definition~\ref{def:counter-witness}.  Notice that $F$ is a
monotone operator on subsets of $T_0$. Consequently, $F$ has the
greatest fixed point and it can be obtained by iterating $F$ on
$T_0$: \[T_0 \supseteq F(T_0) \supseteq F^2(T_0) \supseteq \dots
\supseteq F^i (T_0) = F^{i+1} (T_0) \] for some $i\leq |T_0|$. Thus, a
counter-witness for $Q$ exists iff $F^i(T_0)$ satisfies the second
condition of Definition~\ref{def:counter-witness}. It remains to see
how to test this condition and how to compute $F(T)$ for a given
$T$. Both these tasks reduce to finite query entailment modulo types
for simpler logics.

A given $T$ satisfies the second condition of
Definition~\ref{def:counter-witness} iff it is not the case that
$\Kk_Q \! \upharpoonright\! \mu_0 \fentails^T Q'$, where the UCQ $Q'$
is the union of all inconsistent specialisations of the bags of
queries $Q_1, Q_2, \dots, Q_p$ \eqref{eq:queries}.
%
%
The size of $\Kk_Q \! \upharpoonright\! \mu_0$ is bounded by the size
of $\Kk_Q$ which is $|\Kk| + \Oo(mp)$, and $Q'$ is a union of at most
$p\cdot 2^{2m}$ CQs of size $\Oo(m)$.


If $\mu_0=\ntra$, then $\Kk_Q \! \upharpoonright\! \mu_0$ is an $\ALCIF$
KB. By Theorem~\ref{thm:ALCIF}, we can decide if $\Kk_Q \!
\upharpoonright\! \mu_0 \fentails^{T} Q'$ in 
time $2^{\Oo(|\Kk_Q  \upharpoonright \mu_0|+|Q'| \cdot m^m)}$,
which is $2^{\Oo(|\Kk| + mp \cdot  2^{\textrm{poly}(m)})}$.


If $\mu_0=\tra$, then $\Kk_Q \! \upharpoonright\! \mu_0$ is a $\SOI$ KB 
(with no nominals used). Using our previous results on \SOI, we can decide if
$\Kk_Q \! \upharpoonright\! \mu_0 \fentails Q'$ in time $2^{|Q'| \cdot
  |\Kk_Q \upharpoonright \mu_0|^{\Oo(m)}}$, which is 
$2^{mp \cdot (|\Kk| + mp)^{\Oo(m)}}$. We can easily incorporate the set of types $T$
without increasing the complexity: if the ABox contains some type not
in $T$ the algorithm immediately accepts; otherwise, the automaton is
constructed like before, except that the set of all
types is replaced everywhere with $T$. 

To compute $F(T)$ for a given $T$ we need to test for each $\tau \in
T$ and $\mu \in \{\tra, \ntra\}$ whether there is a consistent finite
model of the TBox of $\Kk_Q\! \upharpoonright \! \mu$ that realises
type $\tau$ and realises only types from $T$. For each $\tau$ and
$\mu$ this test can be done just like above, except that in $\Kk_Q\!
\upharpoonright \! \mu$ we replace the ABox with $\{ A(b) \mid A\in
\tau \}$ where $b$ is a fresh individual name. The complexity bounds
for a single test carry over.  To compute the fixed point we need at
most $2^{2mp + |\Kk|}$ iterations of $F$, each requiring at most
$2^{2mp + |\Kk|}$ $\SOI$ tests and at most $2^{2mp + |\Kk|}$ $\ALCIF$
tests. These factors are absorbed by the asymptotic bounds on the cost
of single tests.
Substituting the bound $p\leq |Q|\cdot m^m$ we obtain the bound 
$2^{(|\Kk| + |Q|)^{\mathrm{poly}(m)}}$  for the total running time.

Let us now lift the provisos. Take an arbitrary $\SIF$ KB $\Kk$ and
arbitrary UCQ $Q$. Like for $\SOI$, we can assume that each individual
has its  unary type fully specified in the ABox.
Consider two KBs $\Kk_1$ and $\Kk_2$ obtained from $\Kk$ by removing
from the ABox of $\Kk$ all transitive and all non-transitive
roles, respectively. One can prove (see appendix) that
\mbox{$\Kk \notfentails Q$} iff 
there exist finite interpretations $\Ff_1\models\Kk_1$ and
$\Ff_2\models \Kk_2$ such that for each disjunct $P$ of $Q$, for each
$V \subseteq \var(P)$, for each function $h: V \to \Ind(\Kk)$, for
each partition of the atoms of $P$ into $P_1$ and $P_2$ with
$\var(P_1) \cap \var(P_2) \subseteq V$, for some $i$ it holds that
$\Ff_i \notmodels h(P_i)$, where $h(P_i)$ is a CQ 
with constants obtained from $P_i$ by applying $h$ to variables in
$V$.  
For each $P$, $V$, $h$ and each partition $P_1$, $P_2$ of $P$, guess
whether it is $h(P_1)$ or $h(P_2)$ that will not hold. Let $Q_i$ be
the union of all chosen $h(P_i)$; note that this is a union of
exponentially many CQs of size bounded by the maximal size of $Q$'s
CQs. (The number of possible $Q_i$ is doubly exponential, so eliminating this
nondeterminism adds a doubly exponential factor to the running time.)
It holds that $\Kk \notfentails Q$ iff $\Kk_i\notfentails Q_i$ for all
$i$, and each $\Kk_i$ respects the proviso. 
For the second proviso, consider $R = R_1 \cup \dots \cup R_p$ with
$R_j = R^1_j\land \dots \land R^{q_j}_j$, where $R_j^k$ are connected
CQs over disjoint sets of variables and constants. Then for any KB $\Ll$, $\Ll
\notfentails R$ iff $\Ll \notfentails R_1^{k_1} \cup \dots \cup
R_p^{k_p}$ for some $k_1, \dots, k_p$. The number of sequences $k_1,
\dots, k_p$ to check is singly exponential in $p$. Applying this
construction to $\Kk_i$ and $Q_i$, we arrive at the case where both
provisos are satisfied.  Because $Q_i$ is an exponential union of CQs,
this step introduces a doubly exponential factor to the running time,
but the size bounds for the involved KBs and UCQs are not affected.
After eliminating constants from $Q_i$ in the usual way, we can use
the algorithm described above.  
\end{proof}




\section{Conclusions and Discussion}
We have established  decidability of finite query entailment of \SOI, \SOF and \SIF, and proved that the combined complexity coincides with
that of unrestricted query entailment (\twoexp-complete in all cases). Decidability of  finite query entailment for \SOIF remains open.

Since existing \twoexp-hardness proofs hold for finite query answering for both \ALCI and \ALCO, our upper bound is tight for all logics containing either of these. For ${\cal SF}$ and its fragments,  the best known lower bound is co-\nexptime of  query answering in \Smc~\cite{EiterLOS09}.

One crucial aspect in our techniques is the ability to define a suitable notion of decomposition of counter-models. This appears to be more challenging for logics with role inclusions, and  we conjecture that for fragments of \SOIF extended with role inclusions a different approach is needed. 
A promising direction for future work is to push our techniques to establish tight  bounds for Horn fragments of \SOIF. 


\paragraph*{Acknowledgements.} This work was done in the course of
several meetings in Vienna and Warsaw, made possible by a grant from 
the Austrian Agency for International Cooperation in Education and
Research (OeAD-GmbH) awarded for the project \emph{Logic-based Methods  
in Data Management and Knowledge Representation} within the call
\emph{WTZ Poland 2017-19}. The whole project and this work in particular is
a result of three great brainstorming sessions with Claire David, Magdalena
Ortiz, and Mantas Simkus back in 2016. The first author was
supported by Poland's National Science Centre grant 2016/21/D/ST6/01485. 
Last but not least, we salute the anonymous reviewers of KR 2018 for their hard work.


\newpage

\appendix


\section{Proof of Lemma~\ref{lem:colouring}}

Let $n\geq0$. Because $\Ii\setminus \Nomi(\Kk)$ has bounded degree,
$2n$-neighbourhoods in $\Ii$ have size bounded by some $m$. We colour
the elements of $\Ii$ one by one, with $m$ colours. Pick an uncoloured
element $d$. At most $m-1$ colours are already used in
$N_{2n}^{\Ii}(d)$. Assign to $d$ any colour that is not yet used in
$N_{2n}^{\Ii} (d)$.  This procedure gives an $n$-proper
colouring. Indeed, consider different $e$, $e'$ from $N_n ^{\Ii} (d)$
for some $d\in \Ii$. Without loss of generality we can assume that $e$
was coloured before $e'$. But $e$ belongs to $N_{2n}^{\Ii} (e')$, so
the colours of $e$ and $e'$ are different by construction.  

\section{Proof of Theorem~\ref{thm:SOI-regular}}

To make clique-forests accessible to automata, we encode them as
finitely labelled forests. 
%
Let $\mathsf{TRol}(\Kk)$ be the set of transitive roles from
$\Rol(\Kk)$, and let $[X]^{\leq k}$ be the family of subsets of  $X$
of size at most $k$. In the encoding, nodes are labelled with elements
of the alphabet 
\[ \Sigma = \Tp(\Kk) \cup \left (\mathsf{TRol}(\Kk) \times
    [\Tp(\Kk)]^{\leq |\Kk|}\right)\] and edges are labelled with
elements of the alphabet \[ \Gamma = \Tp(\Kk) \times \Rol(\Kk)\times
  \Tp(\Kk)\,.\]
To produce the encoding of a clique-forest for $\Ii \setminus
\Nomi(\Kk)$. We  order its trees in such a way that the root of the
$i$th tree is the $i$th element of $\Ind(\Kk) \setminus \Nomi(\Kk)$
wrt. some fixed ordering. Then, we label each element node with the
single unary type it realises, and each clique node with its single
nonempty role and the set of unary types it realises. Finally, if in
$\Ii$ there is an $r$-edge from an element of type $\tau$ in some
parent node to an element of type $\sigma$ in some child node, then in
the encoding the edge from the parent node to the child node is
labelled with $(\tau, r, \sigma)$. Because unary types do not repeat
within cliques, this uniquely determines the endpoints. We do not
represent nominals explicitly in the encoding, but thanks to the
initial preprocessing, all relevant information about them is
contained in the unary types of the remaining elements. 


Thus, our automata run over forests built of at most $N = |\Kk|^2$ trees,
with branching bounded by $N$, nodes labelled with elements
of alphabet $\Sigma$
and edges are labelled with elements of the alphabet $\Gamma$.
In such automata, transition relation has the form
\[\delta \subseteq Q \times \Sigma \times (\Gamma \times Q)^{\leq
    N}\,,\] where $Q$ is the set of states.
The automata process the forests top down. The initial states are
specified for each tree separately: the automaton has a set $I
\subseteq Q^{\leq N}$ of sequences of initial states. A run is a
labelling of the input forest with states in such a way 
that the sequence of states in the roots belongs to
$I$, and if a node has state $q$, label $\alpha$, and its children are
connected via edges with labels $\beta_1, \beta_2, \dots, \beta_n$ and have states
$q_1, q_2, \dots, q_n$, then 
\[ (q, \alpha, (\beta_1, q_1), \dots, (\beta_n, q_n)) \in \delta\,.\]
We use \emph{B\"uchi acceptance condition}: we specify a set $F
\subseteq Q$ of marked states that need to be revisited, and consider
a run accepting if on each branch marked states occur infinitely
often. A forest is \emph{accepted} by the automaton if there exists an
accepting run over it.

An automaton has \emph{trivial acceptance condition} if $F=Q$. Then,
each run is accepting but the automaton may still reject some forests,
because there may be no run for them: a branch of the computation can
get stuck if no transition is consistent with the current state, label
and edge labels.
An automaton is \emph{weak} if on each branch of each run, once a
marked state is visited, all subsequent states are marked. Notice that
all automata with trivial acceptance condition are weak.
Given a weak automaton and an arbitrary B\"uchi automaton it is
particularly easy to construct an automaton recognising trees accepted
by both input automata: it suffices to take the standard (synchronous)
product automaton and mark all states that contain a marked states on
both coordinates.

The automaton recognising safe counter-examples for $Q$ is obtained as
a product of automata verifying independently various parts of the
condition.

The first thing to check is the consistency of the encoding: if an
edge has label $(\tau, r, \sigma)$, then $\tau$ must occur in the
label of the parent node, and $\sigma$ must occur in the label of the
child node. To check this, it suffices to examine for each node the
labels of all edges incident to it plus the label of the node
itself. When a transition is made, all these are available except the
label on the edge to the parent: it must be stored in the state. The
automaton has $\Oo(|\Gamma|) = 2^{\Oo(|\Kk|)}$ states and trivial
acceptance condition.

The second thing to check is that the $\SOI$-forest is a model of
$\Kk^*$. Checking that the $\SOI$-forest is a model of the ABox $\Aa$
of $\Kk^*$ amounts to testing if the roots of the trees are labelled with
appropriate types. This can be done easily by an automaton with
$\Oo(|\Kk|)$ states and trivial acceptance condition. To verify
that the TBox is satisfied we need to check each CI. For CIs of the
form \[\bigsqcap_i A_i \sqsubseteq\bigsqcup_j B_j\] we have a
two-state automaton with trivial acceptance condition that simply
tests that each type used in the encoding satisfies this CI; if the type
of some $a\in\Nomi(\Kk)$ specified in $\Aa$ violates this CI,
the automaton rejects everything. 
CIs of the form \[ A \sqsubseteq  \forall  r. B\]  are also easy to
handle. If $\Aa$ contains $A(a)$, $A_{r,b}(a)$, $\bar B(b)$ for some $a$ and $b$,
the automaton rejects everything. Otherwise, it suffices to check that
in the input $\SOI$-forest there is no $r$-edge 
from an element whose unary type contains $A$ to an element whose
unary type contains $\bar B$. This amounts to verifying that none of
the following are used in the encoding:
\begin{itemize}
\item node labels $(r, T)$ such that $A\in \tau \in
  T$ and $\bar B \in \sigma \in T$ for some $\tau$, $\sigma$;
\item edge labels $(\tau, r, \sigma)$
  with $A\in\tau$ and $\bar B \in \sigma$;
\item edge labels $(\sigma, r^-, \tau)$
  with $A\in\tau$ and $\bar B \in \sigma$;
\item unary types containing both $A$ and
  $A_{r,b}$ for some $b$ such that $\bar B(b)\in\Aa$;
\item unary type containing both $\bar B$ and
  $A_{r^-,b}$ for some $b$ such that $A(b)\in\Aa$.
\end{itemize}
These conditions simply disallow certain labels; they can be checked
by a two-state automaton with trivial acceptance condition.

Finally, let us take a CI of the form \[ A \sqsubseteq \exists
r. B\,.\] For ordinary elements this condition can be tested in a
similar way as above, except that one needs access to the label of the
current node and all edges incident to it. Like for the initial
consistency check, it suffices to store in the state the label of the
edge to the parent.  Nominals have to be treated separately, because
they are not explicitly represented in the tree: for each $a$ such
that $A(a) \in \Aa$ and there is no $b$ such that $A_{r,b}(a)\in\Aa$
and $B(b)\in\Aa$, we have a two-state weak automaton looking for a
label that uses a type $\tau$ such that $B\in\tau$ and $A_{r^-,
a}\in\tau$. Note that this automaton has a non-trivial acceptance
condition, but it is weak: as soon as it finds an appropriate label,
it loops in a marked state. Summing up, the total size of the
state-space of the KB component is $2^{\Oo(|\Kk|^2)}$.

The third thing to check is that the query $Q$ is not satisfied. We
begin by replacing query $Q$ with a query $Q'$ such that $\Ii^*
\models Q$ iff $(\Ii \setminus \Nomi(\Kk))^* \models Q$ for each model
$\Ii$ of $\Kk$. The query $Q'$ is obtained in two steps. In the first
step, for each CQ $P$ constituting $Q$, we add to $Q$ each CQ that can
be obtained from $P$ by subdividing some transitive atoms; that is, by
replacing some atoms of the form $r(x,y)$ for some transitive $r$,
with two atoms $r(x,z)$ and $r(z,y)$ for a fresh variable $z$. In the
second step, for each CQ $P$ of the modified $Q$, we add to $Q$ each
CQ that can be obtained from $P$ by performing the following operation
any number of times. Let $\tp(x)$ be the set of all $A$ such that $P$
contains $A(x)$. Choose $x \in\var(P)$ and $a\in \Nomi(\Kk)$ such
that $A(a)\in\Aa$ whenever $A \in \tp(x)$. Drop all atoms of the
form $A(x)$ from $P$. Replace in $P$ each atom of the form $r(y,x)$ by
$A_{r,a}(y)$ and each atom of the form $r(x,y)$ by $A_{r^-,a}(y)$. It
is easy to see that the resulting query $Q'$ has the desired
property. After the first step, the number of CQs grows by the factor
$2^m$ and their size is at most $2m$. After the second step, the
number of CQs grows by the factor $|\Kk|^{2m}$ and their size is still
at most $2m$. Thus, the size of the resulting query is at most
$|Q|\cdot 2^m\cdot |\Kk|^{2m}$, and its CQs have size at most
$2m$. 

Thus, it suffices to construct for each CQ $P$ of $Q'$ an automaton
that tests if $(\Ii \setminus \Nomi(\Kk))^* \models P$ where $\Ii$ is
the $\SOI$-forest represented by the input encoding.  Its states are
composed from an edge label $\beta = (\sigma, r, \tau)$ and a set of
partial functions
\[f:\var(P) \multimap\!\to \{\mathsf{succ}, \mathsf{other}\}\,,\]
representing all partial matchings of $P$ in the interpretation $(\Ii
\setminus \Nomi(\Kk))^*$ restricted to elements represented in the
subtree rooted at the current node. 
The label $\beta=(\sigma, r, \tau)$ is always the label on the edge
from the parent to the current node (if the current node is the root of
the input tree, $\beta$ is arbitrary). Under this assumption there is
a unique element of type $\tau$ in the current node. We refer to
this element as the \emph{current element}. Similarly, in the parent
node there is exactly one element of type $\sigma$; we call it the
\emph{parent element}. In $(\Ii \setminus \Nomi(\Kk))^*$ these two
elements are connected by an $r$-edge.
The identifier $\mathsf{succ}$ stands for any element (represented in
the current subtree) that is an $r$-successor of the parent element in
$ (\Ii \setminus \Nomi(\Kk))^*$. If $r$ is non-transitive, this simply
means the current element. If $r$ is transitive, it means any element
$r$-reachable from the current element. The identifier $\mathsf{other}$
stands for any other element represented in the current subtree.
All states are initial. Transitions are defined only for states that
contain only functions that are not total, and the acceptance
condition is trivial. It is clear that such an automaton is correct
provided that the transition relation ensures the intended semantics
of the states. Let us see how to define it. 

First, we describe when 
\[ \bigg( \big((\sigma,r, \tau), \Phi\big), \tau, \big( (\tau, r_i,
  \tau_i),  ((\tau, r_i, \tau_i), \Phi_i)\big)^n_{i=1} \bigg)\]
is a transition of the automaton. Let $\Psi$ be the set of all
constant partial functions $h : \var(P) \multimap\!\to
\{\tau\}$ such that $\tp(x) \subseteq h(x)$ for all $x \in
\dom(h)$.  
We say that functions $h\in\Psi$ and $f_1 \in\Phi_1, f_2 \in\Phi_2, \dots, f_n\in\Phi_n$ are
\emph{compatible} if they have disjoint domains and for each atom
$s(x, y)$ of $P$
\begin{itemize}
\item if $x\in\dom(h)$, $y\in\dom(f_i)$ then $r_i=s$,
  $f_i(y)=\mathsf{succ}$; 
\item if $y\in\dom(h)$, $x\in\dom(f_i)$ then $r_i=s^-$,
  $f_i(x)=\mathsf{succ}$; 
\item if $x\in\dom(f_i)$, $y\in\dom(f_j)$, $i\neq j$ then $s$ is
  transitive, $r_i=r_j^-=s$ , and $f_i(x)=f_j(y)=\mathsf{succ}$. 
\footnote{
Due to the initial preprocessing of $Q$, this condition is actually
redundant, but we include it to make the correctness more apparent.
}   
\end{itemize}
If $r$ is non-transitive, the condition that each transition of the
form above has to satisfy is that $\Phi$ is the set of all functions
$f$ that can be obtained from any compatible functions $h\in\Psi$ and
$f_1 \in\Phi_1, f_2 \in\Phi_2, \dots, f_n\in\Phi_n$  by setting 
\[f(x)=\begin{cases} 
\mathsf{succ} & \text{if } h(x) = \tau\\
\mathsf{other} & \text{if } f_i(x) = \mathsf{succ} \\
& \text{or } f_i(x) = \mathsf{other}
\end{cases}\]
and if $r$ is transitive, we set 
\[f(x)=\begin{cases} 
\mathsf{succ} & \text{if } h(x) = \tau \\
& \text{or } f_i(x) = \mathsf{succ},  r_i=r\\
\mathsf{other} & \text{if } f_i(x) = \mathsf{other} \\
& \text{or } f_i(x) = \mathsf{succ}, r_i\neq r 
\end{cases}\]
with the convention that whenever we write $g(x)=\gamma$ for a partial
function $g$, we implicitly assume that $x\in\dom(g)$.

For transitions of the form 
\[ \bigg( \big((\sigma, r, \tau), \Phi\big), (r', T),  \big((\sigma_i, r_i,
  \tau_i),  ((\sigma_i, r_i, \tau_i), \Phi_i)\big)^n_{i=1}
  \bigg)\] the condition is similar.
For $\Psi$ be take the set of all partial functions $h : \var(P)
\multimap\!\to T$ such that $\tp(x) \subseteq h(x)$ for all $x \in
\dom(h)$ and the only role atoms in $P$ with both variables in
$\dom(h)$ are $s$ atoms.  
Functions $h\in\Psi$ and $f_1 \in\Phi_1, f_2 \in\Phi_2, \dots,
f_n\in\Phi_n$ are \emph{compatible} if they have disjoint domains and
for each atom $s(x, y)$ of $P$
\begin{itemize}
\item if $x\in\dom(h)$, $y\in\dom(f_i)$ then $r_i=s$,
  $f_i(y)=\mathsf{succ}$, and either $h(x)=\sigma_i$ or $s=r'$ and $s$
  is transitive; 
\item if $y\in\dom(h)$, $x\in\dom(f_i)$ then $r_i=s^-$,
  $f_i(x)=\mathsf{succ}$, and either $h(y)=\sigma_i$ or $s=r'$ and $s$
  is transitive; 
\item if $x\in\dom(f_i)$, $y\in\dom(f_j)$, $i\neq j$ then $s$ is
  transitive, $r_i=r_j^-=s$ , $f_i(x)=f_j(y)=\mathsf{succ}$, and
  either $\sigma_i=\sigma_j$ or $s=r'$. 
\end{itemize}
If $r$ is non-transitive, $\Phi$ is the set of all partial
functions $f$ that can be obtained from any compatible functions $h\in\Psi$ and
$f_1 \in\Phi_1, f_2 \in\Phi_2, \dots, f_n\in\Phi_n$  by setting 
\[f(x) = \begin{cases} 
\mathsf{succ} & \text{if } h(x)=\tau \\
\mathsf{other} & \text{if } h(x)=\tau'\neq\tau \\
&\text{or } f_i(x)= \mathsf{other}
\end{cases}\]
and if $r$ is transitive, we set 
\[f(x)=\begin{cases} 
\mathsf{succ} & \text{if } h(x) = \tau \\
& \text{or } f_i(x) = \mathsf{succ},  r_i=r, \sigma_i=\tau\\
& \text{or } f_i(x) = \mathsf{succ},  r_i=r, r'=r\\
\mathsf{other} & \text{if } f_i(x) = \mathsf{other} \\
& \text{or } f_i(x) = \mathsf{succ}, r_i\neq r \\
& \text{or } f_i(x) = \mathsf{succ}, \sigma_i\neq\tau, r'\neq r\\
\end{cases}\;.\]
To see that this transition relation ensures the intended semantics of
the states one needs to argue that each partial matching is accurately
represented. This can be done by induction on the size of image. For
size one, the matching will be accounted for based solely on the
labels. For larger images, use the inductive hypothesis for
restrictions of the match to variables mapped to the trees rooted at
the children of the current node. 

The total size of the state-space of the query component is $2^{\Oo(3^{2m} \cdot
|Q|\cdot 2^m\cdot |\Kk|^{2m} )} = 2^{|Q|\cdot |\Kk|^{\Oo(m)}}$.

The last component of the automaton checks that the
$\SOI$-forest is safe. Observe that it is 
unsafe if in the input forest there is a branch with consecutive node
and edge labels $\alpha_1 \beta_1 \alpha_2 \beta_2 \dots$ such that
for some transitive $r$ and all $i$ large enough, $\beta_i = (\tau_i, r,
\sigma_i)$ and either $\sigma_i = \tau_{i+1}$ (edges are incident in
the $\SOI$-tree) or $\alpha_{i+i} = (r, T_i)$ (edges are incident with
an $r$-clique). An automaton can easily check that there is no such
branch. Each time it sees a transitive role it moves to an unmarked
state, storing the role. It moves to a marked state as soon as the
condition above is broken. The automaton has $\Oo(|\Kk|)$ states.

The automaton recognising safe counter-examples can be obtained from
these components by the simple product construction described above,
because only the last component is not weak. The resulting product
automaton has $2^{|Q|\cdot|\Kk|^{\Oo(m)}}$ states. An automaton with
$k$ states has total size $\Oo( k \cdot |\Sigma| \cdot (k \cdot
|\Gamma|) ^N + k^N),$ which in our case is $\Oo(k^{|\Kk|^2}\cdot
2^{2\cdot|\Kk|^3+ |\Kk|^2\log|\Kk| + |\Kk|^2 + \log|\Kk|})$. Thus, the
size of the product automaton is also $2^{|Q|\cdot|\Kk|^{\Oo(m)}}$.

\section{Proof of Lemma~\ref{lem:SOI-unrev}}

There exists a natural homomorphism $h: \Ii \to \Mm$, mapping copies
of elements from $\Mm$ to their originals. The homomorphism $h$
induces a homomorphism from $\Ii^*$ to $\Mm$, so $\Ii^*$ cannot
satisfy $Q$. Because $\Ii$ was obtained using a variant of the
standard unravelling procedure, verifying $\Ii \models \Kk^*$ is
routine.
%
Let us see that $\Ii$ is safe. Suppose that $\Ii$ does contain an
infinite simple $r$-path $\pi$ for some transitive role $r$. Because
$\Nomi(\Kk)$ is finite, by skipping a finite prefix we can assume that
$\pi$ never visits $\Nomi(\Kk)$. The image of $\pi$ under the
homomorphism $h$ from the previous paragraph forms an $r$-path
$h(\pi)$ in $\Mm$. Because $\Mm$ is finite, $h(\pi)$ eventually
stabilises in a single strongly connected component $X$ of $r$ in
$\Mm$. By skipping a finite prefix of $\pi$ we can assume that $h(\pi)
\subseteq X$. In the construction of $\Ii$, nominals are only copied
once, so only nominals get mapped to nominals by $h$. 
Consequently, $h(\pi) \subseteq X \setminus \Nomi(\Kk)$.
From the construction of $\Ii$ it further follows that by skipping
another finite prefix we can assume that the first element of $\pi$
belongs to an $r$-clique $X_0$ that contains a representant of each
$C\in\CN(\Kk)$ with a representant in $X \setminus
\Nomi(\Kk)$. Because $X_0$ is finite and $\pi$ is infinite and simple,
$\pi$ eventually leaves $X_0$. Let $d$ be the first element of $\pi$
outside of $X_0$. There exists $C\in\CN(\Kk)$ such that $d\in C^\Ii$
but $C^\Ii\cap X_0 = \emptyset$, for otherwise there would be no
reason to add $d$ to $\Ii$. On the other hand, $h(d) \in C^\Mm
\cap\big( X \setminus \Nomi(\Kk)\big)$, which implies $C^\Ii \cap X_0
\neq \emptyset$ and gives a contradiction. 





\section{Adaptation of the argument for $\SOI$ to $\SOF$}

The argument for $\SOF$ is almost identical to the one for $\SOI$;
differences are few and easy to delimit. All constructions remain the
same, but each time we check that some interpretation
is a model of $\Kk$, we need to verify the functionality
declarations. These are generally ensured by the absence of inverses
in CIs. We list all necessary modifications below.

\begin{enumerate}
\item $\SOF$-forests are defined exactly like $\SOI$-forests. Because
  $\Rol(\Kk)$ contains no inverses, all edges between instances $\Ii_v$ point
  down the tree.  
\item In the construction of the automaton from
  Theorem~\ref{thm:SOI-regular} we include an additional
  component for each functionality declaration $\mn{Fn}(r)$. To check
  functionality of $r$ for ordinary nodes it suffices to examine the
  label of the node and the labels on all incident edges, which only
  requires storing in the state the label of the edge to the
  parent. Additionally, for all $a\in\Nomi(\Kk)$, if the ABox contains $A_{r,b}(a)$ and
  $A_{r,b'}(a)$  for some $b\neq b'$, the automaton trivially rejects
  everything; if the ABox contains $A_{r,b}(a)$ for only one $b$, the automaton
  checks that no type used in the input forest contains $A_{r^-,a}$; if
  the ABox contains no $A_{r,b}(a)$, the automaton checks that a type
  with $A_{r^-,a}$ occurs at most once in the input forest. The total
  number of states in the described component is
  $2^{\Oo(|\Kk|^2)}$, so including it does not affect the overall
  upper bound.

\item Checking that the unravelling procedure produces a model of
  $\Kk$ (Lemma~\ref{lem:SOI-unrev}) requires verifying the
  functionality declarations. This is routine.
 \item After $\Ff_n$ has been constructed from a $\SOF$-forest using the
   coloured blocking principle, we need to check that it satisfies all
   functionality declarations of $\Kk$. This follows immediately from
   the fact that each redirected edge is a forward edge. 
\end{enumerate}


\section{Proof of Theorem~\ref{thm:ALCIF}}

Each $\ALCIF$ KB can be expressed in the guarded fragment with two
variables and counting ($\mathcal{GC}^2$). Hence, the the following result is
relevant for us.

\begin{theorem}{[\cite{Pratt-Hartmann_datacomplexity}, Theorem~4]}
\label{gc2_qa} 
For any $\mathcal{GC}^2$-sentence $\phi$ and any positive conjunctive query
$\psi$ both finite and infinite query entailment are in
\textsc{co-NP} in terms of data complexity.  
\end{theorem}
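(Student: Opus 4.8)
The plan is to reduce finite entailment modulo types for $\ALCIF$ to finite query (non-)entailment in the guarded two-variable fragment with counting, $\mathcal{GC}^2$, and then to obtain the stated running time by inspecting the proof of Pratt-Hartmann's result behind Theorem~\ref{gc2_qa} rather than invoking it as a black box: the black-box statement yields only data complexity, whereas here we need the combined bound and, in addition, the ability to restrict to a prescribed set of $1$-types. Since $\ALCIF$ lacks the finite model property, the \emph{finite} version of the procedure is essential.

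First I would carry out the translation. Every normalised $\ALCIF$ CI becomes a guarded universally quantified implication: $\bigsqcap A_i\sqsubseteq\bigsqcup B_j$ is $\forall x\,\big(\bigwedge A_i(x)\to\bigvee B_j(x)\big)$, while $A\sqsubseteq\exists r.B$ and $A\sqsubseteq\forall r.B$ use the binary atom $r(x,y)$, or $r(y,x)$ for an inverse role (freely available since both orderings of the two variables may be used), as a guard; a functionality declaration $\mn{Fn}(r)$ becomes the counting constraint $\forall x\,\exists^{\le 1}y\,r(x,y)$; the ABox becomes a conjunction of ground atoms over constants naming $\Ind(\Kk)$. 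This produces a $\mathcal{GC}^2$ sentence $\phi_{\Kk}$ with $|\phi_{\Kk}| = \Oo(|\Kk|)$ (there are no nominals in $\ALCIF$), and a finite interpretation models $\Kk$ iff it is a finite model of $\phi_{\Kk}$. The UCQ $Q$ is already a positive existential query over the same signature. Hence $\Kk\fentails^T Q$ fails iff $\phi_{\Kk}$ has a finite model that does not satisfy $Q$ and realises only $1$-types from $T$.

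Next I would revisit Pratt-Hartmann's decision procedure for finite counter-models of a $\mathcal{GC}^2$ sentence with respect to a positive query. That procedure reduces the existence of such a model to solvability of a system of linear arithmetic constraints over nonnegative integer variables indexed by the realisable local configurations ($1$-types together with the permitted adjacencies and counting data); the query enters only through a bounded-radius condition forbidding certain local patterns. Two observations then give the theorem. First, the restriction to $T$ costs nothing: we simply disallow every $1$-type outside $T$ from being realised, which only shrinks the index sets, so the procedure runs unchanged with $\Tp(\Kk)$ replaced by $T$ throughout. Second, tracking the combined cost, the number of $1$-types over $\CN(\Kk)$ is $2^{\Oo(|\Kk|)}$, and all purely $\mathcal{GC}^2$-side parameters (number of configurations, size of the arithmetic system, and the bit-length of a minimal solution, bounded via the standard small-solution argument for the associated integer program) are $2^{\Oo(|\Kk|)}$; the query contributes through the homomorphic images of the CQs of $Q$ into the tracked neighbourhoods, and since each CQ has at most $m$ variables it has at most $m^m$ such images (exactly the counting used for the queries $Q_i$ in the $\SIF$ section), so the query-dependent part of the state space has size $2^{\Oo(|Q|\cdot m^m)}$. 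Multiplying the two and taking a logarithm gives a total running time of $2^{\Oo(|\Kk| + |Q|\cdot m^m)}$.

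The main obstacle I expect is bookkeeping: Pratt-Hartmann's construction is presented with the sentence and the query fixed and tuned for data complexity, so the dependence of every parameter on $|\phi_{\Kk}|$, $|Q|$ and the query radius has to be made explicit. The two delicate points are (a) verifying that the query interacts only with neighbourhoods small enough that ``at most $m^m$ homomorphic images per CQ'' is the correct measure of the query-induced blow-up, and (b) re-checking that, after restricting the realised $1$-types to $T$ and imposing the query conditions, the integer program still admits a solution with polynomially many bits per coordinate, so that its feasibility can be decided within the claimed bound; this is again the usual small-solution argument, but it must be redone in the restricted setting. Everything else is a routine translation.
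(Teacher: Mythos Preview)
Your proposal does not address the stated theorem. Theorem~\ref{gc2_qa} is a \emph{citation} of an external result of Pratt-Hartmann; the paper does not prove it and contains no argument to compare against. What you have written is a proof sketch for Theorem~\ref{thm:ALCIF} (finite entailment modulo types for $\ALCIF$), which \emph{uses} Theorem~\ref{gc2_qa} as an ingredient---indeed you cite it yourself in your first paragraph.

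Viewed as an attempt at Theorem~\ref{thm:ALCIF}, your approach is essentially the paper's: translate $\ALCIF$ into $\mathcal{GC}^2$, open up Pratt-Hartmann's proofs rather than invoking them as black boxes, absorb the type restriction $T$ by simply deleting the forbidden $1$-types from the index set of the integer program, and track the combined-complexity parameters to obtain the $2^{\Oo(|\Kk|+|Q|\cdot m^m)}$ bound. The one place where the paper is more explicit than you is the query side: the paper names the relevant step as \emph{treeification}---if $\phi$ entails $\psi$ then it already entails some treeification of $\psi$, which can be rewritten as a $\mathcal{GC}^2$ formula---and counts at most $m^m$ treeifications per CQ, hence $n\cdot m^m$ in total. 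Your ``homomorphic images into tracked neighbourhoods'' is the same idea, but the paper's formulation makes clear that after treeification the problem becomes pure $\mathcal{GC}^2$ finite satisfiability (Theorem~\ref{gc2_sat}), and the integer-programming bound is then read off from \emph{that} proof, cleanly separating the two borrowed arguments.
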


Because we are interested in combined complexity and UCQs,
we have to inspect the proof rather than just using the theorem as
a black box. 

The first step of the proof is to show that if $\phi$
(together with some ground atoms) entails $\psi$ then $\phi$ entails a
treeification of $\psi$, which can be rewritten as a $\mathcal{GC}^2$ formula
$\psi_{\mathcal{GC}^2}$. It is easy to see that the same argument applies to
UCQs. For a single CQ $\psi$ there are at most $|\psi|^{|\psi|}$
possible treeifications, therefore for  our UCQ $Q$ we have at most
$n\cdot m^m$ possible treeifications. 

The next step is to use finite query answering for $\mathcal{GC}^2$.

\begin{theorem}{[\cite{Pratt-Hartmann0_gc2_sat}, Theorem~1]
\label{gc2_sat}}
  Finite satisfiability for $\mathcal{GC}^2$ is in EXPTIME.
\end{theorem}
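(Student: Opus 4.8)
The plan is to follow the classical \emph{types, stars, and integer feasibility} route for two-variable logics, using guardedness to keep all obligations local and thereby stay within a single exponential.

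\textbf{Step 1 (normalisation).} Since every atom of a two-variable formula mentions at most two distinct variables, I would first replace higher-arity relation symbols by fresh unary and binary symbols recording the relevant patterns, so that only unary and binary predicates remain. Then I would put the input $\mathcal{GC}^2$-sentence $\phi$ into a Scott-style guarded normal form at the cost of polynomially many fresh predicates: a conjunction of (i) a Boolean condition on $1$-types, (ii) finitely many \emph{guarded universal} conjuncts $\forall x\forall y\,(g(x,y)\to\chi(x,y))$, and (iii) finitely many \emph{guarded counting} conjuncts $\forall x\,\exists^{\bowtie N}y\,(g(x,y)\wedge\chi(x,y))$, with $g$ a binary ``channel'' symbol, $\chi$ quantifier-free, $\bowtie\in\{\le,\ge\}$, and $N$ a binary numeral. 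This step is polynomial and preserves finite (un)satisfiability, so from now on $\phi$ is in this form.

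\textbf{Step 2 (abstraction to stars).} Introduce $1$-types (maximal consistent sets of unary literals --- exponentially many) and \emph{edge-types} (maximal consistent descriptions of an ordered active pair: the $1$-types of its two ends together with the binary literals holding of the ordered pair). The crucial consequence of guardedness is that only the \emph{active} (guarded) pairs matter, so the universal conjuncts constrain only edge-types actually present, and the entire obligation of an element is captured by its \emph{star}: its $1$-type plus the multiset of edge-types of its incident active pairs. A finite structure, as far as $\phi$ can see, is thus equivalent to a finite family of stars that fit together edge by edge.

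\textbf{Step 3 (reduction to finite feasibility and complexity).} I would show that $\phi$ has a finite model iff there is a \emph{certificate}: a nonempty set $\Pi$ of $1$-types, an allowed set of edge-types for each ordered pair from $\Pi$, and for each $\pi\in\Pi$ a \emph{star specification} (a choice of how many incident pairs of each allowed edge-type it has) such that (a) the Boolean and guarded-universal conjuncts are respected, (b) every star specification meets each guarded-counting conjunct's $\bowtie N$ requirement, and (c) the stars are \emph{globally assemblable into a finite structure}: there exist positive multiplicities $m_\pi$ (the intended number of $\pi$-elements) and, for each ordered pair $(\pi,\pi')$ and each edge-type $\theta$ between them, a number of $\theta$-edges, satisfying the natural transportation/handshake conditions (the $\theta$-incidences leaving the $\pi$-class equal the $\theta^{-1}$-incidences entering the $\pi'$-class, with the usual parity caveat when $\pi=\pi'$ and $\theta=\theta^{-1}$) and distributing so that each $\pi$-element realises its star specification. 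Soundness of this reduction is immediate; for completeness one blows up the $m_\pi$ to make the required divisibilities hold, then explicitly wires a finite labelled multigraph realising all stars, closing any ``forced-forward'' chains of $\ge$-witnesses into cycles. For the bound: there are $2^{O(|\phi|)}$ $1$-types and edge-types and every threshold $N$ is at most $2^{O(|\phi|)}$, so a certificate has exponential size; rather than guessing it, run a type-elimination loop (discard any $1$-type whose counting obligations cannot be met using only surviving types, iterate to a fixpoint) and then test the global feasibility in (c) --- a transportation-style integer system with exponentially many variables and exponentially bounded coefficients, hence solvable in time polynomial in its (exponential) description. This yields a deterministic procedure in EXPTIME.

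\textbf{Main obstacle.} The heart of the argument, and the one place routine bookkeeping does not suffice, is getting condition (c) exactly right: unlike in the possibly-infinite case, a locally consistent collection of stars need not assemble into a \emph{finite} structure, because $\ge$-obligations can force outgoing chains that in an infinite model simply run off to infinity but finitely must be folded back, while ``at-most''/functional edge-types restrict how this folding can be done. Formulating a finite-assembly criterion that is simultaneously sound (only finitely realisable certificates pass), complete (every finitely satisfiable $\phi$ has one), and checkable within the exponential budget --- essentially an integer-programming characterisation of when prescribed stars are realised by some finite multigraph, with the correct treatment of forced chains and of at-most constraints --- is where Pratt-Hartmann's careful analysis lives, and is the step I would expect to spend the most effort on.
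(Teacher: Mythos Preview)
The paper does not prove this statement: it is quoted verbatim as Theorem~1 of \cite{Pratt-Hartmann0_gc2_sat} and used as a black box, with only a two-sentence gloss on the underlying technique (``inequality systems'' over ``star types'', polynomial in the normalised formula and exponential in the signature). Your sketch is therefore not competing with any argument in the present paper.

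That said, your outline is faithful to Pratt-Hartmann's actual proof, which is precisely the ``types, stars, and integer feasibility'' route you describe: Scott-style guarded normalisation, abstraction of a model to a multiset of star types, and reduction of finite realisability to the solvability of an exponentially-sized system of linear (in)equalities over the integers. Your identification of the main obstacle --- showing that locally consistent stars can be assembled into a genuinely \emph{finite} structure, handling the interaction of $\geq$-witnesses with functionality/at-most constraints via an integer-programming criterion --- is exactly where the work lies in the cited paper. The one point worth tightening is Step~3: Pratt-Hartmann does not run a separate type-elimination loop followed by a feasibility test; rather, the inequality system itself encodes both local satisfiability of stars and the global handshake constraints simultaneously, and its feasibility is decided in one shot (in time polynomial in its exponential size). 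Your two-phase description is not wrong, but it slightly obscures that a single well-crafted integer program suffices.
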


Once again, to obtain the precise bounds we need a bit more than the
stated theorem provides. The proof of Theorem~\ref{gc2_sat} uses the
well-known technique of inequality systems, developed by
Pratt-Hartmann. The provided algorithm is polynomial in the size of
the formula and exponential in the signature under the assumption that
the formula is in the normal form. The normalisation of an arbitrary
formula $\phi$ increases the size of the signature by $\Oo(|\phi|)$,
and that we can afford.

In the inequality system from the proof of Theorem~\ref{gc2_sat}, each
variable represents a star type realised in a hypothetical
counter-model. The star type of an element is a refinement of its
unary type, so we can for free incorporate into the proof the
restriction on allowed unary types: we simply remove variables whose
associated unary type is not in $T$. This procedure does not
complicate the inequality system in any measure.
 
\section{Proof of the claim in the proof of Theorem~\ref{thm:SIF-withprovisos}}

The claim can be equivalently formulated as follows: \mbox{$\Kk \notfentails Q$} iff
there exist finite interpretations $\Ff_1\models\Kk_1$ and
$\Ff_2\models \Kk_2$ such that for each disjunct $P$ of $Q$, for each
$V \subseteq \var(P)$, for each function $h: V \to \Ind(\Kk)$, for
each partition of the atoms of $P$ into $P_1$ and $P_2$ with
$\var(P_1) \cap \var(P_2) \subseteq V$, one cannot simultaneously
extended $h$ to homomorphisms $h_i : P_i \to \Ff_i$ for all
$i$. 

Suppose first that there is a finite
interpretation $\Ff$ such that $\Ff\models \Kk$ and $\Ff \notmodels
Q$. We construct interpretations $\Ff_1$ and $\Ff_2$, as specified in
the claim, by unravelling $\Ff$ like in the proof of
Lemma~\ref{lem:SIF-trees}, but only to a finite depth.  For $\Ff_1$ we
start from $\Ff_\tra$, and for each element $d$ in $\Ff_\tra$ we add a
copy of $\Ff$ with all elements fresh except $d$. For $\Ff_2$ we start
from $\Ff_\ntra$, for each element $d$ in $\Ff_\ntra$ add a copy of
$\Ff_\tra$ with all elements fresh except $d$, and then for each
element $e$ that only belongs to a copy of $\Ff_\tra$, add a copy of
$\Ff$ with all elements fresh except $e$. In both cases, we close the
interpretations of transitive roles under transitivity. Because copies
of $\Ff$ share elements only with copies of $\Ff_\tra$, functionality
requirements do not get violated in the unravelling process. It
follows that $\Ff_i\models \Kk_i$. Consider a disjunct $P$ of $Q$, a
set $V \subseteq \var(P)$, a function $h: V \to \Ind(\Kk)$, and a
partition of $P$ into $P_1$ and $P_2$ such that $\var(P_1) \cap
\var(P_2) \subseteq V$. Suppose that $h$ can be extended to a
homomorphism $h_i : P_i \to \Ff_i$ for all $i$. By the construction of
$\Ff_i$, there exists a homomorphism $f_i: \Ff_i \to
\Ff$. Consequently, we obtain a match for $P$ in $\Ff$ by taking
$f_1\circ h_1 \cup f_2 \circ h_2$, which is a contradiction. Thus,
$\Ff_1$ and $\Ff_2$ are as we wanted.

Conversely, assume that we have finite interpretations $\Ff_1$ and
$\Ff_2$ as in the claim. We first unravel them like above.  To obtain
$\Ff'_1$ we start from $(\Ff_1)_\tra$ and for each $d$ in
$(\Ff_1)_\tra$ we add a copy of $\Ff_1$ with all elements fresh except
$d$. For $\Ff'_2$ we start from $(\Ff_2)_\ntra$, for each $d$ in
$(\Ff_2)_\ntra$ we add a copy of $(\Ff_2)_\tra$ with all elements
fresh except $d$, and then for each $e$ that belongs only to a copy of
$(\Ff_2)_\tra$, add a copy of $\Ff_2$ with all elements fresh except
$e$. Again, close the interpretations of transitive roles under
transitivity. By construction, $\Ff'_1$ and $\Ff'_2$ also satisfy the
condition in the claim. To construct $\Ff$, first delete all subtrees
of the tree partitions of $\Ff'_1$ and $\Ff'_2$ rooted in second-level
nodes that contain an element of $\Ind(\Kk)$, and then take the union
of the two resulting interpretations. This is consistent because all
$a\in \Ind(\Kk)$ have their types fully specified. An argument similar
to the one above shows that $\Ff \models \Kk$ and $\Ff\notmodels Q$.

\end{document}